\theoremstyle{plain}
\newtheorem{theorem}{Theorem}[section]
\newtheorem{property}[theorem]{Property}
\newtheorem{lemma}[theorem]{Lemma}
\theoremstyle{definition}
\newtheorem{definition}[theorem]{Definition}
\newtheorem{assumption}[theorem]{Assumption}
\crefname{theorem}{theorem}{theorems}
\crefname{lemma}{lemma}{lemmas}
\crefname{definition}{definition}{definitions}
\crefname{assumption}{assumption}{assumptions}
\crefname{corollary}{cororally}{corollaries}
\crefname{property}{property}{properties} 
\def\diag{\mathop{\rm diag }\nolimits}
\title{Learning from Summarized Data: \\ Gaussian Process Regression with Sample Quasi-Likelihood}
\author{
    Yuta Shikuri
}
\begin{document}

\maketitle

\begin{abstract}    
Gaussian process regression is a powerful Bayesian nonlinear regression method. 
Recent research has enabled the capture of many types of observations using non-Gaussian likelihoods.  
To deal with various tasks in spatial modeling, we benefit from this development. 
Difficulties still arise when we can only access summarized data consisting of representative features, summary statistics, and data point counts. 
Such situations frequently occur primarily due to concerns about confidentiality and management costs associated with spatial data. 
This study tackles learning and inference using only summarized data within the framework of Gaussian process regression. 
To address this challenge, we analyze the approximation errors in the marginal likelihood and posterior distribution that arise from utilizing representative features. 
We also introduce the concept of sample quasi-likelihood, which facilitates learning and inference using only summarized data. 
Non-Gaussian likelihoods satisfying certain assumptions can be captured by specifying a variance function that characterizes a sample quasi-likelihood function. 
Theoretical and experimental results demonstrate that the approximation performance is influenced by the granularity of summarized data relative to the length scale of covariance functions. 
Experiments on a real-world dataset highlight the practicality of our method for spatial modeling. 
\end{abstract}

\begin{figure*}[t]
    \begin{center}
      \includegraphics[width=160mm]{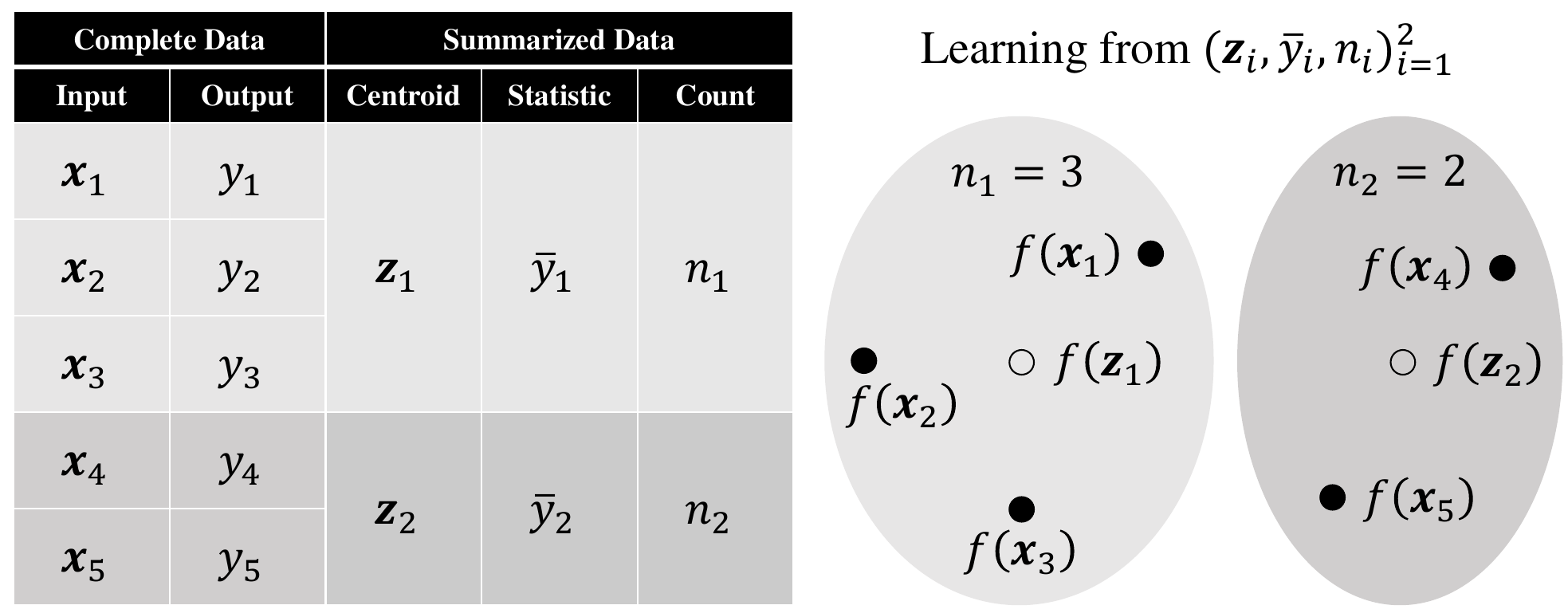}
    \end{center}
\caption{
    Overview of our approach. 
    We address Gaussian process regression using only summarized data, whihc includes representative features, summary statistics, and data point counts. 
    Our approach involves the input approximation and the introduction of sample quasi-likelihood under specific assumptions. 
    Specifying a variance function that characterizes a sample quasi-likelihood allows us to capture non-Gaussian likelihoods. 
}
\label{overview}
\end{figure*}

\section{Introduction}
Gaussian process regression is a Bayesian nonlinear regression method that can handle many types of observation data \citep{Carl}. 
A key application of Gaussian process regression is spatial modeling \citep{Noel} across various fields, such as geology, agriculture, marketing, and public health.
For example, it is used to estimate the distribution of soil moisture and regional medical expenses. 
Specifying likelihood functions according to these tasks impacts prediction performance. 
While a conventional choice is a Gaussian likelihood that is conjugate with a Gaussian process prior, recent research has focused extensively on applying Gaussian process regression to non-Gaussian likelihoods. 
The challenge of specifying non-Gaussian likelihoods lies in posterior distributions that lack closed-form expressions. 
Gaussian approximations of the posterior distribution are commonly used to overcome this challenge. 
Laplace approximation represents the posterior distribution as a Gaussian centered at the maximum a posteriori estimate \citep{Christopher}. 
Expectation propagation finds the parameters in a Gaussian that approximate the posterior distribution through an iterative scheme of matching moments \citep{Minka}. 
Variational inference maximizes a lower bound on the marginal likelihood via a Gaussian approximation \citep{Roni}.   
These methods for specifying non-Gaussian likelihoods allow us to handle various types of observations. 

Difficulties have been encountered when only summarized data is available. 
This situation arises particularly in the context of spatial modeling. 
Due to privacy concerns associated with location-specific observation information, spatial data is often summarized to include representative locations, summary statistics, and counts. 
Too fine locations might identify individuals even if direct identifiers are removed. 
Hence, the granularity of summarized data tends to be intentionally coarse to ensure that personal information (e.g., financial assets, interests, purchase activity, and medical history) is not exposed. 
In addition to protecting individual privacy, management costs necessitate aggregating data by specific units (e.g., cities, branches, hospitals, and schools). 
To analyze regional trends, the latitude and longitude pair for each unit is linked to the corresponding summary statistics. 
Consequently, techniques are required for spatial modeling given only summarized data. 

In this study, we address learning and inference from summarized data in Gaussian process regression. 
\Cref{overview} illustrates the overview of our approach. 
For input approximation using representative features, we demonstrate the theoretical errors in the marginal likelihood and posterior distribution within the Gaussian process regression framework. 
We also propose the concept of sample quasi-likelihood. 
This approach enables the computation of the marginal likelihood and posterior distribution using only summarized data, offering straightforward implementation and low computational complexity. 
The sample quasi-likelihood is defined by a variance function. 
Specifying this variance function allows for capturing non-Gaussian likelihoods that satisfy certain conditions. 
Theoretical and experimental results indicate that the approximation performance strongly depends on the granularity of summarized data relative to the length scale of covariance functions. 
Experiments using real-world spatial data demonstrate the usability of our approach for supervised learning tasks when only summarized data is available.

\section{Related Work} 
There are existing methods capable of handling aggregated outputs. 
Composite likelihood is derived by multiplying a collection of component likelihoods \citep{Besag, Cristiano}. 
This approach is advantageous for dealing with aggregated outputs while retaining some properties of the full likelihood. 
Additionally, synthetic likelihood \citep{Simon, Price} and approximate Bayesian computation \citep{Beaumont} are also promising. 
These methods approximate the posterior distribution by comparing summary statistics with simulated data generated from a tractable probability model. 
Outputs sometimes take distributional forms, such as random lists, intervals, and histograms. 
\citeauthor{Boris} \citeyear{Boris} constructed likelihood functions for these forms. 
Considering that a Gaussian process has parameters corresponding to its inputs, handling outputs not directly linked to inputs becomes crucial. 
Some studies \citep{Chung, Yusuke} have provided methods for learning and inference from aggregated outputs in Gaussian process regression, assuming that input data points or their distribution are available.

\section{Preliminaries} 

\subsection{Notations} 
\Cref{Description} provides the list of symbols. 
Let $\mathcal{X} \subset \mathbb{R}^d$ denote the domain of inputs, and $\mathcal{Y}$ denote the domain of outputs. 
The symbol of $\bm{K_\mathrm{\cdot \cdot}}$ is a Gram matrix. 
The $(i, j)$-th element of the Gram matrix for $(\bm{A}, \bm{B})$ is the return value of a covariance function when the inputs are the $i$-th row of $\bm{A}$ and the $j$-th row of $\bm{B}$. 
The matrix with the opposite order of indices is the transposed matrix (e.g., $\bm{K_\mathrm{*f}} = \bm{K_\mathrm{f*}}^\top$).  
Let $\mathcal{N}$ denote the probability density function of Gaussian distribution, 
the vertical line with equality denote substitution (e.g., $p(\bm{y} \mid \bm{f})|_{\bm{f} = \bm{W_\mathrm{fu}} \bm{u}}$), 
$O$ denote the Landau symbol, 
and $F$ denote the upper cumulative distribution function of the chi-square distribution given by 
\begin{align} 
  F (a, b) \equiv \Bigl(\int_{0}^{\infty} t^{\frac{a}{2} - 1} \exp(- t) dt \Bigr)^{-1} \int_{\frac{b}{2}}^{\infty} t^{\frac{a}{2} - 1} \exp(- t) dt \nonumber 
\end{align}   
for any $a \in \mathbb{N}$ and $b \in [0, \infty)$.  
The Landau symbol in matrix operations is applied to each matrix element. 
We assume 
that symmetric Gram matrices are positive-definite, 
that likelihood functions and their gradients are differentiable and bounded, 
and that covariance functions are symmetric and continuous.

\begin{table*}[t]
  \centering
  \begin{threeparttable}[]
  \begin{tabular}{cccc} 
    \toprule
    Function Name & $k(\bm{x}, \bm{x^\prime})$  &  $\zeta_1 (\alpha, \bm{z}, \bm{z^\prime})$ &  $\zeta_2 (\alpha, \bm{z}, \bm{x^\mathrm{*}})$ \\ 	
    \midrule 
    Laplacian   
    & $\exp (- |\bm{x} - \bm{x^\prime}|)$ 
    & $1 - \exp (- 2 \alpha)$ 
    & $1 - \exp (- \alpha)$ \\ 
    Gaussian  
    & $\exp (- \frac{1}{2} |\bm{x} - \bm{x^\prime}|^2)$ 
    & $1 - \exp \bigl(- 2 \alpha (|\bm{z} - \bm{z^\prime}| + \alpha) \bigr)$ 
    & $1 - \exp \bigl(- \alpha (|\bm{z} - \bm{x^\mathrm{*}}| + \frac{3}{2} \alpha) \bigr)$   \\ 
    \bottomrule 
  \end{tabular}
  \end{threeparttable} 
  \caption{
  Errors of covariance functions. 
  Given $\bm{z}, \bm{z^\prime}, \bm{x^\mathrm{*}} \in \mathcal{X}$,  
  for any $\alpha \in (0, \infty)$ and $\bm{x}, \bm{x^\prime} \in \mathcal{X}$, 
  $\max\{|\bm{x} - \bm{z}|, |\bm{x^\prime} - \bm{z^\prime}|\} < \alpha \Rightarrow |k(\bm{x}, \bm{x^\prime}) - k(\bm{z}, \bm{z^\prime})| < \zeta_1 (\alpha, \bm{z}, \bm{z^\prime})$ and $|k(\bm{x}, \bm{x}^\mathrm{*}) - k(\bm{z}, \bm{x^\mathrm{*}})| < \zeta_2 (\alpha, \bm{z}, \bm{x^\mathrm{*}})$. 
  See \cref{example_covariance}. 
  }
  \label{example_kernel}
\end{table*}

\subsection{Gaussian Process Regression} 
In this subsection, we describe the basic framework of Gaussian process regression. 
A more detailed introduction is presented in \citep{Carl}. 

A Gaussian process $f \sim \mathcal{GP}(\tau(\cdot), k(\cdot, \cdot))$ is a distribution over functions characterized by a mean function $\tau : \mathcal{X} \rightarrow \mathbb{R}$ and a covariance function $k : \mathcal{X} \times \mathcal{X} \rightarrow (0, \infty)$. 
A stochastic process $\{f(\bm{x}) \mid \bm{x} \in \mathcal{X}\}$ is a Gaussian process if and only if the random variables $\{f(\bm{x}) \mid \bm{x} \in \mathcal{X}^\prime\}$ for any finite set $\mathcal{X}^\prime \subseteq \mathcal{X}$ follow a multivariate normal distribution. 
For simplicity, we take $\tau(\cdot) = 0$. 
Given inputs $\bm{X} \equiv (\bm{x}_i)_{i=1}^n$ with $\bm{x}_i \in \mathcal{X}$ and outputs $\bm{y} \equiv (y_i)_{i=1}^n$ with $y_i \in \mathcal{Y}$, 
the hyperparameters of the covariance function are learned to maximize the log marginal likelihood defined as 
\begin{align} 
    \label{LML}
    \mathcal{L} \equiv \log \int_{\mathbb{R}^n} p(\bm{y} \mid \bm{f}) p(\bm{f}) d \bm{f},   
\end{align} 
where $\bm{f} \equiv (f(\bm{x}_i))_{i=1}^n$, $p(\bm{f}) \equiv \mathcal{N} (\bm{f}; \bm{0}, \bm{K_\mathrm{ff}})$, $p(\bm{y} \mid \bm{f})$ is a likelihood function of $\bm{f}$, and $\bm{K_\mathrm{ff}}$ is the Gram matrix of $(\bm{X}, \bm{X})$. 
The complexity of optimizing $\mathcal{L}$ given a Gaussian likelihood is $O(n^3)$. 
For new inputs $\bm{X_\mathrm{*}} \equiv (\bm{x}_i^\mathrm{*})_{i=1}^{n_\mathrm{*}}$ with $\bm{x}_i^\mathrm{*} \in \mathcal{X}$, the posterior distribution of $\bm{f_\mathrm{*}} \equiv (f(\bm{x}_i^\mathrm{*}))_{i=1}^{n_\mathrm{*}}$ is 
\begin{align} 
    \label{posterior}
    p(\bm{f_\mathrm{*}} \mid \bm{y}) \equiv \exp(- \mathcal{L}) \int_{\mathbb{R}^n} p(\bm{f_\mathrm{*}} \mid \bm{f}) p(\bm{y} \mid \bm{f}) p(\bm{f}) d \bm{f}, 
\end{align} 
where 
$\bm{K_\mathrm{f*}}$ is the Gram matrix of $(\bm{X}, \bm{X_\mathrm{*}})$, 
$\bm{K_\mathrm{**}}$ is that of $(\bm{X_\mathrm{*}}, \bm{X_\mathrm{*}})$,  
and $p(\bm{f_\mathrm{*}} \mid \bm{f}) \equiv \mathcal{N} (\bm{f_\mathrm{*}}; \bm{K_\mathrm{*f}} \bm{K_\mathrm{ff}}^{-1} \bm{f}, \bm{K_\mathrm{**}} - \bm{K_\mathrm{*f}} \bm{K_\mathrm{ff}}^{-1} \bm{K_\mathrm{f*}})$.

\section{Input Approximation} 
To clarify the discussion in this study, we define summarized data and the error $\beta \in (0, \infty)$ of a covariance function. 
Assume that data points $(\bm{x}_i, y_i)_{i=1}^n$ are associated with assignments $\bm{\omega} \equiv (\omega_i)_{i=1}^n$ with $\omega_i \in \{1, \cdots, m\}$. 
Let summarized data be represented as a set of tuples $(\bm{z}_j, \bar{y}_j, n_j)_{j=1}^m$, 
where 
each $\bm{z}_j \in \mathcal{X}$ is a representative feature, 
each $\bar{y}_j \in \mathbb{R}$ is a summary statistic that depends only on the outputs $y_i$ with $\omega_i = j$, 
and each $n_j > 0$ is the number of indices such that $\omega_i = j$.  
Let $\beta$ satisfy the following conditions: 
\begin{itemize}
  \item $|k(\bm{x}_i, \bm{x}_j) - k(\bm{z}_{\omega_i}, \bm{z}_{\omega_j})| < \beta$ for all $1 \leq i \leq j \leq n$. 
  \item $|k(\bm{x}_i, \bm{x}_j^\mathrm{*}) - k(\bm{z}_{\omega_i}, \bm{x}_j^\mathrm{*})| < \beta$ for all $1 \leq i \leq n$ and $1 \leq j \leq n_\mathrm{*}$. 
\end{itemize}     

Even in situations where only summarized data is available, the range of complete data inputs is often accessible. 
This range allows us to evaluate the approximation errors of covariance functions when the inputs are representative features, as described in \cref{example_kernel}.  
For Gaussian process regression, we can flexibly design many types of covariance functions (e.g., covariance functions equivalent to an infinitely wide deep network \citep{Jaehoon} or relying on non-Euclidean metric \citep{Aasa}). 
\Cref{kernel1} gurantees that a small range improves the approximation accuracy for any covariance function. 
\begin{lemma}
    \label{kernel1} 
    Given $\bm{z}, \bm{z^\prime}, \bm{x^\mathrm{*}} \in \mathcal{X}$, 
    there exists $\alpha \in (0, \infty)$ such that 
    $\max\{|\bm{x} - \bm{z}|, |\bm{x^\prime} - \bm{z^\prime}|\} < \alpha \Rightarrow \max\{|k(\bm{x}, \bm{x^\prime}) - k(\bm{z}, \bm{z^\prime})|, |k(\bm{x}, \bm{x}^\mathrm{*}) - k(\bm{z}, \bm{x^\mathrm{*}})|\} < \beta$,  
    for any $\beta \in (0, \infty)$ and $\bm{x}, \bm{x^\prime} \in \mathcal{X}$. 
\end{lemma}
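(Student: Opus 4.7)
The plan is to derive the result directly from the continuity assumption on $k$ stated in the preliminaries. Since the claim gives us a fixed triple $(\bm{z}, \bm{z^\prime}, \bm{x^\mathrm{*}})$ and a fixed tolerance $\beta$, we only need to exhibit an $\alpha$ that forces both covariance-function deviations to stay below $\beta$ whenever the inputs are within $\alpha$ of the reference points. Conceptually this is just uniform joint continuity applied at two specific base points.

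First I would handle the bound involving $\zeta_1$-type quantities. Viewing $k$ as a continuous function on $\mathcal{X} \times \mathcal{X}$ with the product topology, continuity at $(\bm{z}, \bm{z^\prime})$ yields an $\alpha_1 \in (0, \infty)$ such that whenever $\max\{|\bm{x} - \bm{z}|, |\bm{x^\prime} - \bm{z^\prime}|\} < \alpha_1$, we have $|k(\bm{x}, \bm{x^\prime}) - k(\bm{z}, \bm{z^\prime})| < \beta$. Next I would repeat the argument for the single-argument bound: the partial map $\bm{x} \mapsto k(\bm{x}, \bm{x^\mathrm{*}})$ is continuous at $\bm{z}$ (since joint continuity implies separate continuity in each slot with the other slot fixed), so there is $\alpha_2 \in (0, \infty)$ such that $|\bm{x} - \bm{z}| < \alpha_2$ forces $|k(\bm{x}, \bm{x^\mathrm{*}}) - k(\bm{z}, \bm{x^\mathrm{*}})| < \beta$.

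Finally I would set $\alpha \equiv \min\{\alpha_1, \alpha_2\} \in (0, \infty)$. The hypothesis $\max\{|\bm{x} - \bm{z}|, |\bm{x^\prime} - \bm{z^\prime}|\} < \alpha$ then triggers both prior implications simultaneously (the second one using only the $|\bm{x} - \bm{z}| < \alpha \le \alpha_2$ portion), so the maximum of the two covariance deviations is bounded by $\beta$, yielding the desired conclusion.

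The statement is essentially a direct unpacking of continuity, so I do not anticipate a genuine obstacle. The only point requiring a small amount of care is making explicit that the two bounds needed in the conclusion correspond to two different base points in $\mathcal{X} \times \mathcal{X}$, namely $(\bm{z}, \bm{z^\prime})$ and $(\bm{z}, \bm{x^\mathrm{*}})$, so two applications of continuity (with the final $\alpha$ taken as the minimum) are required rather than one. Everything else is bookkeeping.
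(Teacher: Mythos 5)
Your proposal is correct and takes essentially the same approach as the paper: both proofs are direct unpackings of the continuity assumption on $k$, applied at the two base points $(\bm{z}, \bm{z^\prime})$ and $(\bm{z}, \bm{x^\mathrm{*}})$, with the final $\alpha$ chosen to satisfy both conditions. The only cosmetic difference is that the paper splits $|k(\bm{x}, \bm{x^\prime}) - k(\bm{z}, \bm{z^\prime})|$ via the triangle inequality through the intermediate value $k(\bm{z}, \bm{x^\prime})$ and bounds each piece by $\beta/2$, whereas you invoke joint continuity at $(\bm{z}, \bm{z^\prime})$ in one step.
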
 
\begin{proof}
    See \cref{kernel_proof}. 
\end{proof}

Here we approximate the marginal likelihood and posterior distribution using representative features, and evaluate the approximation errors. 
Initially, we consider the case where the inputs are given by $(\bm{z}_{\omega_i})_{i=1}^n$ and the parameters corresponding to the centroids $\bm{Z} \equiv (\bm{z}_i)_{i=1}^m$ are represented by $\bm{u} \equiv (f(\bm{z}_i))_{i=1}^m$.  
In this case, the prior distribution and the likelihood function are   
$p(\bm{u}) \equiv \mathcal{N} (\bm{u}; \bm{0}, \bm{K_\mathrm{uu}})$ and $p(\bm{y} \mid \bm{W_\mathrm{fu}} \bm{u}) \equiv p(\bm{y} \mid \bm{f})|_{\bm{f} = \bm{W_\mathrm{fu}} \bm{u}}$, 
respectively, where $\bm{K_\mathrm{uu}}$ is the Gram matrix of $(\bm{Z}, \bm{Z})$, and $\bm{W_\mathrm{fu}}$ is $n \times m$ matrix with $[\bm{W_\mathrm{fu}}]_{ij} = 1$ if $\omega_i = j$; $[\bm{W_\mathrm{fu}}]_{ij} = 0$ otherwise.  
Using this prior distribution and likelihood function, the log marginal likelihood is 
\begin{align}
    \label{input}
    \mathcal{E} \equiv \log \int_{\mathbb{R}^m} p(\bm{y} \mid \bm{W_\mathrm{fu}} \bm{u}) p(\bm{u}) d \bm{u}.  
\end{align} 
The posterior distribution of $\bm{f_\mathrm{*}}$ is $p(\bm{f_\mathrm{*}} \mid \bm{y}, \bm{\omega})$ defined as 
\begin{align} 
    \label{input2}
    \exp(-\mathcal{E}) \int_{\mathbb{R}^m} p(\bm{f_\mathrm{*}} \mid \bm{W_\mathrm{fu}} \bm{u}) p(\bm{y} \mid \bm{W_\mathrm{fu}} \bm{u}) p(\bm{u}) d \bm{u},  
\end{align} 
where $p(\bm{f_\mathrm{*}} \mid \bm{W_\mathrm{fu}} \bm{u}) \equiv p(\bm{f_\mathrm{*}} \mid \bm{f})|_{\bm{f} = \bm{W_\mathrm{fu}} \bm{u}}$. 
We evaluate their errors in comparison to Gaussian process regression with complete data. 
Let $p(\bm{f} \mid \bm{u})$ be defined as 
\begin{align} 
  \mathcal{N} (\bm{f}; \bm{K_\mathrm{fu}} {\bm{K_\mathrm{uu}}}^{-1} \bm{u}, \bm{K_\mathrm{ff}} - \bm{K_\mathrm{fu}} \bm{K_\mathrm{uu}}^{-1} \bm{K_\mathrm{uf}}), 
\end{align} 
where $\bm{K_\mathrm{fu}}$ is the Gram matrix of $(\bm{X}, \bm{Z})$. 
As in \cref{LML} and \cref{posterior}, the marginal likelihood and posterior distribution of the original model contain $p(\bm{f}) = \int_{\mathbb{R}^m} p(\bm{f} \mid \bm{u}) p(\bm{u}) d \bm{u}$. 
Considering that \cref{input} and \cref{input2} correspond to the case where $\bm{f} = \bm{W_\mathrm{fu}} \bm{u}$ holds, we proceed to analyze the integral of $p(\bm{f} \mid \bm{u})$. 
To facilitate this analysis, we introduce the following lemma. 
\begin{lemma}
    \label{kernel2} 
    Let $\gamma$ denote the maximum absolute value of the elements in $\bm{W_\mathrm{fu}} - \bm{K_\mathrm{fu}} {\bm{K_\mathrm{uu}}}^{-1}$. 
    Then we have 
    \begin{align}
    \label{W_KK} 
    \bm{K_\mathrm{ff}} - \bm{K_\mathrm{fu}} \bm{K_\mathrm{uu}}^{-1} \bm{K_\mathrm{uf}} = O(\beta + m \beta \gamma).   
    \end{align}   
\end{lemma}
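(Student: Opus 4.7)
The plan is to rewrite the residual matrix $\bm{K_\mathrm{ff}} - \bm{K_\mathrm{fu}} \bm{K_\mathrm{uu}}^{-1} \bm{K_\mathrm{uf}}$ by splitting $\bm{A} \equiv \bm{K_\mathrm{fu}} \bm{K_\mathrm{uu}}^{-1}$ as $\bm{A} = \bm{W_\mathrm{fu}} + \bm{\Delta}$, where, by the very definition of $\gamma$, every entry of $\bm{\Delta}$ has absolute value at most $\gamma$. Substituting and expanding yields
\begin{align*}
\bm{K_\mathrm{ff}} - \bm{A} \bm{K_\mathrm{uu}} \bm{A}^\top
&= \bigl( \bm{K_\mathrm{ff}} - \bm{W_\mathrm{fu}} \bm{K_\mathrm{uu}} \bm{W_\mathrm{fu}}^\top \bigr) - \bm{W_\mathrm{fu}} \bm{K_\mathrm{uu}} \bm{\Delta}^\top \\
&\quad - \bm{\Delta} \bm{K_\mathrm{uu}} \bm{W_\mathrm{fu}}^\top - \bm{\Delta} \bm{K_\mathrm{uu}} \bm{\Delta}^\top ,
\end{align*}
and I would bound the four terms one at a time.

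The key identity I would exploit is $\bm{A} \bm{K_\mathrm{uu}} = \bm{K_\mathrm{fu}}$, which, when combined with $\bm{A} = \bm{W_\mathrm{fu}} + \bm{\Delta}$, gives $\bm{\Delta} \bm{K_\mathrm{uu}} = \bm{K_\mathrm{fu}} - \bm{W_\mathrm{fu}} \bm{K_\mathrm{uu}}$. The $(i,j)$-entry of this matrix is $k(\bm{x}_i, \bm{z}_j) - k(\bm{z}_{\omega_i}, \bm{z}_j)$, which is $O(\beta)$ by the hypotheses defining $\beta$ together with the continuity and symmetry of $k$. Hence $\bm{\Delta} \bm{K_\mathrm{uu}}$ and its transpose $\bm{K_\mathrm{uu}} \bm{\Delta}^\top$ are element-wise $O(\beta)$, while the first parenthesised term has entries $k(\bm{x}_i, \bm{x}_{i'}) - k(\bm{z}_{\omega_i}, \bm{z}_{\omega_{i'}})$ which are $O(\beta)$ by the first condition defining $\beta$. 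Because $\bm{W_\mathrm{fu}}$ is a $0/1$ indicator, right- or left-multiplication by $\bm{W_\mathrm{fu}}$ or $\bm{W_\mathrm{fu}}^\top$ merely picks a single row or column; the two cross terms therefore inherit the $O(\beta)$ bound without accruing any factor of $m$. Finally, the last term has $(i, i')$-entry $\sum_{j=1}^{m} (\bm{\Delta} \bm{K_\mathrm{uu}})_{ij} \, \bm{\Delta}_{i' j}$, a sum of $m$ products each of size $O(\beta) \cdot O(\gamma)$, which is $O(m \beta \gamma)$. Assembling the four contributions gives $O(\beta + m \beta \gamma)$, as claimed.

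The main obstacle is the seemingly trivial but delicate step of bounding the one-sided kernel difference $|k(\bm{x}_i, \bm{z}_j) - k(\bm{z}_{\omega_i}, \bm{z}_j)|$ by $O(\beta)$, since it is not literally one of the two bullets defining $\beta$ (which either perturb both arguments or use a prediction point). This bound follows from viewing each centroid $\bm{z}_j$ as the representative feature of itself and reapplying the argument behind \cref{kernel1}; alternatively, one can absorb such cases into the definition of $\beta$ without loss of generality via the $\zeta_1$ and $\zeta_2$ expressions of \cref{example_kernel}. Once this translation is in place, everything else in the decomposition is routine matrix algebra.
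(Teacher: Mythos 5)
Your proposal is correct and is essentially the paper's own argument: writing $\bm{K_\mathrm{fu}}\bm{K_\mathrm{uu}}^{-1} = \bm{W_\mathrm{fu}} + \bm{\Delta}$ and expanding produces, term for term, the same decomposition the paper obtains by splitting $\bm{K_\mathrm{fu}} = \bm{W_\mathrm{fu}}\bm{K_\mathrm{uu}} + (\bm{K_\mathrm{fu}} - \bm{W_\mathrm{fu}}\bm{K_\mathrm{uu}})$, since your $\bm{\Delta}\bm{K_\mathrm{uu}}$ is exactly that residual, and the three $O(\beta)$ terms plus the single $O(m\beta\gamma)$ cross term are bounded identically. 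Your explicit remark that $|k(\bm{x}_i,\bm{z}_j) - k(\bm{z}_{\omega_i},\bm{z}_j)| = O(\beta)$ is not literally one of the two bullets defining $\beta$ is a fair point that the paper glosses over, and your resolution of it is sound.
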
 
\begin{proof}
    See \cref{kernel2_proof}. 
\end{proof} 
The error $\gamma$ tends to become small when $\beta$ is sufficiently small. 
In such cases, from \cref{kernel1} and \cref{kernel2}, the integral of $p(\bm{f} \mid \bm{u})$ over the region within a certain distance from the point $\bm{f} = \bm{W_\mathrm{fu}} \bm{u}$ approaches $1$ as the range of inputs becomes small. 
The following lemmas evaluate this dynamic. 
For simplicity, we assume $\gamma \neq 0$. 
\begin{lemma}   
  \label{lem_bound}
  For $\delta_1 \in [0, \infty)$ and $\delta_2 \in [0, \delta_1]$, define   
  \begin{align}
    \label{epsi}
    \epsilon(\delta_1, \delta_2) \equiv F \Bigl(m, \frac{\kappa (\delta_1 - \delta_2)^2}{\lambda_1} m\Bigr) + F \Bigl(n, \frac{\delta_2^2}{\lambda_2}n\Bigr),    
  \end{align} 
  where 
  $\kappa \equiv \inf_{\bm{u} \in \mathbb{R}^m; \lvert(\bm{W_\mathrm{fu}} - \bm{K_\mathrm{fu}} {\bm{K_\mathrm{uu}}}^{-1}) \bm{u}\rvert = 1} \frac{n}{m} \lvert\bm{u}\rvert^2$, 
  $\lambda_1$ is the maximum eigenvalue of $\bm{K_\mathrm{uu}}$, 
  and $\lambda_2$ is that of $\bm{K_\mathrm{ff}} - \bm{K_\mathrm{fu}} \bm{K_\mathrm{uu}}^{-1} \bm{K_\mathrm{uf}}$.  
  Then we have  
  \begin{align} 
  \label{error_bound}
  \int_{\mathbb{R}^m} \int_{\mathbb{R}^n \setminus R(\bm{u}, \delta_1)} p(\bm{f} \mid \bm{u}) p(\bm{u}) d\bm{f} d\bm{u} \leq \epsilon(\delta_1, \delta_2),  
  \end{align} 
  where $R(\bm{u}, \delta_1) \equiv \{\bm{f} \in \mathbb{R}^n \mid \lvert\bm{f} - \bm{W_\mathrm{fu}} \bm{u}\rvert \leq \sqrt{n} \delta_1\}$.    
\end{lemma}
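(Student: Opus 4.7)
The plan is to decouple the bad event $\{\bm{f} \notin R(\bm{u}, \delta_1)\}$ into a piece governed solely by the randomness of $\bm{u}$ and a piece governed by the conditional randomness of $\bm{f}$ given $\bm{u}$, and to bound each piece by a chi-square tail probability. First, I would apply the triangle inequality
\begin{align}
|\bm{f} - \bm{W_\mathrm{fu}} \bm{u}| \leq |(\bm{W_\mathrm{fu}} - \bm{K_\mathrm{fu}} \bm{K_\mathrm{uu}}^{-1}) \bm{u}| + |\bm{f} - \bm{K_\mathrm{fu}} \bm{K_\mathrm{uu}}^{-1} \bm{u}|. \nonumber
\end{align}
Writing $\sqrt{n}\,\delta_1 = \sqrt{n}\,(\delta_1 - \delta_2) + \sqrt{n}\,\delta_2$, whenever the left side exceeds $\sqrt{n}\,\delta_1$ at least one of the two terms on the right exceeds the corresponding summand. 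A union bound then splits the double integral into two pieces that I analyze separately.

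For the first piece, the event depends only on $\bm{u}$. The central step is a scaling argument that converts the infimum in the definition of $\kappa$ into the pointwise inequality $|(\bm{W_\mathrm{fu}} - \bm{K_\mathrm{fu}} \bm{K_\mathrm{uu}}^{-1}) \bm{u}|^2 \leq \frac{n}{m \kappa} |\bm{u}|^2$ valid for every $\bm{u}$: apply the defining infimum to the rescaling $\bm{u}/|(\bm{W_\mathrm{fu}} - \bm{K_\mathrm{fu}} \bm{K_\mathrm{uu}}^{-1}) \bm{u}|$ and rearrange. The event is therefore contained in $\{|\bm{u}|^2 > m \kappa (\delta_1 - \delta_2)^2\}$. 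Whitening $\bm{u} = \bm{K_\mathrm{uu}}^{1/2} \bm{v}$ so that $\bm{v} \sim \mathcal{N}(\bm{0}, \bm{I}_m)$ under $p(\bm{u})$, the inequality $|\bm{u}|^2 \leq \lambda_1 |\bm{v}|^2$ places the event inside $\{|\bm{v}|^2 > m \kappa (\delta_1 - \delta_2)^2 / \lambda_1\}$. Since $|\bm{v}|^2$ has the chi-square distribution with $m$ degrees of freedom, its probability equals the first term of $\epsilon(\delta_1, \delta_2)$.

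For the second piece, conditional on $\bm{u}$, the vector $\bm{f} - \bm{K_\mathrm{fu}} \bm{K_\mathrm{uu}}^{-1} \bm{u}$ is centered Gaussian with covariance $\bm{K_\mathrm{ff}} - \bm{K_\mathrm{fu}} \bm{K_\mathrm{uu}}^{-1} \bm{K_\mathrm{uf}}$. The same whitening trick combined with the largest-eigenvalue bound $\lambda_2$ yields a chi-square tail bound $F(n, n \delta_2^2 / \lambda_2)$ that does not depend on $\bm{u}$, so marginalizing over $p(\bm{u})$ preserves it. Summing the two tail bounds produces $\epsilon(\delta_1, \delta_2)$. The main obstacle is the homogeneity argument that lifts the normalized infimum defining $\kappa$ to an inequality for arbitrary $\bm{u}$; once that is in hand, the rest is the standard domination of a Gaussian quadratic form by the largest eigenvalue of its covariance times a chi-square variable.
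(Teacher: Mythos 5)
Your proposal is correct and follows essentially the same route as the paper: the paper's proof splits the double integral according to whether $\bm{u}$ lies in $R_1(\delta_1-\delta_2) = \{\bm{u} : |(\bm{W_\mathrm{fu}} - \bm{K_\mathrm{fu}}\bm{K_\mathrm{uu}}^{-1})\bm{u}| \le \sqrt{n}(\delta_1-\delta_2)\}$, which is exactly your triangle-inequality/union-bound decomposition, and then bounds each piece by a chi-square tail via the same homogeneity argument for $\kappa$ and the same largest-eigenvalue domination with $\lambda_1$ and $\lambda_2$. No gaps.
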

\begin{proof}
  See \cref{proof_image} and \cref{proof_lem_bound}. 
\end{proof}
\begin{lemma}   
  \label{convert} 
  The following holds: 
  \begin{align} 
    \kappa^{-1} = O(m \gamma^2),~~\lambda_2 = O(\beta + m \beta \gamma). 
  \end{align} 
\end{lemma}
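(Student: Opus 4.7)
Both bounds follow from elementary matrix-norm inequalities bootstrapped from the elementwise estimate $|[\bm{W_\mathrm{fu}} - \bm{K_\mathrm{fu}} \bm{K_\mathrm{uu}}^{-1}]_{ij}| \le \gamma$ (by the definition of $\gamma$) together with \cref{kernel2}. Nothing analytically deeper than these two inputs will be needed.

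For the first bound I would identify $\kappa$ as a singular-value quantity. Writing $\bm{A} \equiv \bm{W_\mathrm{fu}} - \bm{K_\mathrm{fu}} \bm{K_\mathrm{uu}}^{-1}$, the constrained minimization in the definition of $\kappa$ is the standard variational characterization of $\sigma_{\max}$: the infimum of $|\bm{u}|^2$ subject to $|\bm{A}\bm{u}|=1$ equals $1/\sigma_{\max}(\bm{A})^2$, so $\kappa^{-1} = (m/n)\,\sigma_{\max}(\bm{A})^2$. I would then bound the operator norm by the Frobenius norm using the entrywise estimate, $\sigma_{\max}(\bm{A})^2 \le \|\bm{A}\|_F^2 \le nm\gamma^2$. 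Substituting gives the first claim under the paper's convention that Landau constants may absorb factors of the matrix dimensions.

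For the second bound, \cref{kernel2} already tells us that every entry of $\bm{M} \equiv \bm{K_\mathrm{ff}} - \bm{K_\mathrm{fu}}\bm{K_\mathrm{uu}}^{-1}\bm{K_\mathrm{uf}}$ has magnitude $O(\beta + m\beta\gamma)$. Because $\bm{M}$ is symmetric and positive semidefinite (it is the Schur complement of a positive-definite Gram matrix), the top eigenvalue admits the textbook bounds $\lambda_{2} \le \operatorname{tr}(\bm{M}) = \sum_i M_{ii}$ and $\lambda_{2} \le \max_i \sum_j |M_{ij}|$ (Gershgorin). Either inequality, combined with the elementwise bound just cited, immediately yields $\lambda_{2} = O(\beta + m\beta\gamma)$.

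\textbf{Main obstacle.} The only thing that requires care is the bookkeeping of dimension factors: one must check that the naive bounds $\kappa^{-1} \le m^{2}\gamma^{2}$ and $\lambda_{2} \le n(\beta+m\beta\gamma)$ derived above are of the form tolerated by the paper's earlier Landau convention, since on paper neither spectral step is tight in the dimensions. I do not expect any genuinely new analytical ingredient beyond \cref{kernel2} and the variational/Gershgorin/trace inequalities.
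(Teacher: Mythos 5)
Your proposal is correct in substance and takes essentially the same route as the paper: both bounds are bootstrapped from the entrywise estimate $\gamma$ on $\bm{A} \equiv \bm{W_\mathrm{fu}} - \bm{K_\mathrm{fu}}\bm{K_\mathrm{uu}}^{-1}$ together with \cref{kernel2}, with only elementary linear algebra in between. For $\lambda_2$ the paper invokes the identity that the squared Frobenius norm of a symmetric matrix equals the sum of squared eigenvalues, where you use the trace or Gershgorin; all three routes give $\lambda_2 \leq n\cdot O(\beta+m\beta\gamma)$, and the paper silently absorbs the factor of $n$ into the Landau constant, so your version is no worse than the paper's. The point that deserves emphasis is the obstacle you flag for $\kappa$. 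Your variational identity $\kappa^{-1}=\frac{m}{n}\sigma_{\max}(\bm{A})^2$ is correct, and $\sigma_{\max}(\bm{A})^2\leq\lVert\bm{A}\rVert_F^2\leq nm\gamma^2$ honestly yields only $\kappa^{-1}\leq m^2\gamma^2$. The paper instead fixes the row $i$ on which $|[\bm{A}\bm{u}]_i|$ is largest (so that $|[\bm{A}\bm{u}]_i|\geq 1/\sqrt{n}$) and minimizes $\frac{n}{m}\lvert\bm{u}\rvert^2$ under that single linear constraint; by Cauchy--Schwarz that infimum equals $\bigl(m\sum_{j}[\bm{A}]_{ij}^2\bigr)^{-1}$, which is again only bounded below by $(m^{2}\gamma^{2})^{-1}$. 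The paper's displayed value $\bigl(m\max_{j}[\bm{A}]_{ij}^2\bigr)^{-1}$ replaces the squared row norm by its largest entry, and it is only this substitution that produces the stated $O(m\gamma^2)$. So the factor of $m$ you hoped a ``Landau convention'' would absorb is not actually recovered by the paper's own argument either: from the entrywise bound alone, $O(m^{2}\gamma^{2})$ is what both derivations establish (this would change $\sqrt{m}\gamma$ to $m\gamma$ in the first term of $\eta$ but does not alter the qualitative conclusions of \cref{consistency}). Apart from making that discrepancy explicit rather than deferring it to a convention, your proof needs no new ingredient.
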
   
\begin{proof}
  See \cref{convert_proof}. 
\end{proof} 
\begin{lemma}   
  \label{monotonically} 
  Suppose $\xi \geq \xi_0$, 
  where $\xi_0$ is the larger value satisfying $2 \sqrt{\pi \xi_0} \exp(- \xi_0) = 1$. 
  Then $F(m, \xi m)$ is monotonically decreasing with respect to $m$.   
\end{lemma}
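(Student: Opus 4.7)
The plan is to extend $F(m, \xi m)$ to a function of a real variable $m \geq 1$ and show that $\partial_m F(m, \xi m) \leq 0$ whenever $\xi \geq \xi_0$. Writing $G(m) \equiv F(m, \xi m) = \Gamma(m/2, \xi m/2)/\Gamma(m/2)$ with the upper incomplete gamma $\Gamma(a, x) \equiv \int_x^\infty t^{a-1} \exp(-t) dt$, a direct differentiation in both the shape parameter $a = m/2$ and the lower limit $x = \xi m/2$ yields
\begin{align}
2 \Gamma(m/2) G'(m) = \int_{\xi m/2}^\infty t^{m/2-1} [\log t - \psi(m/2)] \exp(-t) dt - \xi (\xi m/2)^{m/2-1} \exp(-\xi m/2), \nonumber
\end{align}
where $\psi$ is the digamma function. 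So the monotonicity claim reduces to showing that the (subtracted) boundary term dominates the log-weighted integral.

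First, I would bound the integral. Using the classical inequality $\psi(m/2) \geq \log(m/2) - 1/m$ (a consequence of $\psi(x) = \log x - 1/(2x) - O(1/x^2)$), one has $\log t - \psi(m/2) \leq \log(2t/m) + 1/m$ on $[\xi m/2, \infty)$. After the substitution $s = 2t/m$ the integral becomes proportional to $(m/2)^{m/2} \int_\xi^\infty s^{m/2-1} (\log s + 1/m) \exp(-ms/2) ds$. Integration by parts against the factor $\log s$ eliminates the logarithm and leaves a combination of ordinary incomplete-gamma tails sharing, to leading order on $[\xi, \infty)$, the same exponential envelope $\xi^{m/2-1} \exp(-m\xi/2)$ as the boundary term.

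Second, I would normalise by $\Gamma(m/2)$ via the Stirling lower bound $\Gamma(m/2) \geq \sqrt{4 \pi/m}\, (m/2)^{m/2} \exp(-m/2)$, so that both sides of the desired inequality acquire the common envelope $\sqrt{m/\pi}\, \exp((m/2)(\log \xi - \xi + 1))$ up to rational factors in $m$ and $\xi$. The residual rational inequality, evaluated at its worst case over $m \geq 1$, simplifies to the scalar condition $2 \sqrt{\pi \xi}\, \exp(-\xi) \leq 1$, which holds precisely for $\xi$ at or above the larger root $\xi_0$ of the equation in the statement.

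The main obstacle will be keeping the log-weighted integral bound tight enough to recover the exact threshold $\xi_0$; a Mill's-ratio style estimate alone only yields a qualitative condition such as $\xi \geq 2$. Producing the sharp constant requires careful handling of the $\log t$ factor — the derivative of the upper incomplete gamma with respect to its shape has no elementary closed form — so the integration by parts must be executed in a way that cancels against the Stirling correction $\sqrt{4\pi/m}$ rather than leaving sub-leading remainders that would weaken the threshold.
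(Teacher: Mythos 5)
Your derivative formula for $2\Gamma(m/2)G'(m)$ is correct, but the proof has a genuine gap exactly where you flag it: the domination of the log-weighted integral $\int_{\xi m/2}^{\infty} t^{m/2-1}[\log t - \psi(m/2)]e^{-t}\,dt$ by the boundary term $\xi(\xi m/2)^{m/2-1}e^{-\xi m/2}$ is never established. The chain ``integration by parts eliminates the logarithm, leaving the same exponential envelope to leading order, and the residual rational inequality reduces to $2\sqrt{\pi\xi}\exp(-\xi)\le 1$'' is a sequence of unproved assertions, and your own closing paragraph concedes that the estimates you actually have in hand (a digamma bound plus Stirling) only yield a weaker threshold such as $\xi\ge 2$ rather than $\xi\ge\xi_0\approx 1.45$. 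Note that for $\xi\ge\xi_0>1$ the integrand is positive on the entire range (since $\psi(m/2)<\log(m/2)<\log(\xi m/2)$), so no sign cancellation helps; the comparison must be fully quantitative. There is also a structural mismatch: you are proving the strictly stronger statement that $m\mapsto F(m,\xi m)$ decreases as a function of a \emph{real} variable, whereas the lemma only needs $F(m+1,\xi(m+1))<F(m,\xi m)$ for integers, and the sharp constant $\xi_0$ is tied to the discrete step from $m=1$ to $m=2$; there is no reason the continuous derivative condition produces the same threshold, so even a fully executed version of your plan may fail at the stated $\xi_0$.

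The paper avoids all of this by staying discrete. Rescaling the integration variable gives $F(m,\xi m)=\int_{\xi/2}^{\infty}g_1(t,m)\,dt$ with $g_1(t,m)=m(mt)^{m/2-1}\exp(-mt)/\Gamma(m/2)$, so the lower limit no longer depends on $m$ and consecutive integrands can be compared pointwise: $g_1(t,m+1)/g_1(t,m)=g_2(m)\sqrt{t}\exp(-t)$ with $g_2(m)=\Gamma(m/2)\,\Gamma(\tfrac{m+1}{2})^{-1}(m+1)^{(m+1)/2}m^{-m/2}$. Digamma inequalities show $g_2$ is decreasing, so the ratio is at most $g_2(1)\sqrt{t}\exp(-t)=2\sqrt{\pi t}\exp(-t)\le 1$ on the relevant range, and integrating the pointwise inequality gives the claim. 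This is precisely where $\xi_0$ comes from. If you want to salvage your continuous route, you would need sharp two-sided control of $\psi(m/2)$ and of the incomplete-gamma tail that reproduces this exact constant, which is considerably harder than the ratio argument; I would recommend switching to the pointwise comparison of integrands.
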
   
\begin{proof}
  See \cref{proof_monotonically}. 
\end{proof} 
The percentiles of the chi-square distributions appearing in \cref{epsi} depend on $\kappa$ and $\lambda_2$. 
\Cref{convert} shows that the integral in \cref{error_bound} approaches zero as $\beta$ and $\gamma$ decrease. 
The evaluation of the integral behaves oppositely with respect to $m$ in \cref{convert} and \cref{monotonically}. 
While the integral decreases as $\delta_1$ increases, $\bm{f}$ within $R(\bm{u}, \delta_1)$ move away from $\bm{W_\mathrm{fu}} \bm{u}$. 
Considering this behavior, we derive the error bounds for the marginal likelihood and posterior distribution. 
\begin{theorem} 
    \label{consistency}
    Let $\eta$ be defined as  
    \begin{align}
      \eta \equiv \inf_{\xi \in [\xi_0, \infty)} \bigl(\sqrt{\xi \lambda_1 \kappa^{-1}} + \sqrt{\xi \lambda_2} + F(m, \xi m)\bigr). 
    \end{align} 
    Given the hyperparameters and complete data, we have   
    \begin{align}
    \label{eta} 
    \mathcal{L} - \mathcal{E} = O(\eta), \\ 
    \label{eta2} 
    \bigl\lVert\mathbb{E}_{p(\bm{f_\mathrm{*}} \mid \bm{y})} [\bm{f_\mathrm{*}}] - \mathbb{E}_{p(\bm{f_\mathrm{*}} \mid \bm{y}, \bm{\omega})} [\bm{f_\mathrm{*}}]\bigr\rVert = O(\eta),    
    \end{align} 
    where $\lVert\cdot\rVert$ denote the norm of a vector. 
\end{theorem}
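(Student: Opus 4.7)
The plan is to reduce both error bounds to a single region-splitting argument powered by \cref{lem_bound}, then optimize the free parameters to get the form of $\eta$. I would first rewrite $\exp(\mathcal{E})$ using $\int p(\bm{f}\mid\bm{u}) d\bm{f} = 1$, giving
\begin{align}
\exp(\mathcal{E}) = \int\int p(\bm{y}\mid\bm{W_\mathrm{fu}}\bm{u})\, p(\bm{f}\mid\bm{u})\, p(\bm{u})\, d\bm{f}\, d\bm{u},
\end{align}
while $\exp(\mathcal{L})$ is the same double integral with $p(\bm{y}\mid\bm{f})$ in place of $p(\bm{y}\mid\bm{W_\mathrm{fu}}\bm{u})$. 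Their difference is therefore the integral of $[p(\bm{y}\mid\bm{f}) - p(\bm{y}\mid\bm{W_\mathrm{fu}}\bm{u})]\,p(\bm{f}\mid\bm{u})\,p(\bm{u})$, which I would split at $R(\bm{u}, \delta_1)$.

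On $R(\bm{u}, \delta_1)$, the bound $|\bm{f} - \bm{W_\mathrm{fu}}\bm{u}| \leq \sqrt{n}\,\delta_1$ combined with the standing assumption that the likelihood and its gradient are bounded yields a Lipschitz-type bound $|p(\bm{y}\mid\bm{f}) - p(\bm{y}\mid\bm{W_\mathrm{fu}}\bm{u})| = O(\delta_1)$; integrating against the probability measure $p(\bm{f}\mid\bm{u})p(\bm{u})$ keeps this $O(\delta_1)$. Off $R(\bm{u}, \delta_1)$, using boundedness of the likelihood together with \cref{lem_bound} gives a contribution of $O(\epsilon(\delta_1, \delta_2))$. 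Adding the two pieces yields
\begin{align}
|\exp(\mathcal{L}) - \exp(\mathcal{E})| = O(\delta_1 + \epsilon(\delta_1, \delta_2)).
\end{align}

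To recover the specific form of $\eta$, I would parameterize by $\xi \geq \xi_0$ via $\delta_2 = \sqrt{\xi \lambda_2}$ and $\delta_1 - \delta_2 = \sqrt{\xi \lambda_1 \kappa^{-1}}$. Plugging into $\epsilon(\delta_1, \delta_2)$ gives $F(m, \xi m) + F(n, \xi n)$, and \cref{monotonically} (applicable since $\xi \geq \xi_0$ and $n \geq m$ in the summarization setting) bounds $F(n, \xi n) \leq F(m, \xi m)$. Taking the infimum over $\xi$ produces the $O(\eta)$ bound on $|\exp(\mathcal{L}) - \exp(\mathcal{E})|$. Since $\exp(\mathcal{L})$ and $\exp(\mathcal{E})$ are bounded away from zero (by boundedness of the likelihood combined with the Gaussian prior integrals), a mean-value argument transfers this to $\mathcal{L} - \mathcal{E} = O(\eta)$, giving \cref{eta}.

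For \cref{eta2}, I would apply the same splitting strategy to the numerator and denominator of each posterior mean, writing $\mathbb{E}_{p(\bm{f_\mathrm{*}}\mid\bm{y})}[\bm{f_\mathrm{*}}]$ as a ratio of a double integral (with $\bm{K_\mathrm{*f}}\bm{K_\mathrm{ff}}^{-1}\bm{f}$ inserted) over $\exp(\mathcal{L})$, and similarly for $\mathbb{E}_{p(\bm{f_\mathrm{*}}\mid\bm{y},\bm{\omega})}[\bm{f_\mathrm{*}}]$. The same region split controls the numerator difference, with an extra contribution from $\bm{K_\mathrm{*f}}\bm{K_\mathrm{ff}}^{-1}\bm{f} - \bm{K_\mathrm{*u}}\bm{K_\mathrm{uu}}^{-1}\bm{u}$ being $O(\beta + \delta_1)$ on $R(\bm{u},\delta_1)$ by \cref{kernel1,kernel2}. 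The main obstacle is precisely this step: unlike the marginal likelihood, the integrand for the posterior mean depends explicitly on $\bm{f}$ through the conditional mean, so I must carefully match $\bm{K_\mathrm{*f}}\bm{K_\mathrm{ff}}^{-1}\bm{f}$ with $\bm{K_\mathrm{*u}}\bm{K_\mathrm{uu}}^{-1}\bm{u}$ using the approximation $\bm{K_\mathrm{*f}} \approx \bm{K_\mathrm{*u}}\bm{W_\mathrm{fu}}^\top$ and verify that the resulting error is dominated by $\eta$ rather than introducing a worse factor depending on $m$ or $\lambda_1$.
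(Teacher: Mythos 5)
Your argument for \cref{eta} is essentially the paper's: the paper (\cref{bound1}) writes the marginal likelihood as $\int\int p(\bm{y}\mid\bm{f})p(\bm{f}\mid\bm{u})p(\bm{u})\,d\bm{f}\,d\bm{u}$, splits at $R(\bm{u},\delta_1)$, uses a directional-derivative (Lipschitz) bound inside and \cref{lem_bound} outside, obtains $|\mathcal{L}-\mathcal{E}|\leq a_1\epsilon(\delta_1,\delta_2)+a_2\delta_1$ with constants normalized by $\exp(-\mathcal{L})$, and then makes exactly your substitution $\delta_2=\sqrt{\xi\lambda_2}$, $\delta_1-\delta_2=\sqrt{\xi\lambda_1\kappa^{-1}}$, absorbing $F(n,\xi n)$ into $F(m,\xi m)$ via \cref{monotonically}. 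This part is fine.

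For \cref{eta2} there is a genuine misstep: you are comparing against the wrong object. The approximate posterior $p(\bm{f_\mathrm{*}}\mid\bm{y},\bm{\omega})$ in \cref{input2} is built from $p(\bm{f_\mathrm{*}}\mid\bm{W_\mathrm{fu}}\bm{u})=p(\bm{f_\mathrm{*}}\mid\bm{f})|_{\bm{f}=\bm{W_\mathrm{fu}}\bm{u}}$, i.e.\ its conditional mean is $\bm{K_\mathrm{*f}}\bm{K_\mathrm{ff}}^{-1}\bm{W_\mathrm{fu}}\bm{u}$, not $\bm{K_\mathrm{*u}}\bm{K_\mathrm{uu}}^{-1}\bm{u}$. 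The replacement $\bm{K_\mathrm{*f}}\approx\bm{K_\mathrm{*u}}\bm{W_\mathrm{uf}}$ that you identify as the ``main obstacle'' is not needed for this theorem at all --- it is the content of \cref{approx_matrix}, which compares $\bm{\mu_p}$ with $\bm{\mu_q}$ and carries its own $O(\beta)$ error that is indeed not obviously dominated by $\eta$. Once you compare against the correct object, the two posterior means differ only through evaluating the same map at $\bm{f}$ versus at $\bm{W_\mathrm{fu}}\bm{u}$ (plus the normalization mismatch $\exp(-\mathcal{L})$ versus $\exp(-\mathcal{E})$, controlled by part one), so the obstacle disappears. The paper's route (\cref{bound2}) is in fact even more uniform: it bounds the density difference $|p(\bm{f_\mathrm{*}}\mid\bm{y})-p(\bm{f_\mathrm{*}}\mid\bm{y},\bm{\omega})|$ pointwise in $\bm{f_\mathrm{*}}$ by applying the identical split to the product $p(\bm{f_\mathrm{*}}\mid\bm{f})p(\bm{y}\mid\bm{f})$, with suprema $B_1(\bm{f_\mathrm{*}})$, $B_2(\bm{f_\mathrm{*}})$ playing the roles of $a_1$, $a_2$, and then integrates $\lVert\bm{f_\mathrm{*}}\rVert$ against that bound; no manipulation of the conditional mean or of Gram matrices is required. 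As written, your plan for \cref{eta2} does not close.
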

\begin{proof}
  See \cref{bound_AP}. 
\end{proof} 

\begin{figure}[t] 
  \begin{center}
  \includegraphics[width=82.5mm]{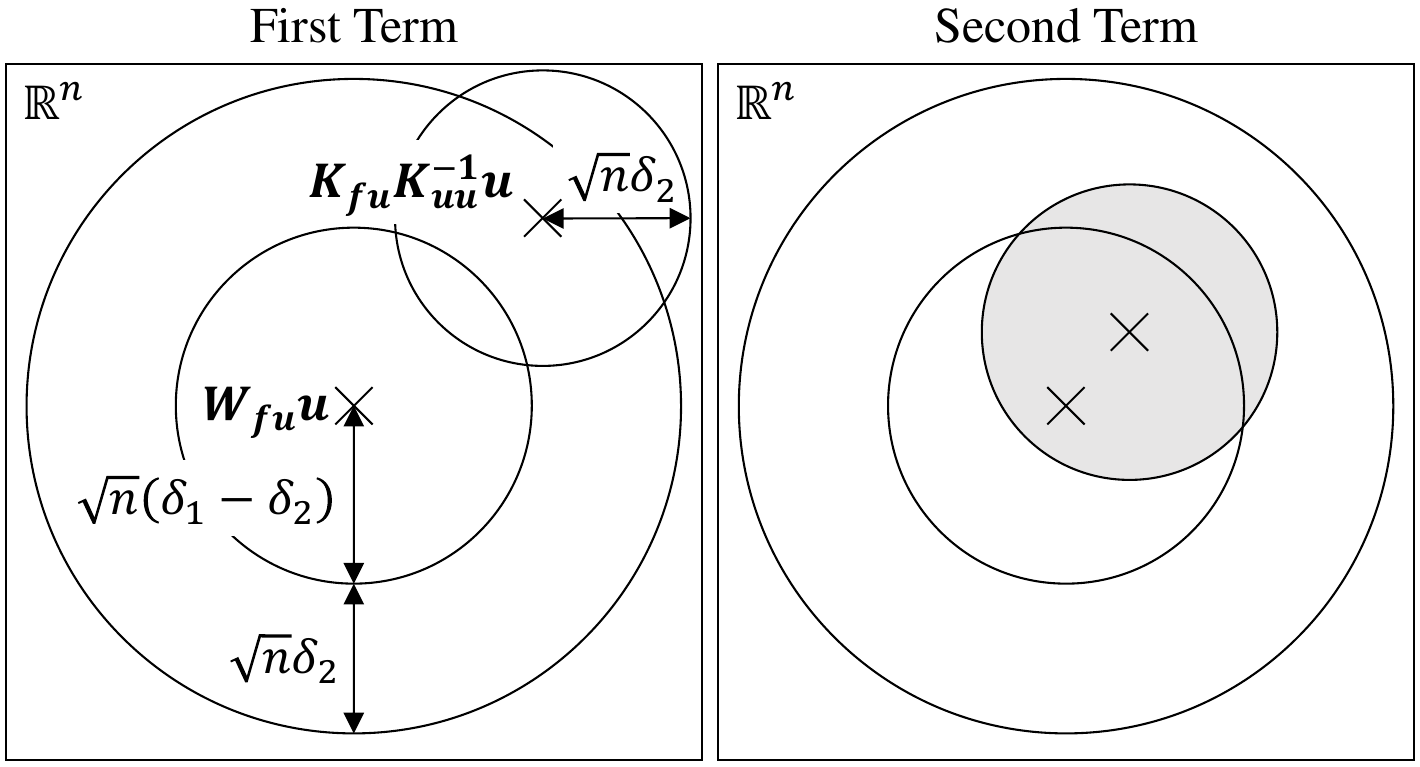}
  \end{center}
\caption{
  Sketch of the proof of \cref{lem_bound}. 
  The first and second terms in \cref{epsi} correspond respectively to the left and right sides.    
  Evaluating the integral in \cref{error_bound} requires analyzing $p(\bm{f} \mid \bm{u})$ over $\mathbb{R}^n \setminus R(\bm{u}, \delta_1)$, posing a significant challenge.   
  Consequently, to assess the upper bound, we consider the hypersphere centered at the mean vector in $p(\bm{f} \mid \bm{u})$.  
  The space of $\bm{u}$ is divided based on whether $R(\bm{u}, \delta_1)$ encompasses the hypersphere. 
  Gray indicates the space that is not integrated with respect to $\bm{f}$. 
}
\label{proof_image}
\end{figure} 

\Cref{consistency} demonstrates that the performance of input approximation improves as $\eta$ decreases.  
While the first and second terms in $\eta$ increase as $\xi$ increases, the last term decreases. 
\Cref{convert} suggests that smaller values of $\beta$ and $\gamma$, relative to the square root of $m$, help prevent the first and second terms from increasing. 
From \cref{monotonically}, the last term decreases as the summarized data becomes finer. 
\Cref{example_inputs} shows that $\eta$ decreases with finer summarized data. 
Furthermore, the length scale of the covariance functions significantly affects $\eta$. 
As the length scale increases, $\eta$ decreases due to smaller $\beta$ and $\gamma$. 
Note that this result does not directly prove the effect of the length scale on input approximation, as the proportional constants in \cref{eta} and \cref{eta2} contain the hyperparameters.

\begin{figure}[t]
  \begin{center}
  \includegraphics[width=82.5mm]{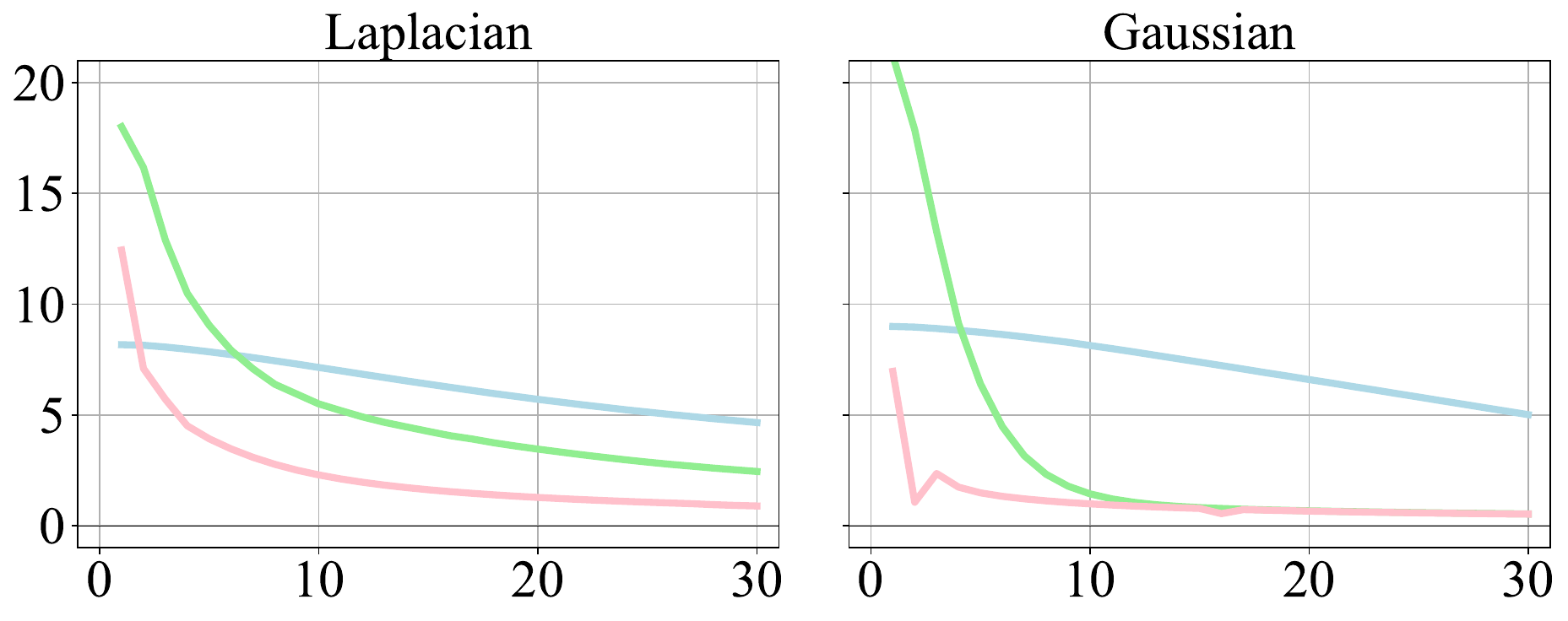}
  \end{center}
\caption{
Behavior of $\eta$ in a toy model. 
Let $\theta$ represent the length scale. 
Light blue: $\theta = 0.1$. Light green: $\theta = 1$. Pink: $\theta = 10$. 
The left and right figures correspond to the covariance functions $\exp (- \frac{1}{\theta} |x - x^\prime|)$ and $\exp (- \frac{1}{2 \theta^2} |x - x^\prime|^2)$, respectively.   
In each figure, the vertical and horizontal axes represent $\eta$ and $m$, respectively. 
The $n = 1000$ inputs and $m$ representative features are equally spaced within $[0, 2 \pi]$. 
The assignment $\omega_i$ of a data point $x_i$ is the index $j$ of the closest centroid $z_j$.   
\Cref{optimization_toy} explains the process of obtaining $\kappa$ and $\eta$.  
}
\label{example_inputs}
\end{figure}

\section{Sample Quasi-Likelihood} 
\label{SQL}
Since the likelihood function $p(\bm{y} \mid \bm{W_\mathrm{fu}} \bm{u})$ requires the outputs of complete data,  
the log marginal likelihood $\mathcal{E}$ and the posterior distribution $p(\bm{f_\mathrm{*}} \mid \bm{y}, \bm{\omega})$ still cannot be computed using only summarized data. 
Consequently, we replace the likelihood function with a function that excludes complete data outputs and incorporates summary statistics $\bm{\bar{y}} \equiv (\bar{y}_i)_{i=1}^m$. 
As one such function, we propose the concept of sample quasi-likelihood characterized by a variance function $v : \mathbb{R} \rightarrow (0, \infty)$ as follows: 
\begin{definition} 
    A sample quasi-likelihood function $\bar{Q} : \mathbb{R}^m \times \mathbb{R}^m \rightarrow \mathbb{R}$ is a function such that 
    \begin{align} 
    \frac{\partial \bar{Q}(\bm{\bar{y}}, \bm{u})}{\partial \bm{u}} = \bm{V_\mathrm{uu}}^{-1} (\bm{\bar{y}} - \bm{u}), 
    \end{align}   
    where $\bm{V_\mathrm{uu}} \equiv \diag(n_1^{-1} v(\bar{y}_1), \cdots, n_m^{-1} v(\bar{y}_m))$. 
\end{definition} 
Suppose that the prior distribution is $p(\bm{u})$, and the likelihood function is $\mathcal{N} (\bm{\bar{y}}; \bm{u}, \bm{V_\mathrm{uu}})$. 
Then the log marginal likelihood is  
\begin{align} 
\mathcal{Q} \equiv &- \frac{m}{2} \log (2 \pi) - \frac{1}{2} \log \lvert \bm{K_\mathrm{uu}} + \bm{V_\mathrm{uu}} \rvert&  \nonumber \\ 
                    &- \frac{1}{2} {\bm{\bar{y}}}^{\top} (\bm{K_\mathrm{uu}} + \bm{V_\mathrm{uu}})^{-1} \bm{\bar{y}}.& 
\end{align} 
The posterior distribution of $\bm{f_\mathrm{*}}$ is $\mathcal{N} (\bm{f_\mathrm{*}}; \bm{\mu_q}, \bm{\Sigma_q})$, 
where 
$\bm{\mu_q} \equiv \bm{K_\mathrm{*u}} (\bm{K_\mathrm{uu}} + \bm{V_\mathrm{uu}})^{-1} \bm{\bar{y}}$,  
$\bm{\Sigma_q} \equiv \bm{K_\mathrm{**}} - \bm{K_\mathrm{*u}} (\bm{K_\mathrm{uu}} + \bm{V_\mathrm{uu}})^{-1} \bm{K_\mathrm{u*}}$,  
and $\bm{K_\mathrm{u*}}$ is the Gram matrix of $(\bm{Z}, \bm{X_\mathrm{*}})$. 
See \cref{Lap_likelihood_quasi}. 
These marginal likelihood and posterior distribution are computed using only summarized data. 
The implementation is straightforward, as it is equivalent to Gaussian process regression with a Gaussian likelihood. 
The computational complexity is $O(m^3)$. 

Despite these advantages, we still have not discussed the difference between likelihood and sample quasi-likelihood.
Here we demonstrate the asymptotic behavior of the sample quasi-likelihood. 
The assumptions outlined below enable us to apply the Laplace approximation using summary statistics. 
Note that the asymptotic behavior in Bayesian statistics can be found in \citep{Watanabe}. 
\begin{assumption} 
    \label{assumption_likelihood}
    $- \nabla \nabla \log p(\bm{y} \mid \bm{W_\mathrm{fu}} \bm{u}) |_{\bm{u} = \bm{\bar{y}}} = \bm{V_\mathrm{uu}}^{-1}$. 
\end{assumption}
\begin{assumption}     
    \label{assumption_MLE}
    $p(\bm{y} \mid \bm{W_\mathrm{fu}} \bm{u})$ is unimodal with respect to $\bm{u}$, having its mode at $\bm{\bar{y}}$. 
\end{assumption} 
The Laplace approximation is conventionally used to handle non-Gaussian likelihoods by approximating the posterior with a Gaussian centered at the maximum a posteriori estimate. 
Unlike this, our approach replaces the likelihood with a Gaussian centered at the maximum likelihood estimate. 
\begin{theorem}
\label{Lap_theorem}
  Suppose that \cref{assumption_likelihood} and \cref{assumption_MLE} hold. 
  Then we have  
  \begin{align}
  \label{E_likelihood}
  \mathcal{E} - \mathcal{Q} = &\log p(\bm{y} \mid \bm{W_\mathrm{fu}} \bm{u}) |_{\bm{u} = \bm{\bar{y}}} + \frac{m}{2} \log (2 \pi)& \nonumber \\  
                              &+ \frac{1}{2} \log \lvert \bm{V_\mathrm{uu}}\rvert + o_p (m),& 
  \end{align} 
  where $o_p$ denote convergence in probability. 
  Additionally, the posterior distribution $p(\bm{f_\mathrm{*}} \mid \bm{y}, \bm{\omega})$ asymptotically converges to $\mathcal{N} (\bm{f_\mathrm{*}}; \bm{\mu_p}, \bm{\Sigma_p})$,  
  where 
  \begin{align}
  \bm{\mu_p}    &\equiv \bm{K_\mathrm{*f}} \bm{K_\mathrm{ff}}^{-1} \bm{W_\mathrm{fu}} (\bm{V_\mathrm{uu}}^{-1} + \bm{K_\mathrm{uu}}^{-1})^{-1} \bm{V_\mathrm{uu}}^{-1} \bm{\bar{y}},&  \nonumber \\ 
  \bm{\Sigma_p} &\equiv \bm{K_\mathrm{**}} - \bm{K_\mathrm{*f}} \bm{K_\mathrm{ff}}^{-1} \bm{K_\mathrm{f*}}&  \nonumber  \\ 
  &+ \bm{K_\mathrm{*f}} \bm{K_\mathrm{ff}}^{-1} \bm{W_\mathrm{fu}} (\bm{V_\mathrm{uu}}^{-1} + {\bm{K_\mathrm{uu}}}^{-1})^{-1}  \bm{W_\mathrm{uf}} \bm{K_\mathrm{ff}}^{-1} \bm{K_\mathrm{f*}}.& \nonumber   
  \end{align}   
\end{theorem}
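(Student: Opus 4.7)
The plan is to view \cref{Lap_theorem} as a Laplace approximation applied to the \emph{likelihood} rather than to the posterior, centred at the maximum likelihood estimate $\bm{u}=\bm{\bar{y}}$ that \cref{assumption_MLE} identifies. Concretely, I would Taylor expand $\log p(\bm{y}\mid\bm{W_\mathrm{fu}}\bm{u})$ around $\bm{\bar{y}}$: the first-order term vanishes by unimodality (\cref{assumption_MLE}), the quadratic term is $-\tfrac{1}{2}(\bm{u}-\bm{\bar{y}})^{\top}\bm{V_\mathrm{uu}}^{-1}(\bm{u}-\bm{\bar{y}})$ by \cref{assumption_likelihood}, and the cubic remainder will be shown to contribute only $o_p(m)$ after integration. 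Recognising $\exp(-\tfrac{1}{2}(\bm{u}-\bm{\bar{y}})^{\top}\bm{V_\mathrm{uu}}^{-1}(\bm{u}-\bm{\bar{y}}))=(2\pi)^{m/2}\lvert\bm{V_\mathrm{uu}}\rvert^{1/2}\mathcal{N}(\bm{\bar{y}};\bm{u},\bm{V_\mathrm{uu}})$ turns the likelihood into a Gaussian pseudo-observation of $\bm{u}$ with mean $\bm{\bar{y}}$ and covariance $\bm{V_\mathrm{uu}}$, which is exactly the pseudo-model behind the sample quasi-likelihood $\mathcal{Q}$.

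For \cref{E_likelihood}, I would substitute this Gaussian factorisation into the definition \cref{input} of $\mathcal{E}$ and apply the standard Gaussian convolution identity $\int\mathcal{N}(\bm{\bar{y}};\bm{u},\bm{V_\mathrm{uu}})\mathcal{N}(\bm{u};\bm{0},\bm{K_\mathrm{uu}})\,d\bm{u}=\mathcal{N}(\bm{\bar{y}};\bm{0},\bm{K_\mathrm{uu}}+\bm{V_\mathrm{uu}})$. Taking logarithms produces exactly $\mathcal{Q}$ plus the three explicit terms $\log p(\bm{y}\mid\bm{W_\mathrm{fu}}\bm{u})|_{\bm{u}=\bm{\bar{y}}}+\tfrac{m}{2}\log(2\pi)+\tfrac{1}{2}\log\lvert\bm{V_\mathrm{uu}}\rvert$ displayed in the theorem, with the $o_p(m)$ residual carrying the Taylor remainder.

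For the posterior part, I would reuse the same Laplace replacement inside the definition \cref{input2} of $p(\bm{f_\mathrm{*}}\mid\bm{y},\bm{\omega})$. Since $p(\bm{f_\mathrm{*}}\mid\bm{W_\mathrm{fu}}\bm{u})=\mathcal{N}(\bm{f_\mathrm{*}};\bm{K_\mathrm{*f}}\bm{K_\mathrm{ff}}^{-1}\bm{W_\mathrm{fu}}\bm{u},\bm{K_\mathrm{**}}-\bm{K_\mathrm{*f}}\bm{K_\mathrm{ff}}^{-1}\bm{K_\mathrm{f*}})$ is Gaussian in $\bm{u}$, combining it with the pseudo-Gaussian likelihood and the Gaussian prior $p(\bm{u})$ yields a conjugate Gaussian integral in $\bm{u}$. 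The intermediate posterior of $\bm{u}$ is $\mathcal{N}\!\bigl(\bm{u};(\bm{V_\mathrm{uu}}^{-1}+\bm{K_\mathrm{uu}}^{-1})^{-1}\bm{V_\mathrm{uu}}^{-1}\bm{\bar{y}},(\bm{V_\mathrm{uu}}^{-1}+\bm{K_\mathrm{uu}}^{-1})^{-1}\bigr)$; propagating it through the linear-Gaussian map $\bm{u}\mapsto\bm{K_\mathrm{*f}}\bm{K_\mathrm{ff}}^{-1}\bm{W_\mathrm{fu}}\bm{u}$ and adding the residual covariance of $p(\bm{f_\mathrm{*}}\mid\bm{W_\mathrm{fu}}\bm{u})$ produces exactly $\bm{\mu_p}$ and $\bm{\Sigma_p}$ as stated, using $\bm{W_\mathrm{uf}}=\bm{W_\mathrm{fu}}^{\top}$.

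The main obstacle is the $o_p(m)$ control of the Taylor remainder, because this is the step that is not purely algebraic. The cubic term $R(\bm{u})$ in the expansion of $\log p(\bm{y}\mid\bm{W_\mathrm{fu}}\bm{u})$ must be integrated against $\exp(-\tfrac{1}{2}(\bm{u}-\bm{\bar{y}})^{\top}\bm{V_\mathrm{uu}}^{-1}(\bm{u}-\bm{\bar{y}}))p(\bm{u})$, and the asymptotic regime in which this is negligible is the one where the per-cluster counts $n_j$ diverge so that $\bm{V_\mathrm{uu}}\to\bm{0}$ and the Gaussian factor concentrates sharply at $\bm{\bar{y}}$. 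Inside a shrinking neighbourhood of $\bm{\bar{y}}$ the remainder is bounded by a constant times $\lvert\bm{u}-\bm{\bar{y}}\rvert^{3}$, whose Gaussian expectation is $O(\operatorname{tr}(\bm{V_\mathrm{uu}})^{3/2})$; outside that neighbourhood, \cref{assumption_MLE} together with the boundedness of the likelihood gradient (assumed in the Preliminaries) gives a uniformly negligible tail contribution via standard Gaussian tail bounds, following the pattern of Bayesian asymptotic arguments such as those in \citep{Watanabe}. The posterior convergence in the second part then follows by the same Taylor argument applied to the numerator and denominator of \cref{input2} and by continuous-mapping-style ratios, since both are dominated by the same Gaussian concentration.
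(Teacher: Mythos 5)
Your proposal follows essentially the same route as the paper's proof: a second-order Taylor expansion of $\log p(\bm{y}\mid\bm{W_\mathrm{fu}}\bm{u})$ at $\bm{\bar{y}}$ (linear term killed by \cref{assumption_MLE}, Hessian given by \cref{assumption_likelihood}), recognition of the resulting Gaussian pseudo-likelihood so that the $\bm{u}$-integral reduces to the quasi-likelihood computation, and then the standard linear-Gaussian marginalization through $\bm{u}\mapsto\bm{K_\mathrm{*f}}\bm{K_\mathrm{ff}}^{-1}\bm{W_\mathrm{fu}}\bm{u}$ to obtain $\bm{\mu_p}$ and $\bm{\Sigma_p}$. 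Your discussion of controlling the $o_p(m)$ remainder in the regime where the per-cluster counts diverge is actually more explicit than the paper, which simply asserts the $o_p(m)$ term without further argument.
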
 
\begin{proof}
  See \cref{Lap}. 
\end{proof} 
\begin{theorem} 
  \label{approx_matrix}
  Suppose that $(\bm{V_\mathrm{uu}}^{-1} + {\bm{K_\mathrm{uu}}}^{-1})^{-1}, \bm{K_\mathrm{uu}}^{-1}$, $\bm{K_\mathrm{u*}}, \bm{K_\mathrm{ff}}^{-1}, \bm{W_\mathrm{fu}}$, and $\bm{V_\mathrm{uu}}^{-1} \bm{\bar{y}}$ become $O(\beta)$ when multiplied by a matrix of $O(\beta)$.  
  Then we have 
  \begin{align} 
  \label{posterior_mu_sigma}
  \bm{\mu_p} - \bm{\mu_q} = O(\beta),~~\bm{\Sigma_p} - \bm{\Sigma_q} = O(\beta + m \beta^2). 
  \end{align}   
\end{theorem}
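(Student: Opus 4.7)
The plan is to align the two sets of posterior moments into a common algebraic form and then propagate the entrywise kernel perturbation bounds, using the stated norm-compatibility assumptions to absorb $\bm{K_\mathrm{ff}}^{-1}$ and $\bm{K_\mathrm{uu}}^{-1}$.

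First, I would apply the push-through identity
\begin{align}
(\bm{K_\mathrm{uu}} + \bm{V_\mathrm{uu}})^{-1} = \bm{K_\mathrm{uu}}^{-1}(\bm{V_\mathrm{uu}}^{-1} + \bm{K_\mathrm{uu}}^{-1})^{-1}\bm{V_\mathrm{uu}}^{-1},
\end{align}
which follows from factoring $\bm{K_\mathrm{uu}}+\bm{V_\mathrm{uu}}=\bm{V_\mathrm{uu}}(\bm{V_\mathrm{uu}}^{-1}+\bm{K_\mathrm{uu}}^{-1})\bm{K_\mathrm{uu}}$, together with the Woodbury identity $(\bm{K_\mathrm{uu}}+\bm{V_\mathrm{uu}})^{-1}=\bm{K_\mathrm{uu}}^{-1}-\bm{K_\mathrm{uu}}^{-1}(\bm{V_\mathrm{uu}}^{-1}+\bm{K_\mathrm{uu}}^{-1})^{-1}\bm{K_\mathrm{uu}}^{-1}$ for the covariance. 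Substituting into $\bm{\mu_q}$ and $\bm{\Sigma_q}$ yields expressions sharing the same central factor $(\bm{V_\mathrm{uu}}^{-1}+\bm{K_\mathrm{uu}}^{-1})^{-1}$ as $\bm{\mu_p}$ and $\bm{\Sigma_p}$, so the problem reduces to bounding
$\bm{K_\mathrm{*f}}\bm{K_\mathrm{ff}}^{-1}\bm{W_\mathrm{fu}}-\bm{K_\mathrm{*u}}\bm{K_\mathrm{uu}}^{-1}$ for the mean, and in addition
$\bm{K_\mathrm{*f}}\bm{K_\mathrm{ff}}^{-1}\bm{K_\mathrm{f*}}-\bm{K_\mathrm{*u}}\bm{K_\mathrm{uu}}^{-1}\bm{K_\mathrm{u*}}$ for the covariance.

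Next I would derive the elementary entrywise perturbation identities implied by the definition of $\beta$: $\bm{K_\mathrm{fu}}-\bm{W_\mathrm{fu}}\bm{K_\mathrm{uu}}=O(\beta)$ and $\bm{K_\mathrm{*f}}-\bm{K_\mathrm{*u}}\bm{W_\mathrm{uf}}=O(\beta)$ (both immediate since the relevant comparisons differ only by one input being swapped for its centroid). Right multiplying the first by $\bm{K_\mathrm{uu}}^{-1}$ and invoking the listed property of $\bm{K_\mathrm{uu}}^{-1}$ gives $\gamma=O(\beta)$, hence $\bm{W_\mathrm{fu}}=\bm{K_\mathrm{fu}}\bm{K_\mathrm{uu}}^{-1}+O(\beta)$. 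Combined with Lemma~\ref{kernel2}, which under the same assumption now reads $\bm{K_\mathrm{ff}}-\bm{K_\mathrm{fu}}\bm{K_\mathrm{uu}}^{-1}\bm{K_\mathrm{uf}}=O(\beta)$, these are the only perturbation facts the proof needs.

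To obtain the mean bound, I would write $\bm{K_\mathrm{*f}}\bm{K_\mathrm{ff}}^{-1}\bm{W_\mathrm{fu}}=\bm{K_\mathrm{*f}}\bm{K_\mathrm{ff}}^{-1}\bm{K_\mathrm{fu}}\bm{K_\mathrm{uu}}^{-1}+\bm{K_\mathrm{*f}}\bm{K_\mathrm{ff}}^{-1}\cdot O(\beta)$, use the $\bm{K_\mathrm{ff}}^{-1}$ absorption to collapse the remainder to $O(\beta)$, and then show $\bm{K_\mathrm{*f}}\bm{K_\mathrm{ff}}^{-1}\bm{K_\mathrm{fu}}=\bm{K_\mathrm{*u}}+O(\beta)$ by inserting $\bm{K_\mathrm{fu}}=\bm{W_\mathrm{fu}}\bm{K_\mathrm{uu}}+O(\beta)$ and $\bm{K_\mathrm{*f}}=\bm{K_\mathrm{*u}}\bm{W_\mathrm{uf}}+O(\beta)$ and relying on $\bm{K_\mathrm{ff}}^{-1}(\bm{K_\mathrm{fu}}\bm{K_\mathrm{uu}}^{-1}\bm{K_\mathrm{uf}})=\bm{I}+O(\beta)$ from Lemma~\ref{kernel2}. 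Multiplying by the right-hand factor $(\bm{V_\mathrm{uu}}^{-1}+\bm{K_\mathrm{uu}}^{-1})^{-1}\bm{V_\mathrm{uu}}^{-1}\bm{\bar{y}}$ and invoking the listed properties in sequence delivers $\bm{\mu_p}-\bm{\mu_q}=O(\beta)$.

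The covariance bound uses the same ingredients, but because $\bm{\Sigma_p}-\bm{\Sigma_q}$ contains an $O(\beta)$ perturbation on each side of $(\bm{V_\mathrm{uu}}^{-1}+\bm{K_\mathrm{uu}}^{-1})^{-1}$, the cross term contributes an extra product of two $O(\beta)$ matrices summed over the $m$ inner indices, producing the $O(m\beta^2)$ contribution; the remaining pieces stay at $O(\beta)$ by the same collapse arguments, yielding $\bm{\Sigma_p}-\bm{\Sigma_q}=O(\beta+m\beta^2)$. The main obstacle is bookkeeping around $\bm{K_\mathrm{ff}}^{-1}$, which is near-singular as $\beta\to 0$: one must always pair $\bm{K_\mathrm{ff}}^{-1}$ with a small factor before it can be combined with an unlisted matrix such as $\bm{K_\mathrm{*f}}$, and must carefully identify the single $m$-fold summation that produces the $m\beta^2$ term so that no further factors of $m$ or $n$ accumulate.
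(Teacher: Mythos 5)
Your overall strategy---aligning $\bm{\mu_q}, \bm{\Sigma_q}$ with $\bm{\mu_p}, \bm{\Sigma_p}$ via the push-through/Woodbury identity and then propagating the entrywise $O(\beta)$ kernel perturbations through the listed absorption assumptions---is exactly the paper's, and your identification of the single $m$-fold inner summation as the source of the $m\beta^2$ term in the covariance is also correct. However, two specific steps in your route do not go through as written. First, you quote \cref{kernel2} as giving $\bm{K_\mathrm{ff}}-\bm{K_\mathrm{fu}}\bm{K_\mathrm{uu}}^{-1}\bm{K_\mathrm{uf}}=O(\beta)$; even granting your observation that $\gamma=O(\beta)$ under the hypotheses, the lemma only yields $O(\beta+m\beta\gamma)=O(\beta+m\beta^2)$. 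Since your mean bound routes through $\bm{K_\mathrm{ff}}^{-1}(\bm{K_\mathrm{fu}}\bm{K_\mathrm{uu}}^{-1}\bm{K_\mathrm{uf}})=\bm{I}+O(\beta)$, the error you actually obtain for $\bm{\mu_p}-\bm{\mu_q}$ is $O(\beta+m\beta^2)$, strictly weaker than the claimed $O(\beta)$ unless $m\beta=O(1)$. The paper does not invoke \cref{kernel2} here at all: it starts from the exact identity $\bm{K_\mathrm{ff}}\bm{K_\mathrm{ff}}^{-1}\bm{W_\mathrm{fu}}=\bm{W_\mathrm{fu}}$, substitutes $\bm{K_\mathrm{ff}}=\bm{W_\mathrm{fu}}\bm{K_\mathrm{uu}}\bm{W_\mathrm{uf}}+O(\beta)$, and extracts $\bm{W_\mathrm{uf}}\bm{K_\mathrm{ff}}^{-1}\bm{W_\mathrm{fu}}=\bm{K_\mathrm{uu}}^{-1}+O(\beta)$ with a single $O(\beta)$ remainder; this is the auxiliary fact you need in place of the $\gamma$-and-\cref{kernel2} detour.

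Second, your order of substitution for the mean leaves the term $\bm{K_\mathrm{*f}}\bm{K_\mathrm{ff}}^{-1}\cdot O(\beta)$: after absorbing $\bm{K_\mathrm{ff}}^{-1}$ you are left with $\bm{K_\mathrm{*f}}$ times an $O(\beta)$ matrix, and $\bm{K_\mathrm{*f}}$ is not among the matrices assumed to absorb $O(\beta)$---it is $n_\mathrm{*}\times n$, so entrywise this product is a priori $O(n\beta)$. You flag the $\bm{K_\mathrm{ff}}^{-1}$ bookkeeping as the main obstacle, but the cure is the reverse of the ordering you sketch: the paper substitutes $\bm{K_\mathrm{*f}}=\bm{K_\mathrm{*u}}\bm{W_\mathrm{uf}}+O(\beta)$ \emph{first}, so that every $O(\beta)$ remainder sits to the left of a product consisting only of listed matrices ($\bm{K_\mathrm{ff}}^{-1}$, $\bm{W_\mathrm{fu}}$, $(\bm{V_\mathrm{uu}}^{-1}+\bm{K_\mathrm{uu}}^{-1})^{-1}$, $\bm{V_\mathrm{uu}}^{-1}\bm{\bar{y}}$) and the unlisted $\bm{K_\mathrm{*f}}$ never multiplies a small factor. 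With these two repairs your sketch reproduces the paper's proof.
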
 
\begin{proof}
    See \cref{proof_approx_matrix}. 
\end{proof} 
\Cref{Lap_theorem} describes the asymptotic behavior of the marginal likelihood and posterior distribution. 
This approximation performs well when $n$ is sufficiently larger than $m$.  
Since the right side of \cref{E_likelihood} does not depend on the hyperparameters of a covariance function, we can employ $\mathcal{Q}$ to optimize $\mathcal{E}$.  
Regarding the posterior distribution, we cannot compute $\mathcal{N} (\bm{f_\mathrm{*}}; \bm{\mu_p}, \bm{\Sigma_p})$ given only summarized data since it contains $\bm{K_\mathrm{ff}}$ and $\bm{K_\mathrm{*f}}$.  
\Cref{approx_matrix} allows us to avoid the computation of them.  
From \cref{posterior_mu_sigma}, the sample quasi-likelihood can be used for a smaller range of inputs. 
\Cref{example_outputs} illustrates that the approximation errors of the marginal likelihood and posterior distribution are affected by the relative granularity of summarized data with respect to the length scale of the covariance functions. 
Their behavior largely aligns with the dynamics of $\eta$ in \cref{example_inputs}.  
While the input domain of this toy model is one-dimensional, this behavior is expected to persist in higher-dimensional data. 
However, it is important to note that the range of inputs expands as the input dimension increases.

\begin{figure}[t]
  \begin{center}
      \includegraphics[width=82.5mm]{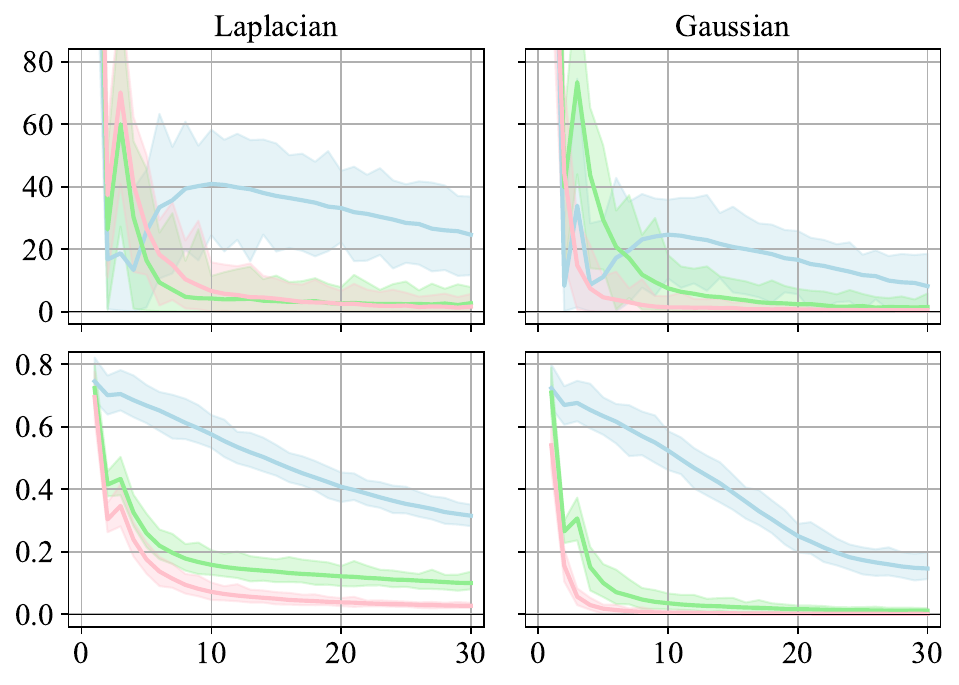}
  \end{center}
  \caption{
  Behavior of marginal likelihood and posterior distribution in a toy model. 
  The vertical axis in the upper figure represents the absolute difference between $\mathcal{Q} + \log p(\bm{y} \mid \bm{W_\mathrm{fu}} \bm{u}) |_{\bm{u} = \bm{\bar{y}}} + \frac{m}{2} \log (2 \pi) + \frac{1}{2} \log \lvert \bm{V_\mathrm{uu}}\rvert$ and $\mathcal{L}$.   
  The vertical axis in the lower figure represents the root mean squared error (RMSE) between $\bm{K_\mathrm{*f}} (\bm{K_\mathrm{ff}} + \diag(1, \cdots, 1))^{-1} \bm{y}$ and $\bm{\mu_q}$. 
  Each solid line represents the average over $100$ trials, with the shaded region in each color showing the range between the maximum and minimum values. 
  The $n_\mathrm{*} = 1000$ new inputs were uniformly generated within $[0, 2 \pi]$.  
  Each output $y_i$ was generated from $\mathcal{N} (\sin(\bm{x}_i), 1)$. 
  The summary statistic is the sample mean. 
  The likelihood function is $\prod_{i=1}^n \mathcal{N} (y_i; f(\bm{x}_i), 1)$. 
  Other displays and conditions are identical to those in \cref{example_inputs}. 
  }
  \label{example_outputs}
\end{figure} 

\begin{table*}[t]
  \centering
  \begin{threeparttable}[]
  \begin{tabular}{ccccccccc} 
  \toprule
  \multirow{2.5}{*}{Output} & \multirow{2.5}{*}{Covariance}  & \multicolumn{6}{c}{Grid Size}  \\ 	
  \cmidrule(lr){3-8} &  & $1.6 \times 1.6$ & $0.8 \times 0.8$ & $0.4 \times 0.4$ & $0.2 \times 0.2$ & $0.1 \times 0.1$ & $0.05 \times 0.05$  \\ 
  \midrule 
  MedInc & Laplacian & $0.57 \pm 0.04$ & $0.60 \pm 0.05$ & $0.59 \pm 0.05$ & $0.54 \pm 0.05$ & $0.44 \pm 0.05$ & $0.32 \pm 0.05$ \\ 
  MedInc & Gaussian & $0.56 \pm 0.09$ & $0.59 \pm 0.11$ & $0.57 \pm 0.12$ & $0.50 \pm 0.14$ & $0.39 \pm 0.12$ & $0.28 \pm 0.10$ \\ 
  \midrule 
  HouseAge & Laplacian & $0.63 \pm 0.03$ & $0.57 \pm 0.03$ & $0.52 \pm 0.04$ & $0.41 \pm 0.04$ & $0.30 \pm 0.04$ & $0.19 \pm 0.03$ \\ 
  HouseAge & Gaussian & $0.62 \pm 0.04$ & $0.57 \pm 0.04$ & $0.54 \pm 0.05$ & $0.40 \pm 0.06$ & $0.28 \pm 0.07$ & $0.17 \pm 0.06$ \\ 
  \midrule 
  AveRooms & Laplacian & $0.62 \pm 0.12$ & $0.59 \pm 0.11$ & $0.54 \pm 0.11$ & $0.49 \pm 0.09$ & $0.40 \pm 0.10$ & $0.29 \pm 0.08$ \\ 
  AveRooms & Gaussian & $0.62 \pm 0.16$ & $0.63 \pm 0.16$ & $0.61 \pm 0.18$ & $0.59 \pm 0.23$ & $0.48 \pm 0.26$ & $0.32 \pm 0.24$ \\ 
  \midrule 
  AveBedrms & Laplacian & $0.62 \pm 0.15$ & $0.58 \pm 0.15$ & $0.51 \pm 0.12$ & $0.42 \pm 0.10$ & $0.32 \pm 0.13$ & $0.22 \pm 0.10$ \\ 
  AveBedrms & Gaussian & $0.77 \pm 0.23$ & $0.77 \pm 0.24$ & $0.75 \pm 0.22$ & $0.71 \pm 0.27$ & $0.59 \pm 0.30$ & $0.44 \pm 0.33$ \\ 
  \midrule 
  Population & Laplacian & $0.26 \pm 0.10$ & $0.26 \pm 0.10$ & $0.25 \pm 0.11$ & $0.25 \pm 0.12$ & $0.22 \pm 0.11$ & $0.19 \pm 0.09$ \\ 
  Population & Gaussian & $0.16 \pm 0.12$ & $0.16 \pm 0.12$ & $0.15 \pm 0.13$ & $0.14 \pm 0.13$ & $0.13 \pm 0.13$ & $0.11 \pm 0.12$ \\ 
  \midrule 
  AveOccup & Laplacian & $0.47 \pm 0.17$ & $0.47 \pm 0.18$ & $0.46 \pm 0.18$ & $0.45 \pm 0.17$ & $0.41 \pm 0.16$ & $0.30 \pm 0.12$ \\ 
  AveOccup & Gaussian & $0.46 \pm 0.33$ & $0.46 \pm 0.33$ & $0.45 \pm 0.34$ & $0.45 \pm 0.33$ & $0.43 \pm 0.32$ & $0.36 \pm 0.32$ \\ 
  \midrule 
  MedValue & Laplacian & $0.79 \pm 0.03$ & $0.78 \pm 0.04$ & $0.74 \pm 0.05$ & $0.63 \pm 0.05$ & $0.47 \pm 0.05$ & $0.32 \pm 0.04$ \\ 
  MedValue & Gaussian & $0.85 \pm 0.08$ & $0.84 \pm 0.11$ & $0.75 \pm 0.14$ & $0.58 \pm 0.20$ & $0.41 \pm 0.18$ & $0.27 \pm 0.13$ \\ 
  \bottomrule 
  \end{tabular}
  \end{threeparttable} 
  \caption{
  Approximation performance. 
  To compare our approximation with the original regression, we present the mean and standard deviation of the RMSE between the mean vectors of their posterior distributions over $100$ trials. 
  The RMSE was normalized by dividing it by the standard deviation of the training data outputs. 
  Let the likelihood be Gaussian and let $n = 1000$. 
  }
  \label{performance1}
\end{table*} 

The discussion so far supports the use of the sample quasi-likelihood for learning from summarized data. 
Addressing non-Gaussian likelihoods with summarized data presents a significant challenge, just as it does in Gaussian process regression with complete data. 
Our approach tackles this challenge by specifying an appropriate variance function, ensuring that \cref{assumption_likelihood} is satisfied for a implicit likelihood function. 
Furthermore, the sample quasi-likelihood allows for the use of various summary statistics that satisfy \cref{assumption_MLE}. 
This assumption requires that the summary statistic corresponds to the maximum likelihood estimate of the implicit likelihood function. 
For example, the sample median serves as the maximum likelihood estimate of the location parameter in the Laplace distribution. 
Our approximation performs poorly when the likelihood function associated with the variance function and summary statistic is flat. 
The concept of sample quasi-likelihood is analogous to the quasi-likelihood proposed by \citep{Wedderburn, Wedderburn2}. 
While our approach also benefits from specifying a variance function instead of a likelihood function, its motivation and definition differ slightly. 
The formulation of the sample quasi-likelihood provides a computational advantage by expressing the marginal likelihood and posterior distribution in closed form. 
The comparison between both approaches is presented in \cref{QL_Description}. 

\begin{table*}[t]
  \centering
  \begin{threeparttable}[]
  \begin{tabular}{ccccccccc} 
  \toprule
  \multirow{2.5}{*}{Output} & \multirow{2.5}{*}{Likelihood} & \multicolumn{2}{c}{Complete VI} & \multicolumn{2}{c}{Summarized VI} & \multicolumn{2}{c}{Our Approach} \\ 	
  \cmidrule(lr){3-4} \cmidrule(lr){5-6} \cmidrule(lr){7-8} &  & RMSE & Time & RMSE & Time & RMSE & Time \\ 
  \midrule 
  MedInc & Gaussian & $0.881 \pm 0.014$ & $41 \pm 4$ & $1.015 \pm 0.015$ & $30 \pm 14$ & $\bm{0.970 \pm 0.004}$ & $\bm{6 \pm 1}$ \\ 
  MedInc & Poisson & $0.887 \pm 0.024$ & $59 \pm 14$ & $1.014 \pm 0.016$ & $59 \pm 12$ & $\bm{0.977 \pm 0.008}$ & $\bm{6 \pm 2}$ \\ 
  \midrule 
  HouseAge & Gaussian & $0.880 \pm 0.049$ & $39 \pm 11$ & $\bm{0.970 \pm 0.024}$ & $29 \pm 17$ & $1.000 \pm 0.000$ & $\bm{4 \pm 0}$ \\ 
  HouseAge & Poisson & $0.861 \pm 0.049$ & $55 \pm 18$ & $\bm{0.966 \pm 0.021}$ & $51 \pm 20$ & $1.032 \pm 0.060$ & $\bm{7 \pm 2}$ \\ 
  \midrule 
  AveRooms & Gaussian & $0.988 \pm 0.031$ & $15 \pm 12$ & $0.995 \pm 0.014$ & $12 \pm 12$ & $\bm{0.975 \pm 0.051}$ & $\bm{10 \pm 3}$ \\ 
  AveRooms & Poisson & $0.990 \pm 0.031$ & $19 \pm 19$ & $0.995 \pm 0.015$ & $19 \pm 20$ & $\bm{0.949 \pm 0.012}$ & $\bm{8 \pm 1}$ \\ 
  \midrule 
  AveBedrms & Gaussian & $0.989 \pm 0.029$ & $15 \pm 13$ & $0.996 \pm 0.012$ & $14 \pm 12$ & $\bm{0.964 \pm 0.041}$ & $\bm{12 \pm 3}$ \\ 
  AveBedrms & Poisson & $0.927 \pm 0.021$ & $62 \pm 9$ & $\bm{0.941 \pm 0.016}$ & $61 \pm 11$ & $0.962 \pm 0.044$ & $\bm{10 \pm 3}$ \\ 
  \midrule 
  Population & Gaussian & $1.242 \pm 0.224$ & $5 \pm 3$ & $1.041 \pm 0.085$ & $18 \pm 15$ & $\bm{1.000 \pm 0.000}$ & $\bm{1 \pm 0}$ \\ 
  Population & Poisson & $0.990 \pm 0.009$ & $45 \pm 26$ & $1.000 \pm 0.005$ & $37 \pm 24$ & $1.004 \pm 0.013$ & $\bm{6 \pm 1}$ \\ 
  \midrule 
  AveOccup & Gaussian & $1.002 \pm 0.008$ & $12 \pm 11$ & $1.002 \pm 0.006$ & $10 \pm 10$ & $1.002 \pm 0.006$ & $\bm{6 \pm 2}$ \\ 
  AveOccup & Poisson & $1.021 \pm 0.141$ & $59 \pm 23$ & $1.015 \pm 0.063$ & $43 \pm 15$ & $\bm{1.001 \pm 0.002}$ & $\bm{6 \pm 2}$ \\ 
  \midrule 
  MedValue & Gaussian & $0.699 \pm 0.009$ & $41 \pm 2$ & $1.008 \pm 0.015$ & $40 \pm 2$ & $\bm{0.904 \pm 0.010}$ & $\bm{6 \pm 1}$ \\ 
  MedValue & Poisson & $0.746 \pm 0.033$ & $59 \pm 15$ & $0.973 \pm 0.038$ & $58 \pm 14$ & $\bm{0.909 \pm 0.011}$ & $\bm{5 \pm 0}$ \\ 
  \bottomrule 
  \end{tabular}
  \end{threeparttable} 
  \caption{
  Prediction performance. 
  The mean and standard deviation of the RMSE and execution time [s] over $100$ trials are shown. 
  The RMSE between predictors and test outputs was normalized by dividing it by the standard deviation of the test data outputs. 
  The prediction was obtained by applying the mapping $g$ to the mean vector of the posterior distribution of $\bm{f_\mathrm{*}}$. 
  The execution time was measured from the start of learning to the end of prediction.
  The covariance function was the Gaussian.  
  The grid size was $0.4 \times 0.4$. 
  Let $n = 10000$. 
  The displays are bolded when the mean loss of the summarized VI and our method differs by greater than $0.01$. 
  Similarly, bolding is applied when the difference in the mean execution time exceeds $1$ second. 
  The results of the $0.8 \times 0.8$ grid size can be found in \cref{prediction_performance}. 
  }
  \label{performance2}
\end{table*}

\section{Spatial Modeling} 
\label{Experimental}
We investigate the usage of our method in spatial modeling tasks, using the California housing dataset \footnote{https://lib.stat.cmu.edu/datasets/}. 
Our approach leverages the input approximation via representative features and utilizes the sample quasi-likelihood. 
We evaluate its performance by comparing it with Gaussian process regression given complete data. 
To compute the exact marginal likelihood and posterior distribution, we used a Gaussian likelihood in this comparison. 
We also perform predictive testing with the Gaussian and Poisson likelihoods. 

\textbf{Summarized Data.} 
The dataset was randomly shuffled and then split to obtain the $n$ training data points and $n_{\mathrm{*}} = 20640 - n$ test data points. 
The inputs were the pairs of latitude and longitude included among the attributes.  
The seven remaining attributes were used as outputs, respectively. 
The domain $[32.54, 41.95] \times [- 124.35, - 114.31]$ in latitude and longitude was divided into girds. 
The data points within each grid were summarized, with the centers of the grids serving as the representative features, and the summary statistics being the sample means. 
Assume that the sample variances are also provided and that the distribution of input data points for learning is unavailable. 

\textbf{Functions.} 
The likelihood function is presented in \cref{example_likelihood}. 
Let $g(\bm{x})$ denote $f(\bm{x})$ for the Gaussian, and $\exp(\bm{x})$ for the Poisson. 
We replace the summary statistics $\bm{\bar{y}}$ with $(g^{-1}(\bar{y}_i))_{i=1}^m$. 
This replacement makes the likelihood functions consistent with \cref{assumption_MLE}. 
Similarly, we specified the variance functions based on the likelihoods, ensuring \cref{assumption_likelihood} is satisfied. 
The mean function was $\tau(\cdot) = g^{-1}(\frac{1}{n} \sum_{i=1}^n y_i)$. 
The covariance functions were designed by multiplying a constant kernel with the functions used in \cref{example_inputs} and adding white noise. 

\textbf{Hyperparameters.} 
The hyperparameters of the covariance functions were optimized using the L-BFGS-B algorithm, starting from an initial value of $1$.   
This optimization was implemented with version 1.7.3 of the SciPy software \footnote{https://scipy.org/}, with the parameters set to their default values. 
We used a 64-bit Windows machine with the Intel Core i9-9900K @ $3.60$ GHz and $64$ GB of RAM. 
The code was implemented in the python programming language 3.7.3 version.  
As a hyperparameter, we also optimized the variance $\sigma \in (0, \infty)$ of the Gaussian likelihood. 
In this case, the right side of \cref{E_likelihood} contains the hyperparameter. 
Therefore, we employed $\mathcal{E}$ instead of $\mathcal{Q}$ for the Gaussian likelihood, which can be computed using the sample mean and sample variances.    

\textbf{Baselines.} 
For the prediction test baselines, we adopted the sparse variational inference method \citep{Hensman, Zoubin} implemented in version 1.12.0 of the GPy software \footnote{https://sheffieldml.github.io/GPy/}. 
The parameters in GPy were set to their default values. 
The initial locations of $\lfloor\sqrt{m^3 n^{-1}}\rfloor$ inducing points were randomly allocated from $m$ representative locations. 
Two patterns using variational inference were attempted. 
One involved the standard learning and inference with complete data. 
In the other, the inputs were replaced with representative locations (i.e., $(\bm{z}_{\omega_i})_{i=1}^n$). 
The complete outputs were used in both cases, contrasting with our setting where only summary statistics are available. 
We refer to these as the complete VI and summarized VI, respectively. 
The other settings were identical to those in our approach. 

\textbf{Evaluation.} 
\Cref{performance1} shows that the granularity of summarized data strongly impact the approximation performance of our approach.  
This result is consistent with the theoretical findings and the simple simulation using the toy model. 
The superiority or inferiority of the two covariance functions depended on the output attributes, even though the same inputs were used. 
The comparison between the complete VI and summarized VI in \cref{performance2} suggests a negative impact of the coarse inputs. 
Our approach offered advantages in both loss and execution time compared to the summarized VI. 
For the AveRooms, specifying the variance function corresponding to the Poisson distribution was beneficial. 

\section{Conclusion} 
\label{conclusion}
This study focuses on learning and inference when only summarized data is available. 
The introduction of the sample quasi-likelihood facilitates them, including in cases where the likelihood is non-Gaussian. 
Our approach is straightforward to implement and incurs low computational costs. 
We have demonstrated the approximation errors of the marginal likelihood and posterior distribution, both theoretically and experimentally. 
This analysis highlights when our approach works well. 
In particular, the approximation performance depends on the granularity relative to the length scale of covariance functions.  
Experiments on spatial modeling show the impact of using summarized data instead of complete data. 
The feasibility of our approach was supported by the series of discussions in this study. 

\textbf{Limitations.} 
The input approximation does not perform well when the granularity of summarized data is rough. 
Since our approach relies on the Laplace approximation applied to each summarized group, the prediction performance for non-Gaussian observations deteriorates as the number of data points decreases. 
Further investigation is needed for extending our approach to cases where statistics other than the sample mean are available. 
The proportional constants in \cref{eta} and \cref{eta2} contain the hyperparameters of a covariance function, as shown in \cref{bound1} and \cref{bound2}. 
The theoretical behavior of our approximation concerning the hyperparameters requires further analysis. 

\textbf{Future Works.} 
Spatial modeling is valuable for decision-making in many fields, but a significant concern is protecting privacy. 
While our discussion in this study is primarily from the perspective of a data user, the design of datasets to maintain confidentiality, such as through differential privacy \citep{Dwork}, is also important. 
Our approach potentially contributes to encouraging discussions on the granularity of summarized data, considering the trade-offs between usability and confidentiality. 
Additionally, optimizing representative locations and data point assignments to improve approximation performance is an appealing direction for future work.

\bigskip
\bibliography{AAAI2025_GP}

\begin{thebibliography}{21}
\providecommand{\natexlab}[1]{#1}

\bibitem[{Beaumont(2019)}]{Beaumont}
Beaumont, M.~A. 2019.
\newblock Approximate {B}ayesian Computation.
\newblock \emph{Annual Review of Statistics and Its Application}, 6: 379--403.

\bibitem[{Beranger, Lin, and Sisson(2022)}]{Boris}
Beranger, B.; Lin, H.; and Sisson, S. 2022.
\newblock New models for symbolic data analysis.
\newblock \emph{Advances in Data Analysis and Classification}, 17: 659--699.

\bibitem[{Besag(1975)}]{Besag}
Besag, J. 1975.
\newblock Spatial interaction and the statistical analysis of lattice systems.
\newblock \emph{Journal of the Royal Statistical Society: Series D}, 24:
  179--195.

\bibitem[{Cressie(1993)}]{Noel}
Cressie, N. A.~C. 1993.
\newblock \emph{Statistics for Spatial Data}.
\newblock John Wiley and Sons, Inc.

\bibitem[{Dwork et~al.(2006)Dwork, McSherry, Nissim, and Smith}]{Dwork}
Dwork, C.; McSherry, F.; Nissim, K.; and Smith, A. 2006.
\newblock Calibrating Noise to Sensitivity in Private Data Analysis.
\newblock In \emph{Proceedings of the 3th Theory of Cryptography}, 265--284.

\bibitem[{Feragen, Lauze, and Hauberg(2014)}]{Aasa}
Feragen, A.; Lauze, F.; and Hauberg, S. 2014.
\newblock Geodesic Exponential Kernels: When Curvature and Linearity Conflict.
\newblock In \emph{Proceedings of the 28th Conference on Computer Vision and
  Pattern Recognition}, 3032--3042.

\bibitem[{Hensman, Fusi, and Lawrence(2013)}]{Hensman}
Hensman, J.; Fusi, N.; and Lawrence, N.~D. 2013.
\newblock {G}aussian Processes for Big data.
\newblock In \emph{Proceedings of the 29th International Conference on
  Uncertainty in Artificial Intelligence}, 282--290.

\bibitem[{Hensman, Matthews, and Ghahramani(2015)}]{Zoubin}
Hensman, J.; Matthews, A.; and Ghahramani, Z. 2015.
\newblock Scalable Variational {G}aussian Process Classification.
\newblock In \emph{Proceedings of the 18th International Conference on
  Artificial Intelligence and Statistics}, 351--360.

\bibitem[{Law et~al.(2018)Law, Sejdinovic, Cameron, Lucas, Flaxman, Battle, and
  Fukumizu}]{Chung}
Law, H. C.~L.; Sejdinovic, D.; Cameron, E.; Lucas, T.~C.; Flaxman, S.; Battle,
  K.; and Fukumizu, K. 2018.
\newblock Variational Learning on Aggregate Outputs with {G}aussian Processes.
\newblock In \emph{Proceedings of the 32th International Conference on Neural
  Information Processing Systems}, 6084--6094.

\bibitem[{Lee et~al.(2018)Lee, Bahri, Novak, Schoenholz, Pennington, and
  Sohl-Dickstein}]{Jaehoon}
Lee, J.; Bahri, Y.; Novak, R.; Schoenholz, S.~S.; Pennington, J.; and
  Sohl-Dickstein, J. 2018.
\newblock Deep Neural Networks as {G}aussian Processes.
\newblock In \emph{Proceedings of the 6th International Conference on Learning
  Representations}.

\bibitem[{Minka(2001)}]{Minka}
Minka, T.~P. 2001.
\newblock \emph{A Family of Algorithms for Approximate {B}ayesian Inference}.
\newblock Ph.D. thesis, Massachusetts Institute of Technology.

\bibitem[{Price et~al.(2018)Price, Drovandi, Lee, and Nott}]{Price}
Price, L.~F.; Drovandi, C.~C.; Lee, A.; and Nott, D.~J. 2018.
\newblock {B}ayesian synthetic likelihood.
\newblock \emph{Journal of Computational and Graphical Statistics}, 27(1):
  1--11.

\bibitem[{Rasmussen and Williams(2006)}]{Carl}
Rasmussen, C.~E.; and Williams, C. K.~I. 2006.
\newblock \emph{{G}aussian Processes for Machine Learning}.
\newblock MIT Press.

\bibitem[{Sheth, Wang, and Khardon(2015)}]{Roni}
Sheth, R.; Wang, Y.; and Khardon, R. 2015.
\newblock Sparse Variational Inference for Generalized {G}aussian Process
  Models.
\newblock In \emph{Proceedings of the 32th International Conference on Machine
  Learning}, volume~37, 1302--1311.

\bibitem[{Tanaka et~al.(2019)Tanaka, Tanaka, Iwata, Kurashima, Okawa, Akagi,
  and Toda}]{Yusuke}
Tanaka, Y.; Tanaka, T.; Iwata, T.; Kurashima, T.; Okawa, M.; Akagi, Y.; and
  Toda, H. 2019.
\newblock Spatially Aggregated {G}aussian Processes with Multivariate Areal
  Outputs.
\newblock In \emph{Proceedings of the 33th International Conference on Neural
  Information Processing Systems}, 3005--3015.

\bibitem[{Varin, Reid, and Firth(2011)}]{Cristiano}
Varin, C.; Reid, N.; and Firth, D. 2011.
\newblock An Overview of Composite Likelihood Methods.
\newblock \emph{Statistica Sinica}, 21: 5--42.

\bibitem[{Watanabe(2018)}]{Watanabe}
Watanabe, S. 2018.
\newblock \emph{Mathematical Theory of {B}ayesian Statistics}.
\newblock Chapman and Hall.

\bibitem[{Wedderburn(1974)}]{Wedderburn}
Wedderburn, R. W.~M. 1974.
\newblock Quasi-likelihood functions, generalized linear models, and the
  {G}auss―{N}ewton method.
\newblock \emph{Biometrika}, 61(3): 439--447.

\bibitem[{Wedderburn(1976)}]{Wedderburn2}
Wedderburn, R. W.~M. 1976.
\newblock On the Existence and Uniqueness of the Maximum Likelihood Estimates
  for Certain Generalized Linear Models.
\newblock \emph{Biometrika}, 63(1): 27--32.

\bibitem[{Williams and Barber(1998)}]{Christopher}
Williams, C. K.~I.; and Barber, D. 1998.
\newblock {B}ayesian Classification With {G}aussian Processes.
\newblock \emph{IEEE Transactions on Pattern Analysis and Machine
  Intelligence}, 20(12): 1342--1351.

\bibitem[{Wood(2010)}]{Simon}
Wood, S.~N. 2010.
\newblock Statistical inference for noisy nonlinear ecological dynamic systems.
\newblock \emph{Nature}, 466: 1102--1104.

\end{thebibliography}

\newpage 
\appendix
\onecolumn

\section{Notations}
\label{Description}

\begin{table*}[h]
  \centering
  \begin{threeparttable}[]
  \begin{tabular}{c|l} 
    \toprule
    Notation  & \multicolumn{1}{c}{Description}  \\ 	
    \midrule 
    $\bm{X}$ & $(\bm{x}_i)_{i=1}^{n}$ with $\bm{x}_i \in \mathcal{X}$ \\ 
    $\bm{y}$ & $(y_i)_{i=1}^{n}$ with $y_i \in \mathcal{Y}$ \\ 
    $\bm{f}$ & $(f(\bm{x}_i))_{i=1}^{n} \in \mathbb{R}^{n}$ \\ 
    $\bm{X_\mathrm{*}}$ & $(\bm{x}_i^\mathrm{*})_{i=1}^{n_\mathrm{*}}$ with $\bm{x}_i^\mathrm{*} \in \mathcal{X}$ \\ 
    $\bm{f_\mathrm{*}}$ & $(f(\bm{x}_i^\mathrm{*}))_{i=1}^{n_\mathrm{*}} \in \mathbb{R}^{n_\mathrm{*}}$ \\ 
    $\bm{K_\mathrm{ff}}$ & the Gram matrix of $(\bm{X}, \bm{X})$ \\ 
    $\bm{K_\mathrm{f*}}$ & the Gram matrix of $(\bm{X}, \bm{X_\mathrm{*}})$ \\  
    $\bm{K_\mathrm{**}}$ & the Gram matrix of $(\bm{X_\mathrm{*}}, \bm{X_\mathrm{*}})$ \\  
    $p(\bm{f})$ & $\mathcal{N} (\bm{f}; \bm{0}, \bm{K_\mathrm{ff}})$ \\  
    $p(\bm{y} \mid \bm{f})$ & a likelihood function of $\bm{f}$ \\   
    $p(\bm{f_\mathrm{*}} \mid \bm{f})$ & $\mathcal{N} (\bm{f_\mathrm{*}}; \bm{K_\mathrm{*f}} \bm{K_\mathrm{ff}}^{-1} \bm{f}, \bm{K_\mathrm{**}} - \bm{K_\mathrm{*f}} \bm{K_\mathrm{ff}}^{-1} \bm{K_\mathrm{f*}})$ \\ 
    $p(\bm{f_\mathrm{*}} \mid \bm{y})$ & $\exp(- \mathcal{L}) \int_{\mathbb{R}^n} p(\bm{f_\mathrm{*}} \mid \bm{f}) p(\bm{y} \mid \bm{f}) p(\bm{f}) d \bm{f}$ \\  
    $\mathcal{L}$ & $\log \int_{\mathbb{R}^n} p(\bm{y} \mid \bm{f}) p(\bm{f}) d \bm{f}$ \\  
    $\bm{Z}$ & $(\bm{z}_i)_{i=1}^{m}$ with $\bm{z}_i \in \mathcal{X}$ \\   
    $\bm{u}$ & $(f(\bm{z}_i))_{i=1}^m \in \mathbb{R}^{m}$ with $\bm{z}_i \in \mathcal{X}$ \\   
    $\bm{\bar{y}}$ & $(\bar{y}_i)_{i=1}^m$ with $\bar{y}_i \in \mathbb{R}$ depending only on the outputs with $\omega_i = j$ \\  
    $\bm{\omega}$ & $(\omega_i)_{i=1}^n$ with $\omega_i \in \{1, \cdots, m\}$ \\   
    $\bm{K_\mathrm{uu}}$ & the Gram matrix of $(\bm{Z}, \bm{Z})$  \\  
    $\bm{K_\mathrm{fu}}$ & the Gram matrix of $(\bm{X}, \bm{Z})$  \\   
    $\bm{K_\mathrm{u*}}$ & the Gram matrix of $(\bm{Z}, \bm{X_\mathrm{*}})$ \\ 
    $\bm{W_\mathrm{fu}}$ & $n \times m$ matrix with $[\bm{W_\mathrm{fu}}]_{ij} = 1$ if $\omega_i = j$; $[\bm{W_\mathrm{fu}}]_{ij} = 0$ otherwise \\  
    $\bm{\mu_p}$ & $\bm{K_\mathrm{*f}} \bm{K_\mathrm{ff}}^{-1} \bm{W_\mathrm{fu}} (\bm{V_\mathrm{uu}}^{-1} + {\bm{K_\mathrm{uu}}}^{-1})^{-1} \bm{V_\mathrm{uu}}^{-1} \bm{\bar{y}}$ \\  
    $\bm{\Sigma_p}$ & $\bm{K_\mathrm{**}} - \bm{K_\mathrm{*f}} \bm{K_\mathrm{ff}}^{-1} \bm{K_\mathrm{f*}} + \bm{K_\mathrm{*f}} \bm{K_\mathrm{ff}}^{-1} \bm{W_\mathrm{fu}} (\bm{V_\mathrm{uu}}^{-1} + {\bm{K_\mathrm{uu}}}^{-1})^{-1}  \bm{W_\mathrm{uf}} \bm{K_\mathrm{ff}}^{-1} \bm{K_\mathrm{f*}}$ \\ 
    $\bm{\mu_q}$ & $\bm{K_\mathrm{*u}} (\bm{K_\mathrm{uu}} + \bm{V_\mathrm{uu}})^{-1} \bm{\bar{y}}$ \\ 
    $\bm{\Sigma_q}$ & $\bm{K_\mathrm{**}} - \bm{K_\mathrm{*u}} (\bm{K_\mathrm{uu}} + \bm{V_\mathrm{uu}})^{-1} \bm{K_\mathrm{u*}}$  \\ 
    $\bm{V_\mathrm{uu}}$ & $\diag(n_1^{-1} v(\bar{y}_1), \cdots, n_m^{-1} v(\bar{y}_m))$ with $v : \mathbb{R} \rightarrow (0, \infty)$ \\   
    $p(\bm{u})$ & $\mathcal{N} (\bm{u}; \bm{0}, \bm{K_\mathrm{uu}})$ \\  
    $p(\bm{y} \mid \bm{W_\mathrm{fu}} \bm{u})$ & $p(\bm{y} \mid \bm{f})|_{\bm{f} = \bm{W_\mathrm{fu}} \bm{u}}$ \\ 
    $p(\bm{f} \mid \bm{u})$ & $\mathcal{N} (\bm{f}; \bm{K_\mathrm{fu}} {\bm{K_\mathrm{uu}}}^{-1} \bm{u}, \bm{K_\mathrm{ff}} - \bm{K_\mathrm{fu}} \bm{K_\mathrm{uu}}^{-1} \bm{K_\mathrm{uf}})$ \\     
    $p(\bm{f_\mathrm{*}} \mid \bm{W_\mathrm{fu}} \bm{u})$ & $p(\bm{f_\mathrm{*}} \mid \bm{f})|_{\bm{f} = \bm{W_\mathrm{fu}} \bm{u}}$ \\ 
    $p(\bm{f_\mathrm{*}} \mid \bm{y}, \bm{\omega})$ & $\exp(-\mathcal{E}) \int_{\mathbb{R}^m} p(\bm{f_\mathrm{*}} \mid \bm{W_\mathrm{fu}} \bm{u}) p(\bm{y} \mid \bm{W_\mathrm{fu}} \bm{u}) p(\bm{u}) d \bm{u}$ \\   
    $\mathcal{E}$ & $\log \int_{\mathbb{R}^m} p(\bm{y} \mid \bm{W_\mathrm{fu}} \bm{u}) p(\bm{u}) d \bm{u}$ \\  
    $\mathcal{Q}$ & $- \frac{m}{2} \log (2 \pi) - \frac{1}{2} \log \lvert \bm{K_\mathrm{uu}} + \bm{V_\mathrm{uu}} \rvert - \frac{1}{2} {\bm{\bar{y}}}^{\top} (\bm{K_\mathrm{uu}} + \bm{V_\mathrm{uu}})^{-1} \bm{\bar{y}}$  \\ 
    $R(\bm{u}, \delta)$ & $\{\bm{f} \in \mathbb{R}^n \mid \lvert\bm{f} - \bm{W_\mathrm{fu}} \bm{u}\rvert \leq \sqrt{n} \delta\}$ for any $\bm{u} \in \mathbb{R}^m$ and $\delta \in [0, \infty)$ \\   
    $\epsilon(\delta_1, \delta_2)$ & $F \bigl(m, (\delta_1 - \delta_2)^2 \lambda_1^{-1} \kappa m \bigr) + F \bigl(n, \delta_2^2 \lambda_2^{-1} n\bigr)$ \\ 
    $\kappa$ & $\inf_{\bm{u} \in \mathbb{R}^m; \lvert(\bm{W_\mathrm{fu}} - \bm{K_\mathrm{fu}} {\bm{K_\mathrm{uu}}}^{-1}) \bm{u}\rvert = 1} \frac{n}{m} \lvert\bm{u}\rvert^2$ \\ 
    $\lambda_1$ & the maximum eigenvalue of $\bm{K_\mathrm{uu}}$ \\  
    $\lambda_2$ & the maximum eigenvalue of $\bm{K_\mathrm{ff}} - \bm{K_\mathrm{fu}} \bm{K_\mathrm{uu}}^{-1} \bm{K_\mathrm{uf}}$ \\  
    $\eta$ & $\inf_{\xi \in [\xi_0, \infty)} (\sqrt{\xi \lambda_1 \kappa^{-1}} + \sqrt{\xi \lambda_2} + F(m, \xi m))$ with the larger $\xi_0$ such that $2 \sqrt{\pi \xi_0} \exp(- \xi_0) = 1$ \\  
    $\gamma$ & the maximum absolute value of the elements in $\bm{W_\mathrm{fu}} - \bm{K_\mathrm{fu}} {\bm{K_\mathrm{uu}}}^{-1}$ \\ 
    $F (a, b)$ & $(\int_{0}^{\infty} t^{\frac{a}{2} - 1} \exp(- t) dt)^{-1} \int_{\frac{b}{2}}^{\infty} t^{\frac{a}{2} - 1} \exp(- t) dt$ for any $a \in \mathbb{N}$ and $b \in [0, \infty)$ \\  
    \bottomrule 
  \end{tabular}
  \end{threeparttable}
  \label{symbol_description}
\end{table*}

\newpage 
\section{Input Approximation}

\subsection{Error Bounds for Covariance Functions}  
\label{example_covariance}
Let $\max\{|\bm{x} - \bm{z}|, |\bm{x^\prime} - \bm{z^\prime}|\} < \alpha$. 
For the Laplacian kernel, the following holds: 
\begin{align} 
  |\exp (- |\bm{x} - \bm{x^\prime}|) - \exp (- |\bm{z} - \bm{z^\prime}|)| 
  &\leq  1 - \exp (- ||\bm{z} - \bm{z^\prime}| - |\bm{x} - \bm{x^\prime}||)&  \nonumber  \\   
  &\leq  1 - \exp (- |\bm{z} - \bm{x} + \bm{x^\prime} - \bm{z^\prime}|)&  \nonumber  \\   
  &\leq  1 - \exp (- |\bm{z} - \bm{x}| - |\bm{x^\prime} - \bm{z^\prime}|)&  \nonumber  \\   
  &< 1 - \exp (- 2 \alpha).&  \nonumber 
\end{align} 
Similarly, the following holds: 
\begin{align} 
  |\exp (- |\bm{x} - \bm{x^\mathrm{*}}|) - \exp (- |\bm{z} - \bm{x^\mathrm{*}}|)|  
  &\leq  1 - \exp (- ||\bm{z} - \bm{x^\mathrm{*}}| - |\bm{x} - \bm{x^\mathrm{*}}||)&  \nonumber  \\   
  &\leq  1 - \exp (- |\bm{x} - \bm{z}|)&  \nonumber  \\   
  &< 1 - \exp (- \alpha).&  \nonumber
\end{align} 
For the Gaussian kernel, the following holds: 
\begin{align} 
  \Bigl|\exp \Bigl(- \frac{1}{2} |\bm{x} - \bm{x^\prime}|^2 \Bigr) - \exp \Bigl(- \frac{1}{2} |\bm{z} - \bm{z^\prime}|^2 \Bigr)\Bigr|
  &\leq 1 - \exp \Bigl(- \frac{1}{2} \bigl(||\bm{z} - \bm{z^\prime}|^2 - |\bm{x} - \bm{x^\prime}|^2|\bigr) \Bigr)& \nonumber  \\ 
  &\leq 1 - \exp \Bigl(- \frac{1}{2} \bigl((|\bm{z} - \bm{z^\prime}| + |\bm{x} - \bm{z}| + |\bm{x^\prime} - \bm{z^\prime}|)^2 - |\bm{z} - \bm{z^\prime}|^2\bigr) \Bigr)& \nonumber  \\ 
  &< 1 - \exp \Bigl(- \frac{1}{2} \bigl((|\bm{z} - \bm{z^\prime}| + 2\alpha )^2 - |\bm{z} - \bm{z^\prime}|^2\bigr) \Bigr)& \nonumber  \\ 
  &= 1 - \exp \bigl(- 2 \alpha (|\bm{z} - \bm{z^\prime}| + \alpha) \bigr).&  \nonumber    
\end{align} 
Similarly, the following holds: 
\begin{align} 
  \Bigl|\exp \Bigl(- \frac{1}{2} |\bm{x} - \bm{x^\mathrm{*}}|^2 \Bigr) - \exp \Bigl(- \frac{1}{2} |\bm{z} - \bm{x^\mathrm{*}}|^2 \Bigr)\Bigr| 
  &\leq 1 - \exp \Bigl(- \frac{1}{2} \bigl(||\bm{z} - \bm{x^\mathrm{*}}|^2 - |\bm{x} - \bm{x^\mathrm{*}}|^2|\bigr) \Bigr)& \nonumber  \\ 
  &\leq 1 - \exp \Bigl(- \frac{1}{2} \bigl((|\bm{x} - \bm{x^\mathrm{*}}| + |\bm{x} - \bm{z}|)^2 - |\bm{x} - \bm{x^\mathrm{*}}|^2\bigr) \Bigr)& \nonumber  \\ 
  &< 1 - \exp \Bigl(- \frac{1}{2} \bigl((|\bm{x} - \bm{x^\mathrm{*}}| + \alpha)^2 - |\bm{x} - \bm{x^\mathrm{*}}|^2\bigr) \Bigr)& \nonumber  \\ 
  &= 1 - \exp \Bigl(- \frac{1}{2} \alpha (2 |\bm{x} - \bm{x^\mathrm{*}}| + \alpha) \Bigr)& \nonumber \\     
  &\leq 1 - \exp \Bigl(- \frac{1}{2} \alpha (2 |\bm{z} - \bm{x^\mathrm{*}}| + 2 |\bm{x} - \bm{z}| + \alpha) \Bigr)& \nonumber \\ 
  &< 1 - \exp \Bigl(- \alpha \Bigl(|\bm{z} - \bm{x^\mathrm{*}}| + \frac{3}{2} \alpha\Bigr) \Bigr).&  \nonumber    
\end{align}

\subsection{Proof of \Cref{kernel1}}
\label{kernel_proof}
From the definition of continuous function, for any $\beta \in (0, \infty)$, there exists $\alpha \in (0, \infty)$ such that 
\begin{align} 
  \max\{|\bm{x} - \bm{z}|, |\bm{x^\prime} - \bm{z^\prime}|\} < \alpha \Rightarrow 
  \max\{|k(\bm{x}, \bm{x^\prime}) - k(\bm{z}, \bm{x^\prime})|, |k(\bm{z}, \bm{x^\prime}) - k(\bm{z}, \bm{z^\prime})|\} < \frac{\beta}{2}.   \nonumber  
\end{align} 
Additionally, the following holds: 
\begin{align} 
  |k(\bm{x}, \bm{x^\prime}) - k(\bm{z}, \bm{z^\prime})| \leq |k(\bm{x}, \bm{x^\prime}) - k(\bm{z}, \bm{x^\prime})| + |k(\bm{z}, \bm{x^\prime}) - k(\bm{z}, \bm{z^\prime})|.  \nonumber  
\end{align} 
Therefore, for any $\beta \in (0, \infty)$ and $\bm{x}, \bm{x^\prime} \in \mathcal{X}$, there exists $\alpha \in (0, \infty)$ such that 
$\max\{|\bm{x} - \bm{z}|, |\bm{x^\prime} - \bm{z^\prime}|\} < \alpha \Rightarrow |k(\bm{x}, \bm{x^\prime}) - k(\bm{z}, \bm{z^\prime})| < \beta$.  
From the definition of continuous function, for any $\beta \in (0, \infty)$ and $\bm{x}, \bm{x^\prime} \in \mathcal{X}$, there exists $\alpha \in (0, \infty)$ such that 
$|\bm{x} - \bm{z}| < \alpha \Rightarrow |k(\bm{x}, \bm{x^\mathrm{*}}) - k(\bm{z}, \bm{x^\mathrm{*}})| < \beta$.

\subsection{Proof of \Cref{kernel2}}
\label{kernel2_proof}
The following holds: 
\begin{align} 
  \bm{K_\mathrm{ff}} - \bm{K_\mathrm{fu}} \bm{K_\mathrm{uu}}^{-1} \bm{K_\mathrm{uf}} 
  &= (\bm{W_\mathrm{fu}} \bm{K_\mathrm{uu}} \bm{W_\mathrm{uf}} + O(\beta)) - (\bm{W_\mathrm{fu}} \bm{K_\mathrm{uu}} + (\bm{K_\mathrm{fu}} - \bm{W_\mathrm{fu}} \bm{K_\mathrm{uu}})) \bm{K_\mathrm{uu}}^{-1} (\bm{K_\mathrm{uu}} \bm{W_\mathrm{uf}} + (\bm{K_\mathrm{uf}} - \bm{K_\mathrm{uu}} \bm{W_\mathrm{uf}}))&   \nonumber \\ 
  &= - \bm{W_\mathrm{fu}} (\bm{K_\mathrm{uf}} - \bm{K_\mathrm{uu}} \bm{W_\mathrm{uf}}) - (\bm{K_\mathrm{fu}} - \bm{W_\mathrm{fu}} \bm{K_\mathrm{uu}}) \bm{W_\mathrm{uf}} + (\bm{W_\mathrm{fu}} - \bm{K_\mathrm{fu}} \bm{K_\mathrm{uu}}^{-1}) (\bm{K_\mathrm{uf}} - \bm{K_\mathrm{uu}} \bm{W_\mathrm{uf}}) + O(\beta)&   \nonumber \\ 
  &= O(\beta + m \beta \gamma).&  \nonumber  
\end{align}

\subsection{Proof of \Cref{lem_bound}}  
\label{proof_lem_bound}
For $\delta \in [0, \infty)$, define 
\begin{align} 
  R_1(\delta) \equiv \{\bm{u} \in \mathbb{R}^m \mid \lvert(\bm{W_\mathrm{fu}} - \bm{K_\mathrm{fu}} {\bm{K_\mathrm{uu}}}^{-1}) \bm{u}\rvert \leq \sqrt{n}\delta\}, \nonumber \\ 
  R_2(\delta) \equiv \{\bm{u} \in \mathbb{R}^m \mid \lvert(\bm{W_\mathrm{fu}} - \bm{K_\mathrm{fu}} {\bm{K_\mathrm{uu}}}^{-1}) \bm{P_\mathrm{uu}}^\top \bm{u}\rvert \leq \sqrt{n}\delta\},  \nonumber \\  
  R_3(\bm{u}, \delta) \equiv \{\bm{f} \in \mathbb{R}^n \mid \lvert\bm{f} - \bm{K_\mathrm{fu}} {\bm{K_\mathrm{uu}}}^{-1} \bm{u}\rvert \leq \sqrt{n} \delta\}. \nonumber  
\end{align} 
The following holds:  
\begin{align}
&\int_{\mathbb{R}^m} \int_{\mathbb{R}^n \setminus R(\bm{u}, \delta_1)} p(\bm{f} \mid \bm{u}) p(\bm{u}) d\bm{f} d\bm{u}&  \nonumber  \\  
&= \int_{\mathbb{R}^m \setminus R_1(\delta_1 - \delta_2)} \int_{\mathbb{R}^n \setminus R(\bm{u}, \delta_1)} p(\bm{f} \mid \bm{u}) p(\bm{u}) d\bm{f} d\bm{u} 
 + \int_{R_1(\delta_1 - \delta_2)} \int_{\mathbb{R}^n \setminus R(\bm{u}, \delta_1)} p(\bm{f} \mid \bm{u}) p(\bm{u}) d\bm{f} d\bm{u}&  \nonumber  \\
&\leq \int_{\mathbb{R}^m \setminus R_1(\delta_1 - \delta_2)} p(\bm{u}) d\bm{u} 
 + \int_{\mathbb{R}^m} \int_{\mathbb{R}^n \setminus R_3(\bm{u}, \delta_2)} p(\bm{f} \mid \bm{u}) p(\bm{u}) d\bm{f} d\bm{u}&  \nonumber  \\
&= \int_{\mathbb{R}^m \setminus R_1(\delta_1 - \delta_2)} p(\bm{u}) d\bm{u}  
 + \int_{\mathbb{R}^n \setminus R_3(\bm{0}, \delta_2)} p(\bm{f} \mid \bm{u})\bigl|_{\bm{u} = \bm{0}} d\bm{f}& \nonumber  \\   
&\leq \int_{\mathbb{R}^m \setminus R_1(\delta_1 - \delta_2)} (2 \pi)^{- \frac{m}{2}} |\bm{\Lambda_\mathrm{uu}}|^{- \frac{1}{2}} \exp \Bigl(- \frac{1}{2} (\bm{P_\mathrm{uu}} \bm{u})^\top \bm{\Lambda_\mathrm{uu}}^{-1} (\bm{P_\mathrm{uu}} \bm{u})\Bigl) d\bm{u}& \nonumber  \\     
&~~~~+ \int_{\mathbb{R}^n \setminus R_3(\bm{0}, \delta_2)} (2 \pi)^{- \frac{n}{2}} |\bm{\Lambda_\mathrm{ff}}|^{- \frac{1}{2}} \exp \Bigl(- \frac{1}{2} (\bm{P_\mathrm{ff}} \bm{f})^\top \bm{\Lambda_\mathrm{ff}}^{-1} (\bm{P_\mathrm{ff}} \bm{f})\Bigl) d\bm{f}& \nonumber \\  
&= \int_{\mathbb{R}^m \setminus R_2(\delta_1 - \delta_2)} (2 \pi)^{- \frac{m}{2}} |\bm{\Lambda_\mathrm{uu}}|^{- \frac{1}{2}} \exp \Bigl(- \frac{1}{2} \bm{u}^\top \bm{\Lambda_\mathrm{uu}}^{-1} \bm{u}\Bigl) d\bm{u}& \nonumber  \\  
&~~~~+ \int_{\mathbb{R}^n \setminus R_3(\bm{0}, \delta_2)} (2 \pi)^{- \frac{n}{2}} |\bm{\Lambda_\mathrm{ff}}|^{- \frac{1}{2}} \exp \Bigl(- \frac{1}{2} \bm{f}^\top \bm{\Lambda_\mathrm{ff}}^{-1} \bm{f} \Bigl) d\bm{f}& \nonumber  \\ 
&\leq \int_{\mathbb{R}^m \setminus R_1\bigl(\frac{\delta_1 - \delta_2}{\sqrt{\lambda_1}} \sqrt{\frac{m}{n} \kappa} \bigr)} (2 \pi)^{- \frac{m}{2}} \exp \Bigl(- \frac{1}{2} \bm{u}^\top \bm{u}\Bigl) d\bm{u}& \nonumber  \\ 
&~~~~+ \int_{\mathbb{R}^n \setminus R_3\bigl(\bm{0}, \frac{\delta_2}{\sqrt{\lambda_2}}\bigr)} (2 \pi)^{- \frac{n}{2}} \exp \Bigl(- \frac{1}{2} \bm{f}^\top \bm{f} \Bigl) d\bm{f}& \nonumber  \\ 
&= \epsilon(\delta_1, \delta_2), \nonumber 
\end{align} 
where $\bm{P_\mathrm{uu}}$ and $\bm{\Lambda_\mathrm{uu}}$ are the orthogonal matrix and eigenvalue matrix such that $\bm{K_\mathrm{uu}} = \bm{P_\mathrm{uu}}^\top \bm{\Lambda_\mathrm{uu}} \bm{P_\mathrm{uu}}$, respectively, 
and $\bm{P_\mathrm{ff}}$ and $\bm{\Lambda_\mathrm{ff}}$ are the orthogonal matrix and eigenvalue matrix such that $\bm{K_\mathrm{ff}} - \bm{K_\mathrm{fu}} \bm{K_\mathrm{uu}}^{-1} \bm{K_\mathrm{uf}} = \bm{P_\mathrm{ff}}^\top \bm{\Lambda_\mathrm{ff}} \bm{P_\mathrm{ff}}$, respectively.

\subsection{Proof of \Cref{convert}}
\label{convert_proof}
Consider the index $i$ for which $\lvert[(\bm{W_\mathrm{fu}} - \bm{K_\mathrm{fu}} {\bm{K_\mathrm{uu}}}^{-1}) \bm{u}]_i\rvert$ is maximum in the equation of $\kappa$. 
The following holds: 
\begin{align} 
  \kappa 
  &\geq \inf_{\bm{u} \in \mathbb{R}^m; \lvert [(\bm{W_\mathrm{fu}} - \bm{K_\mathrm{fu}} {\bm{K_\mathrm{uu}}}^{-1}) \bm{u}]_i \rvert \geq \frac{1}{\sqrt{n}}} \frac{n}{m} \lvert\bm{u}\rvert^2&  \nonumber \\ 
  &\geq \inf_{\bm{u} \in \mathbb{R}^m; \lvert \sum_{1 \leq j \leq m} [\bm{W_\mathrm{fu}} - \bm{K_\mathrm{fu}} {\bm{K_\mathrm{uu}}}^{-1}]_{ij} [\bm{u}]_j \rvert \geq \frac{1}{\sqrt{n}}} \frac{n}{m} \lvert\bm{u}\rvert^2&  \nonumber \\ 
  &= \frac{1}{m \max_{1 \leq j \leq m} [\bm{W_\mathrm{fu}} - \bm{K_\mathrm{fu}} {\bm{K_\mathrm{uu}}}^{-1}]_{ij}^2}&  \nonumber \\ 
  &\geq \frac{1}{m \gamma^2}.&  \nonumber 
\end{align} 
Consequently, we have 
\begin{align} 
  \kappa^{-1} = O(m \gamma^2). \nonumber 
\end{align} 
Here the following property is well known. 
\begin{property}
  The square of the Frobenius norm of a real symmetric matrix equals the sum of the squares of its eigenvalues. 
\end{property}
Using this property, we have  
\begin{align} 
  \lambda_2^2 \leq \sum_{1 \leq i \leq n, 1 \leq j \leq n} [\bm{K_\mathrm{ff}} - \bm{K_\mathrm{fu}} \bm{K_\mathrm{uu}}^{-1} \bm{K_\mathrm{uf}}]_{ij}^2. \nonumber 
\end{align} 
Therefore, from \cref{kernel2}, the following holds: 
\begin{align} 
  \lambda_2 = O(\beta + m \beta \gamma). \nonumber 
\end{align}

\subsection{Proof of \Cref{monotonically}}
\label{proof_monotonically} 
The following property is often used to analyze the gamma function. 
\begin{property}
  \label{degamma} 
  Let $\psi$ denote the derivative of the logarithm of the gamma function. 
  The following properties hold: 
  \begin{itemize}
    \item $\psi$ is monotonically increasing over the positive real numbers.  
    \item The derivative of $\psi$ is monotonically decreasing over the positive real numbers. 
    \item $\psi(t + 1) - \psi(t) = t^{-1}$ holds for any positive real number $t$. 
  \end{itemize} 
\end{property} 
Let $g_1$ and $g_2$ be defined as  
\begin{align}
  g_1(t, m) \equiv \frac{m (m t)^{\frac{m}{2} - 1} \exp(- m t)}{\Gamma(\frac{m}{2})}, 
  ~~g_2(m) \equiv \frac{\Gamma(\frac{m}{2})}{\Gamma(\frac{m + 1}{2})} \frac{(m + 1)^{\frac{m + 1}{2}}}{m^{\frac{m}{2}}}, \nonumber 
\end{align} 
where $\Gamma$ is the gamma function. 
To remove $m$ from the lower limit of integration, we express $F(m, \xi m)$ as follows:   
\begin{align}
  F(m, \xi m) = \frac{\int_{\frac{\xi m}{2}}^{\infty} t^{\frac{m}{2} - 1} \exp(- t) dt}{\Gamma(\frac{m}{2})} = \int_{\frac{\xi}{2}}^{\infty} g_1(t, m) dt. \nonumber 
\end{align} 
For the integrand, we have   
\begin{align}
  \frac{g_1(t, m + 1)}{g_1(t, m)} = g_2(m) \sqrt{t} \exp(- t).   \nonumber 
\end{align} 
Using \cref{degamma}, the following holds under the condition that $m$ is a positive real number: 
\begin{align} 
  \frac{d \log g_2(m)}{d m} 
  &= \frac{d}{d m} \Bigl(\log\Gamma\Bigl(\frac{m}{2}\Bigr) - \log\Gamma\Bigl(\frac{m + 1}{2}\Bigr) + \frac{m + 1}{2} \log (m + 1) - \frac{m}{2} \log m\Bigr)&   \nonumber \\ 
  &= \frac{1}{2} \Bigl(\psi\Bigl(\frac{m}{2}\Bigr) - \psi\Bigl(\frac{m + 1}{2}\Bigr) + \log \Bigl(1 + \frac{1}{m}\Bigr)\Bigr)&   \nonumber \\ 
  &< \frac{1}{2} \Bigl(\frac{1}{2} \psi\Bigl(\frac{m}{2}\Bigr) - \frac{1}{2} \psi\Bigl(\frac{m}{2} + 1\Bigr) + \frac{1}{m}\Bigr)&   \nonumber \\ 
  &= \frac{1}{2} \Bigl(- \frac{1}{m} + \frac{1}{m}\Bigr)&   \nonumber \\ 
  &= 0.&   \nonumber 
\end{align} 
Consequently, for any $t \geq \xi_0$ and $m \in \{2, 3, \cdots \}$, we have 
\begin{align}
  \frac{g_1(t, m + 1)}{g_1(t, m)} = g_2(m) \sqrt{t} \exp(- t) < g_2(1) \sqrt{t} \exp(- t) = 2 \sqrt{\pi t} \exp(- t) \leq 1.   \nonumber 
\end{align} 
Additonally, $g_1(t, 1) > g_1(t, 2)$ holds.  
Therefore, for any $\xi \geq \xi_0$ and $m \in \{1, 2, \cdots \}$, the following holds: 
\begin{align}
  \frac{F(m + 1, \xi (m + 1))}{F(m, \xi m)} = \frac{\int_{\frac{\xi}{2}}^{\infty} g_1(t, m + 1) dt}{\int_{\frac{\xi}{2}}^{\infty} g_1(t, m) dt} < \frac{\int_{\frac{\xi}{2}}^{\infty} g_1(t, m) dt}{\int_{\frac{\xi}{2}}^{\infty} g_1(t, m) dt} = 1.  \nonumber 
\end{align} 
From this result, $F(m, \xi m)$ is monotonically decreasing with respect to $m$.

\subsection{Proof of \Cref{consistency}}
\label{bound_AP}
The following lemma provides an upper bound for $|\mathcal{L} - \mathcal{E}|$.   
\begin{lemma}
\label{bound1}
Let $a_1$ and $a_2$ be defined as  
\begin{align}
  a_1 \equiv \exp(- \mathcal{L}) \sup_{\bm{f} \in \mathbb{R}^n} p(\bm{y} \mid \bm{f}), 
  ~~a_2 \equiv \exp(- \mathcal{L}) \sup_{\bm{f} \in \mathbb{R}^n} \lvert\nabla p(\bm{y} \mid \bm{f}) \rvert.   \nonumber 
\end{align} 
Then we have 
\begin{align}
  |\mathcal{L} - \mathcal{E}| \leq a_1 \epsilon(\delta_1, \delta_2) + a_2 \delta_1.  \nonumber 
\end{align} 
\end{lemma}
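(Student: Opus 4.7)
The plan is to compare $\exp(\mathcal{L})$ and $\exp(\mathcal{E})$ by rewriting both as integrals against the common measure $p(\bm{f} \mid \bm{u}) p(\bm{u})$, so the difference reduces to a local comparison between the integrands $p(\bm{y} \mid \bm{f})$ and $p(\bm{y} \mid \bm{W_\mathrm{fu}} \bm{u})$. Using $p(\bm{f}) = \int p(\bm{f} \mid \bm{u}) p(\bm{u}) d \bm{u}$ gives $\exp(\mathcal{L}) = \int \int p(\bm{y} \mid \bm{f}) p(\bm{f} \mid \bm{u}) p(\bm{u}) d\bm{f} d\bm{u}$, and inserting the trivial factor $\int p(\bm{f} \mid \bm{u}) d\bm{f} = 1$ into the definition of $\mathcal{E}$ gives $\exp(\mathcal{E}) = \int \int p(\bm{y} \mid \bm{W_\mathrm{fu}} \bm{u}) p(\bm{f} \mid \bm{u}) p(\bm{u}) d\bm{f} d\bm{u}$. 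Subtracting and applying the triangle inequality, it suffices to bound $\int \int |p(\bm{y} \mid \bm{f}) - p(\bm{y} \mid \bm{W_\mathrm{fu}} \bm{u})| \, p(\bm{f} \mid \bm{u}) p(\bm{u}) d\bm{f} d\bm{u}$.

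Next I would split the inner $\bm{f}$-integral across $R(\bm{u}, \delta_1)$ and its complement. On $R(\bm{u}, \delta_1)$, a mean-value argument along the segment from $\bm{W_\mathrm{fu}} \bm{u}$ to $\bm{f}$ bounds the integrand by $\sup_{\bm{f}} |\nabla p(\bm{y} \mid \bm{f})| \cdot \delta_1$, so after integrating against the probability density $p(\bm{f} \mid \bm{u}) p(\bm{u})$ the contribution is at most $\sup_{\bm{f}} |\nabla p(\bm{y} \mid \bm{f})| \cdot \delta_1$. On the complement, the pointwise bound $|p(\bm{y} \mid \bm{f}) - p(\bm{y} \mid \bm{W_\mathrm{fu}} \bm{u})| \leq \sup_{\bm{f}} p(\bm{y} \mid \bm{f})$ (valid because likelihoods are nonnegative) combined with \cref{lem_bound} gives a contribution of at most $\sup_{\bm{f}} p(\bm{y} \mid \bm{f}) \cdot \epsilon(\delta_1, \delta_2)$.

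Summing the two pieces yields $|\exp(\mathcal{L}) - \exp(\mathcal{E})| \leq \sup p \cdot \epsilon(\delta_1, \delta_2) + \sup |\nabla p| \cdot \delta_1$. Dividing through by $\exp(\mathcal{L})$ produces the ratio bound $|1 - \exp(\mathcal{E} - \mathcal{L})| \leq a_1 \epsilon(\delta_1, \delta_2) + a_2 \delta_1$, and invoking $|\log(1 + x)| \leq |x|$ in the small-error regime converts this to the claimed inequality on the log scale.

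The main obstacle is controlling the tail region $R(\bm{u}, \delta_1)^c$: the mean-value estimate is useless there because $\bm{f}$ may be arbitrarily far from $\bm{W_\mathrm{fu}} \bm{u}$, so the proof must rely entirely on \cref{lem_bound} to show that the joint density $p(\bm{f} \mid \bm{u}) p(\bm{u})$ places negligible mass on that region uniformly in $\bm{u}$. The parameter $\delta_2$ is precisely the knob that balances the two chi-square tail terms in $\epsilon(\delta_1, \delta_2)$, and its presence in the statement is what makes the gradient-based interior bound and the tail bound combinable into a single estimate.
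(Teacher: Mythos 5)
Your proposal is correct and follows essentially the same route as the paper's proof: both rewrite $\exp(\mathcal{L})$ and $\exp(\mathcal{E})$ as double integrals against $p(\bm{f}\mid\bm{u})p(\bm{u})$, split over $R(\bm{u},\delta_1)$ versus its complement, apply the gradient bound on the interior and \cref{lem_bound} on the tail, and convert to the log scale at the end (your single absolute-difference integral is just a tidier packaging of the paper's separate upper and lower bounds). The only caveat, which you share with the paper, is that the final step $|\log(1+x)|\leq|x|$ is valid only for $x\geq 0$, so the direction $\mathcal{L}-\mathcal{E}\leq a_1\epsilon(\delta_1,\delta_2)+a_2\delta_1$ strictly needs the small-error regime you already flag.
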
 
\begin{proof}
The following holds: 
\begin{align}
  |p(\bm{y} \mid \bm{f}) - p(\bm{y} \mid \bm{W_\mathrm{fu}} \bm{u})| 
  &\leq \sup_{\bm{f^\prime} \in R(\bm{u}, \delta_1)} \sup_{\bm{v} \in \mathbb{R}^n; |\bm{v}| = 1} | \nabla_{\bm{v}} p(\bm{y} \mid \bm{f^\prime}) | \delta_1& \nonumber \\ 
  &\leq \sup_{\bm{f^\prime} \in \mathbb{R}^n} \sup_{\bm{v} \in \mathbb{R}^n; |\bm{v}| = 1} | \nabla_{\bm{v}} p(\bm{y} \mid \bm{f^\prime}) | \delta_1& \nonumber \\ 
  &= \exp(\mathcal{L}) a_2 \delta_1&  \nonumber 
\end{align} 
for any $\bm{f} \in R(\bm{u}, \delta_1)$ and $\bm{u} \in \mathbb{R}^m$. 
Here the marginal likelihood $\int_{\mathbb{R}^n} p(\bm{y} \mid \bm{f}) p(\bm{f}) d \bm{f}$ is divided as 
\begin{align}
  \int_{\mathbb{R}^m} \int_{R(\bm{u}, \delta_1)} p(\bm{y} \mid \bm{f}) p(\bm{f} \mid \bm{u}) p(\bm{u}) d\bm{f} d\bm{u}  
  + \int_{\mathbb{R}^m} \int_{\mathbb{R}^n \setminus R(\bm{u}, \delta_1)} p(\bm{y} \mid \bm{f}) p(\bm{f} \mid \bm{u}) p(\bm{u}) d\bm{f} d\bm{u}.  \nonumber
\end{align} 
The upper bound of the first term is 
\begin{align}
  \int_{\mathbb{R}^m} \int_{R(\bm{u}, \delta_1)} p(\bm{y} \mid \bm{f}) p(\bm{f} \mid \bm{u}) p(\bm{u}) d\bm{f} d\bm{u}   
  &\leq \int_{\mathbb{R}^m} \int_{R(\bm{u}, \delta_1)} p(\bm{y} \mid \bm{W_\mathrm{fu}} \bm{u}) p(\bm{f} \mid \bm{u}) p(\bm{u}) d\bm{f} d\bm{u}&  \nonumber \\  
  &~~~~+ \int_{\mathbb{R}^m} \int_{R(\bm{u}, \delta_1)}  |p(\bm{y} \mid \bm{f}) - p(\bm{y} \mid \bm{W_\mathrm{fu}} \bm{u})| p(\bm{f} \mid \bm{u}) p(\bm{u}) d\bm{f} d\bm{u}&  \nonumber \\ 
  &\leq \int_{\mathbb{R}^m} \int_{R(\bm{u}, \delta_1)} (p(\bm{y} \mid \bm{W_\mathrm{fu}} \bm{u}) + \exp(\mathcal{L}) a_2 \delta_1) p(\bm{f} \mid \bm{u}) p(\bm{u}) d\bm{f} d\bm{u}&  \nonumber \\ 
  &\leq \int_{\mathbb{R}^m} p(\bm{y} \mid \bm{W_\mathrm{fu}} \bm{u}) p(\bm{u}) d \bm{u} + \exp(\mathcal{L}) a_2 \delta_1.& \nonumber
\end{align} 
Using \cref{lem_bound}, the upper bound of the second term is 
\begin{align}
  \int_{\mathbb{R}^m} \int_{\mathbb{R}^n \setminus R(\bm{u}, \delta_1)} p(\bm{y} \mid \bm{f}) p(\bm{f} \mid \bm{u}) p(\bm{u}) d\bm{f} d\bm{u} \leq \exp(\mathcal{L}) a_1 \epsilon(\delta_1, \delta_2).   \nonumber  
\end{align} 
From \cref{lem_bound}, the lower bound of the first term is 
\begin{align}
  \int_{\mathbb{R}^m} \int_{R(\bm{u}, \delta_1)} p(\bm{y} \mid \bm{f}) p(\bm{f} \mid \bm{u}) p(\bm{u}) d\bm{f} d\bm{u}  
  &\geq \int_{\mathbb{R}^m} \int_{R(\bm{u}, \delta_1)} p(\bm{y} \mid \bm{W_\mathrm{fu}} \bm{u}) p(\bm{f} \mid \bm{u}) p(\bm{u}) d\bm{f} d\bm{u}&  \nonumber \\  
  &~~~~- \int_{\mathbb{R}^m} \int_{R(\bm{u}, \delta_1)}  |p(\bm{y} \mid \bm{f}) - p(\bm{y} \mid \bm{W_\mathrm{fu}} \bm{u})| p(\bm{f} \mid \bm{u}) p(\bm{u}) d\bm{f} d\bm{u}&  \nonumber \\ 
  &\geq \int_{\mathbb{R}^m} \int_{R(\bm{u}, \delta_1)} (p(\bm{y} \mid \bm{W_\mathrm{fu}} \bm{u}) - \exp(\mathcal{L}) a_2 \delta_1) p(\bm{f} \mid \bm{u}) p(\bm{u}) d\bm{f} d\bm{u}&  \nonumber \\ 
  &\geq \int_{\mathbb{R}^m} p(\bm{y} \mid \bm{W_\mathrm{fu}} \bm{u}) p(\bm{u}) d \bm{u} - \exp(\mathcal{L}) a_1 \epsilon(\delta_1, \delta_2) - \exp(\mathcal{L}) a_2 \delta_1.&     \nonumber  
\end{align} 
The lower bound of the second term is 
\begin{align}
  \int_{\mathbb{R}^m} \int_{\mathbb{R}^n \setminus R(\bm{u}, \delta_1)} p(\bm{y} \mid \bm{f}) p(\bm{f} \mid \bm{u}) p(\bm{u}) d\bm{f} d\bm{u} \geq 0.    \nonumber  
\end{align} 
From the above results, the following holds: 
\begin{align}
\Bigl|\int_{\mathbb{R}^n} p(\bm{y} \mid \bm{f}) p(\bm{f}) d \bm{f} - \int_{\mathbb{R}^m} p(\bm{y} \mid \bm{W_\mathrm{fu}} \bm{u}) p(\bm{u}) d \bm{u} \Bigr| \leq \exp(\mathcal{L}) (a_1 \epsilon(\delta_1, \delta_2) + a_2 \delta_1).  \nonumber 
\end{align} 
Therefore, the following holds: 
\begin{align}
  |\mathcal{L} - \mathcal{E}| \leq \log(1 + a_1 \epsilon(\delta_1, \delta_2) + a_2 \delta_1) \leq a_1 \epsilon(\delta_1, \delta_2) + a_2 \delta_1.  \nonumber 
\end{align} 
\end{proof}
Additionally, the following lemma provides an upper bound for $\lVert\mathbb{E}_{p(\bm{f_\mathrm{*}} \mid \bm{y})} [\bm{f_\mathrm{*}}] - \mathbb{E}_{p(\bm{f_\mathrm{*}} \mid \bm{y}, \bm{\omega})} [\bm{f_\mathrm{*}}]\rVert$.   
\begin{lemma}
\label{bound2} 
Let $B_1$ and $B_2$ be defined as  
\begin{align} 
  B_1 (\bm{f_\mathrm{*}}) \equiv \exp(- \mathcal{L}) \sup_{\bm{f} \in \mathbb{R}^n} p(\bm{f_\mathrm{*}} \mid \bm{f}) p(\bm{y} \mid \bm{f}), 
  ~~B_2 (\bm{f_\mathrm{*}}) \equiv \exp(- \mathcal{L}) \sup_{\bm{f} \in \mathbb{R}^n} \lvert\nabla (p(\bm{f_\mathrm{*}} \mid \bm{f}) p(\bm{y} \mid \bm{f}))\rvert.   \nonumber 
\end{align} 
Then we have 
\begin{align}
  \bigl\lVert\mathbb{E}_{p(\bm{f_\mathrm{*}} \mid \bm{y})} [\bm{f_\mathrm{*}}] - \mathbb{E}_{p(\bm{f_\mathrm{*}} \mid \bm{y}, \bm{\omega})} [\bm{f_\mathrm{*}}]\bigr\rVert
  \leq (1 + a_1) b_1 \epsilon(\delta_1, \delta_2) + (a_2 b_1 + b_2) \delta_1,    \nonumber  
\end{align} 
where $b_1 \equiv \int_{\mathbb{R}^{n_\mathrm{*}}} \lVert\bm{f_\mathrm{*}}\rVert B_1(\bm{f_\mathrm{*}}) d\bm{f_\mathrm{*}}$ and $b_2 \equiv \int_{\mathbb{R}^{n_\mathrm{*}}} \lVert\bm{f_\mathrm{*}}\rVert B_2(\bm{f_\mathrm{*}}) d\bm{f_\mathrm{*}}$. 
\end{lemma}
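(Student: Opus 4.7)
The strategy is to run the argument of \cref{bound1} with the integrand $p(\bm{y}\mid\bm{f})$ replaced by the product $p(\bm{f_\mathrm{*}}\mid\bm{f})\, p(\bm{y}\mid\bm{f})$, treating $\bm{f_\mathrm{*}}$ as a parameter, and then to integrate against $\|\bm{f_\mathrm{*}}\|$ at the end. Define
\begin{align*}
  J(\bm{f_\mathrm{*}}) &\equiv \int_{\mathbb{R}^n} p(\bm{f_\mathrm{*}}\mid\bm{f})\,p(\bm{y}\mid\bm{f})\,p(\bm{f})\,d\bm{f}, \\
  \tilde{J}(\bm{f_\mathrm{*}}) &\equiv \int_{\mathbb{R}^m} p(\bm{f_\mathrm{*}}\mid\bm{W_\mathrm{fu}}\bm{u})\,p(\bm{y}\mid\bm{W_\mathrm{fu}}\bm{u})\,p(\bm{u})\,d\bm{u},
\end{align*}
so that $p(\bm{f_\mathrm{*}}\mid\bm{y}) = \exp(-\mathcal{L})\,J(\bm{f_\mathrm{*}})$ and $p(\bm{f_\mathrm{*}}\mid\bm{y},\bm{\omega}) = \exp(-\mathcal{E})\,\tilde{J}(\bm{f_\mathrm{*}})$.

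First I would establish the pointwise estimate $|J(\bm{f_\mathrm{*}}) - \tilde{J}(\bm{f_\mathrm{*}})| \leq \exp(\mathcal{L})\,(B_1(\bm{f_\mathrm{*}})\,\epsilon(\delta_1,\delta_2) + B_2(\bm{f_\mathrm{*}})\,\delta_1)$. The derivation is a verbatim repetition of the calculation inside \cref{bound1}: rewrite $p(\bm{f})$ as $\int p(\bm{f}\mid\bm{u})p(\bm{u})d\bm{u}$, split the $\bm{f}$-integral into $R(\bm{u},\delta_1)$ and its complement, use the gradient bound $\sup_{\bm{f}}\lvert\nabla(p(\bm{f_\mathrm{*}}\mid\bm{f})p(\bm{y}\mid\bm{f}))\rvert = \exp(\mathcal{L})B_2(\bm{f_\mathrm{*}})$ inside the tube to replace the integrand by its value at $\bm{f}=\bm{W_\mathrm{fu}}\bm{u}$, and use $\sup_{\bm{f}}p(\bm{f_\mathrm{*}}\mid\bm{f})p(\bm{y}\mid\bm{f}) = \exp(\mathcal{L})B_1(\bm{f_\mathrm{*}})$ together with \cref{lem_bound} outside the tube. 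Multiplying by $\|\bm{f_\mathrm{*}}\|$ and integrating then yields $\lVert\int \bm{f_\mathrm{*}}(J - \tilde{J})d\bm{f_\mathrm{*}}\rVert \leq \exp(\mathcal{L})(b_1\epsilon + b_2\delta_1)$.

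Next I would write the difference of expectations as
\begin{align*}
\mathbb{E}_{p(\bm{f_\mathrm{*}}\mid\bm{y})}[\bm{f_\mathrm{*}}] - \mathbb{E}_{p(\bm{f_\mathrm{*}}\mid\bm{y},\bm{\omega})}[\bm{f_\mathrm{*}}] = \exp(-\mathcal{L})\!\!\int \!\bm{f_\mathrm{*}}(J - \tilde{J})d\bm{f_\mathrm{*}} + \bigl(\exp(-\mathcal{L}) - \exp(-\mathcal{E})\bigr)\!\!\int\!\bm{f_\mathrm{*}}\tilde{J}d\bm{f_\mathrm{*}}.
\end{align*}
The first term contributes $b_1\epsilon + b_2\delta_1$ by the moment estimate just derived. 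For the second term I would invoke the stronger intermediate inequality $|\exp(\mathcal{L}) - \exp(\mathcal{E})| \leq \exp(\mathcal{L})(a_1\epsilon + a_2\delta_1)$ extracted from the proof of \cref{bound1} (before the final $\log(1 + x) \leq x$ step), which upon dividing by $\exp(\mathcal{L})\exp(\mathcal{E})$ gives $|\exp(-\mathcal{L}) - \exp(-\mathcal{E})| \leq \exp(-\mathcal{E})(a_1\epsilon + a_2\delta_1)$, and couple this with the pointwise dominance $\tilde{J}(\bm{f_\mathrm{*}}) \leq \sup_{\bm{u}} p(\bm{f_\mathrm{*}}\mid\bm{W_\mathrm{fu}}\bm{u})p(\bm{y}\mid\bm{W_\mathrm{fu}}\bm{u}) \leq \exp(\mathcal{L})B_1(\bm{f_\mathrm{*}})$ (since $\sup_{\bm{u}} \leq \sup_{\bm{f}}$) to conclude $\lVert\int \bm{f_\mathrm{*}}\tilde{J}d\bm{f_\mathrm{*}}\rVert \leq \exp(\mathcal{L})b_1$. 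The second term is then at most $a_1 b_1\epsilon + a_2 b_1\delta_1$, and summing both contributions recovers $(1+a_1)b_1\epsilon + (a_2 b_1 + b_2)\delta_1$.

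\textbf{Main obstacle.} The delicate step is pairing $|\exp(-\mathcal{L}) - \exp(-\mathcal{E})|$ with $\lVert\int \bm{f_\mathrm{*}}\tilde{J}d\bm{f_\mathrm{*}}\rVert$: a naive combination using the logarithmic bound $|\mathcal{L}-\mathcal{E}| \leq a_1\epsilon + a_2\delta_1$ introduces a spurious ratio $\exp(\mathcal{L}-\mathcal{E})$ that must be cancelled. Using instead the non-logarithmic intermediate inequality from the proof of \cref{bound1} and the $\exp(\mathcal{L})$-scaled bound on $\tilde{J}$ ensures the two $\mathcal{E}/\mathcal{L}$ exponentials cancel cleanly. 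The Step 1 extension is otherwise routine, since the standing assumption that likelihood functions and their gradients are differentiable and bounded covers the product $p(\bm{f_\mathrm{*}}\mid\bm{f})p(\bm{y}\mid\bm{f})$ without modification.
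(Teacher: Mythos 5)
Your overall strategy coincides with the paper's: both proofs rerun the argument of \cref{bound1} with $p(\bm{y}\mid\bm{f})$ replaced by $p(\bm{f_\mathrm{*}}\mid\bm{f})\,p(\bm{y}\mid\bm{f})$, split the $\bm{f}$-integral into $R(\bm{u},\delta_1)$ and its complement, use the gradient bound inside the tube and \cref{lem_bound} outside, account for the $\exp(-\mathcal{L})$ versus $\exp(-\mathcal{E})$ normalization mismatch via the intermediate inequality of \cref{bound1}, and only then integrate against $\lVert\bm{f_\mathrm{*}}\rVert$. The only organizational difference is that the paper first establishes the pointwise bound $|p(\bm{f_\mathrm{*}}\mid\bm{y})-p(\bm{f_\mathrm{*}}\mid\bm{y},\bm{\omega})|\le(1+a_1)B_1(\bm{f_\mathrm{*}})\epsilon(\delta_1,\delta_2)+(a_2B_1(\bm{f_\mathrm{*}})+B_2(\bm{f_\mathrm{*}}))\delta_1$ and integrates it, whereas you decompose the difference of expectations into an integrand-difference term and a normalization term; these are equivalent reorganizations of the same estimate.

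The one step that does not go through as written is exactly the one you flag as delicate. Pairing $|\exp(-\mathcal{L})-\exp(-\mathcal{E})|\le\exp(-\mathcal{E})(a_1\epsilon+a_2\delta_1)$ with $\lVert\int\bm{f_\mathrm{*}}\tilde{J}\,d\bm{f_\mathrm{*}}\rVert\le\exp(\mathcal{L})\,b_1$ produces $\exp(\mathcal{L}-\mathcal{E})\,b_1(a_1\epsilon+a_2\delta_1)$: the two exponentials do not cancel, and $\exp(\mathcal{L}-\mathcal{E})$ is not bounded by $1$ since $\mathcal{E}$ may be smaller than $\mathcal{L}$. The pairing the paper effectively uses avoids this: write the normalization term as $\lvert 1-\exp(\mathcal{L}-\mathcal{E})\rvert\cdot\exp(-\mathcal{L})\int\lVert\bm{f_\mathrm{*}}\rVert\tilde{J}\,d\bm{f_\mathrm{*}}$, bound the second factor by $b_1$ using $\tilde{J}(\bm{f_\mathrm{*}})\le\exp(\mathcal{L})B_1(\bm{f_\mathrm{*}})$ so that the $\exp(\mathcal{L})$ factors genuinely cancel, and bound the first factor by $\exp(|\mathcal{L}-\mathcal{E}|)-1\le a_1\epsilon+a_2\delta_1$ via the logarithmic form $|\mathcal{L}-\mathcal{E}|\le\log(1+a_1\epsilon(\delta_1,\delta_2)+a_2\delta_1)$ recorded at the end of the proof of \cref{bound1}. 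With that substitution your argument recovers the stated constants; everything else in your plan is sound.
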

\begin{proof}
The following holds: 
\begin{align}
  |p(\bm{f_\mathrm{*}} \mid \bm{f}) p(\bm{y} \mid \bm{f}) - p(\bm{f_\mathrm{*}} \mid \bm{W_\mathrm{fu}} \bm{u}) p(\bm{y} \mid \bm{W_\mathrm{fu}} \bm{u})|     
  &\leq \sup_{\bm{f^\prime} \in R(\bm{u}, \delta_1)} \sup_{\bm{v} \in \mathbb{R}^n; |\bm{v}| = 1}| \nabla_{\bm{v}} (p(\bm{f_\mathrm{*}} \mid \bm{f^\prime}) p(\bm{y} \mid \bm{f^\prime})) | \delta_1& \nonumber \\ 
  &\leq \sup_{\bm{f^\prime} \in \mathbb{R}^n} \sup_{\bm{v} \in \mathbb{R}^n; |\bm{v}| = 1}  |\nabla_{\bm{v}} (p(\bm{f_\mathrm{*}} \mid \bm{f^\prime}) p(\bm{y} \mid \bm{f^\prime})) | \delta_1& \nonumber \\ 
  &= \exp(\mathcal{L}) B_2 (\bm{f_\mathrm{*}}) \delta_1&  \nonumber 
\end{align} 
for any $\bm{f} \in R(\bm{u}, \delta_1)$, $\bm{u} \in \mathbb{R}^m$, and $\bm{f_\mathrm{*}} \in \mathbb{R}^{n_\mathrm{*}}$. 
Here the posterior distribution $p(\bm{f_\mathrm{*}} \mid \bm{y})$ is divided as 
\begin{align}
\exp(- \mathcal{L}) \Bigl(\int_{\mathbb{R}^m} \int_{R(\bm{u}, \delta_1)} p(\bm{f_\mathrm{*}} \mid \bm{f}) p(\bm{y} \mid \bm{f}) p(\bm{f} \mid \bm{u}) p(\bm{u}) d\bm{f} d\bm{u} 
+ \int_{\mathbb{R}^m} \int_{\mathbb{R}^n \setminus R(\bm{u}, \delta_1)} p(\bm{f_\mathrm{*}} \mid \bm{f}) p(\bm{y} \mid \bm{f}) p(\bm{f} \mid \bm{u}) p(\bm{u}) d\bm{f} d\bm{u}\Bigr). \nonumber 
\end{align} 
The upper bound of the first term is 
\begin{align}
&\exp(- \mathcal{L}) \int_{\mathbb{R}^m} \int_{R(\bm{u}, \delta_1)} p(\bm{f_\mathrm{*}} \mid \bm{f}) p(\bm{y} \mid \bm{f}) p(\bm{f} \mid \bm{u}) p(\bm{u}) d\bm{f} d\bm{u}&  \nonumber \\  
&\leq \exp(- \mathcal{L}) \int_{\mathbb{R}^m} \int_{R(\bm{u}, \delta_1)} p(\bm{f_\mathrm{*}} \mid \bm{W_\mathrm{fu}} \bm{u}) p(\bm{y} \mid \bm{W_\mathrm{fu}} \bm{u}) p(\bm{f} \mid \bm{u}) p(\bm{u}) d\bm{f} d\bm{u}&   \nonumber \\ 
&~~~~+ \exp(- \mathcal{L}) \int_{\mathbb{R}^m} \int_{R(\bm{u}, \delta_1)} |p(\bm{f_\mathrm{*}} \mid \bm{f}) p(\bm{y} \mid \bm{f}) - p(\bm{f_\mathrm{*}} \mid \bm{W_\mathrm{fu}} \bm{u}) p(\bm{y} \mid \bm{W_\mathrm{fu}} \bm{u})| p(\bm{f} \mid \bm{u}) p(\bm{u}) d\bm{f} d\bm{u}&   \nonumber \\ 
&\leq \exp(- \mathcal{L}) \int_{\mathbb{R}^m} \int_{R(\bm{u}, \delta_1)} (p(\bm{f_\mathrm{*}} \mid \bm{W_\mathrm{fu}} \bm{u}) p(\bm{y} \mid \bm{W_\mathrm{fu}} \bm{u}) + \exp(\mathcal{L}) B_2(\bm{f_\mathrm{*}}) \delta_1) p(\bm{f} \mid \bm{u}) p(\bm{u}) d\bm{f} d\bm{u}&   \nonumber \\ 
&\leq \exp(- \mathcal{L} + \mathcal{E}) p(\bm{f_\mathrm{*}} \mid \bm{y}, \bm{\omega}) + B_2(\bm{f_\mathrm{*}}) \delta_1&  \nonumber \\    
&\leq p(\bm{f_\mathrm{*}} \mid \bm{y}, \bm{\omega}) + B_1(\bm{f_\mathrm{*}}) (\exp(|\mathcal{L} - \mathcal{E}|) - 1) + B_2(\bm{f_\mathrm{*}}) \delta_1.&  \nonumber   
\end{align} 
Using \cref{lem_bound}, the upper bound of the second term is 
\begin{align}
  \exp(- \mathcal{L}) \int_{\mathbb{R}^m} \int_{\mathbb{R}^n \setminus R(\bm{u}, \delta_1)} p(\bm{f_\mathrm{*}} \mid \bm{f}) p(\bm{y} \mid \bm{f}) p(\bm{f} \mid \bm{u}) p(\bm{u}) d\bm{f} d\bm{u} \leq B_1(\bm{f_\mathrm{*}}) \epsilon(\delta_1, \delta_2). \nonumber  
\end{align} 
From \cref{lem_bound}, the lower bound of the first term is 
\begin{align}
&\exp(- \mathcal{L}) \int_{\mathbb{R}^m} \int_{R(\bm{u}, \delta_1)} p(\bm{f_\mathrm{*}} \mid \bm{f}) p(\bm{y} \mid \bm{f}) p(\bm{f} \mid \bm{u}) p(\bm{u}) d\bm{f} d\bm{u}&  \nonumber \\  
&\geq \exp(- \mathcal{L}) \int_{\mathbb{R}^m} \int_{R(\bm{u}, \delta_1)} p(\bm{f_\mathrm{*}} \mid \bm{W_\mathrm{fu}} \bm{u}) p(\bm{y} \mid \bm{W_\mathrm{fu}} \bm{u}) p(\bm{f} \mid \bm{u}) p(\bm{u}) d\bm{f} d\bm{u}&   \nonumber \\ 
&~~~~- \exp(- \mathcal{L}) \int_{\mathbb{R}^m} \int_{R(\bm{u}, \delta_1)} |p(\bm{f_\mathrm{*}} \mid \bm{f}) p(\bm{y} \mid \bm{f}) - p(\bm{f_\mathrm{*}} \mid \bm{W_\mathrm{fu}} \bm{u}) p(\bm{y} \mid \bm{W_\mathrm{fu}} \bm{u})| p(\bm{f} \mid \bm{u}) p(\bm{u}) d\bm{f} d\bm{u}&   \nonumber \\ 
&\geq \exp(- \mathcal{L}) \int_{\mathbb{R}^m} \int_{R(\bm{u}, \delta_1)} p(\bm{f_\mathrm{*}} \mid \bm{W_\mathrm{fu}} \bm{u}) p(\bm{y} \mid \bm{W_\mathrm{fu}} \bm{u}) p(\bm{f} \mid \bm{u}) p(\bm{u}) d\bm{f} d\bm{u}&   \nonumber \\  
&~~~~- \int_{\mathbb{R}^m} \int_{R(\bm{u}, \delta_1)} B_2(\bm{f_\mathrm{*}}) \delta_1 p(\bm{f} \mid \bm{u}) p(\bm{u}) d\bm{f} d\bm{u}&   \nonumber \\ 
&\geq \exp(- \mathcal{L} + \mathcal{E}) p(\bm{f_\mathrm{*}} \mid \bm{y}, \bm{\omega}) - B_1(\bm{f_\mathrm{*}}) \epsilon(\delta_1, \delta_2) - B_2(\bm{f_\mathrm{*}}) \delta_1& \nonumber \\    
&\geq p(\bm{f_\mathrm{*}} \mid \bm{y}, \bm{\omega}) - B_1(\bm{f_\mathrm{*}}) (\exp(|\mathcal{L} - \mathcal{E}|) - 1) - B_1(\bm{f_\mathrm{*}}) \epsilon(\delta_1, \delta_2) - B_2(\bm{f_\mathrm{*}}) \delta_1.& \nonumber   
\end{align} 
The lower bound of the second term is 
\begin{align}
  \exp(- \mathcal{L}) \int_{\mathbb{R}^m} \int_{\mathbb{R}^n \setminus R(\bm{u}, \delta_1)} p(\bm{f_\mathrm{*}} \mid \bm{f}) p(\bm{y} \mid \bm{f}) p(\bm{f} \mid \bm{u}) p(\bm{u}) d\bm{f} d\bm{u} \geq 0.    \nonumber  
\end{align} 
Hence, the following holds: 
\begin{align}
|p(\bm{f_\mathrm{*}} \mid \bm{y}) - p(\bm{f_\mathrm{*}} \mid \bm{y}, \bm{\omega})| 
&\leq B_1(\bm{f_\mathrm{*}}) \epsilon(\delta_1, \delta_2) + B_1(\bm{f_\mathrm{*}}) (\exp(|\mathcal{L} - \mathcal{E}|) - 1) + B_2(\bm{f_\mathrm{*}}) \delta_1&   \nonumber  \\ 
&\leq B_1(\bm{f_\mathrm{*}}) \epsilon(\delta_1, \delta_2) + B_1(\bm{f_\mathrm{*}}) (a_1 \epsilon(\delta_1, \delta_2) + a_2 \delta_1) + B_2(\bm{f_\mathrm{*}}) \delta_1&   \nonumber  \\ 
&= (1 + a_1) B_1(\bm{f_\mathrm{*}}) \epsilon(\delta_1, \delta_2) + (a_2 B_1(\bm{f_\mathrm{*}}) + B_2(\bm{f_\mathrm{*}})) \delta_1&   \nonumber
\end{align} 
for any $\bm{f_\mathrm{*}} \in \mathbb{R}^{n_\mathrm{*}}$. 
Using this result, the following holds: 
\begin{align}
  \bigl\lVert\mathbb{E}_{p(\bm{f_\mathrm{*}} \mid \bm{y})} [\bm{f_\mathrm{*}}] - \mathbb{E}_{p(\bm{f_\mathrm{*}} \mid \bm{y}, \bm{\omega})} [\bm{f_\mathrm{*}}]\bigr\rVert
  &\leq \int_{\mathbb{R}^{n_\mathrm{*}}} \lVert\bm{f_\mathrm{*}}\rVert |p(\bm{f_\mathrm{*}} \mid \bm{y}) - p(\bm{f_\mathrm{*}} \mid \bm{y}, \bm{\omega})| d\bm{f_\mathrm{*}}&   \nonumber  \\   
  &\leq \int_{\mathbb{R}^{n_\mathrm{*}}} \lVert\bm{f_\mathrm{*}}\rVert ((1 + a_1) B_1(\bm{f_\mathrm{*}}) \epsilon(\delta_1, \delta_2) + (a_2 B_1(\bm{f_\mathrm{*}}) + B_2(\bm{f_\mathrm{*}})) \delta_1) d\bm{f_\mathrm{*}}&   \nonumber  \\ 
  &= (1 + a_1) b_1 \epsilon(\delta_1, \delta_2) + (a_2 b_1 + b_2) \delta_1.&    \nonumber  
\end{align} 
\end{proof}
The upper bounds in \cref{bound1} and \cref{bound2} contains $\delta_1$ and $\delta_2$, which can be chosen arbitrarily.  
Let $\delta_1$ and $\delta_2$ be 
\begin{align}
  \delta_1 = \sqrt{\xi \lambda_1 \kappa^{-1}} + \sqrt{\xi \lambda_2},~~\delta_2 = \sqrt{\xi \lambda_2},  \nonumber     
\end{align} 
where $\xi \in [\xi_0, \infty)$. 
Using \cref{monotonically}, we have 
\begin{align}
  \delta_1 + \epsilon(\delta_1, \delta_2)  
  &= (\sqrt{\xi \lambda_1 \kappa^{-1}} + \sqrt{\xi \lambda_2}) + \epsilon(\sqrt{\xi \lambda_1 \kappa^{-1}} + \sqrt{\xi \lambda_2}, \sqrt{\xi \lambda_2})&  \nonumber  \\  
  &= (\sqrt{\xi \lambda_1 \kappa^{-1}} + \sqrt{\xi \lambda_2}) + F(m, \xi m) + F(n, \xi n)&  \nonumber  \\  
  &= O(\sqrt{\xi \lambda_1 \kappa^{-1}} + \sqrt{\xi \lambda_2} + F(m, \xi m)).&  \nonumber  
\end{align} 
Therefore, the following holds: 
\begin{align}
  \mathcal{L} - \mathcal{E} = O(\eta), 
  ~~\bigl\lVert\mathbb{E}_{p(\bm{f_\mathrm{*}} \mid \bm{y})} [\bm{f_\mathrm{*}}] - \mathbb{E}_{p(\bm{f_\mathrm{*}} \mid \bm{y}, \bm{\omega})} [\bm{f_\mathrm{*}}]\bigr\rVert = O(\eta). \nonumber     
\end{align}

\subsection{Optimization of $\kappa$ and $\eta$ in \Cref{example_inputs}}  
\label{optimization_toy}  
We used the trust region method to find $\bm{u}$ in $\kappa$. 
To avoid the computation of the inverse of $\bm{K_\mathrm{uu}}$, we represent $\kappa$ as follows: 
\begin{align}
  \kappa = \inf_{\bm{u} \in \mathbb{R}^m; \lvert(\bm{W_\mathrm{fu}} \bm{K_\mathrm{uu}} - \bm{K_\mathrm{fu}}) \bm{u}\rvert = 1} \frac{n}{m} \lvert \bm{K_\mathrm{uu}} \bm{u}\rvert^2. \nonumber 
\end{align}
The initial value of $\bm{u}$ is $\lvert(\bm{W_\mathrm{fu}} \bm{K_\mathrm{uu}} - \bm{K_\mathrm{fu}}) [1, \cdots, 1]\rvert^{-1} [1, \cdots, 1]$. 
We employed the L-BFGS-B algorithm to optimize $\xi$ within $\eta$, initializing it to $\xi_0$. 
These processes were implemented with version 1.7.3 of the SciPy software.

\newpage 
\section{Sample Quasi-Likelihood} 
\subsection{Mariginal Likelihood and Posterior Distribution}
\label{Lap_likelihood_quasi}
The following matrix property is widely recognized. 
\begin{property}
  Let $\bm{A_\mathrm{uu}}$ and $\bm{B_\mathrm{uu}}$ denote $m \times m$ positive definite matrices, 
  The following holds: 
  \begin{align}
    (\bm{A_\mathrm{uu}} + \bm{B_\mathrm{uu}})^{-1} = \bm{A_\mathrm{uu}}^{-1} - \bm{A_\mathrm{uu}}^{-1} (\bm{A_\mathrm{uu}}^{-1} + \bm{B_\mathrm{uu}}^{-1})^{-1} \bm{A_\mathrm{uu}}^{-1}. \nonumber 
  \end{align}
\end{property}
Using this property, the following holds: 
\begin{align}
&\log \mathcal{N} (\bm{\bar{y}}; \bm{u}, \bm{V_\mathrm{uu}}) p(\bm{u})&    \nonumber   \\ 
&= - \frac{m}{2} \log (2 \pi) - \frac{1}{2} \log \lvert \bm{V_\mathrm{uu}}\rvert - \frac{1}{2} (\bm{u} - \bm{\bar{y}})^\top \bm{V_\mathrm{uu}}^{-1} (\bm{u} - \bm{\bar{y}})& \nonumber \\ 
&~~~~- \frac{m}{2} \log (2 \pi) - \frac{1}{2} \log \lvert \bm{K_\mathrm{uu}}\rvert - \frac{1}{2} \bm{u}^\top {\bm{K_\mathrm{uu}}}^{-1}\bm{u}&    \nonumber   \\ 
&= - \frac{m}{2} \log (2 \pi) - \frac{1}{2} \log \lvert \bm{V_\mathrm{uu}}\rvert - \frac{m}{2} \log (2 \pi) - \frac{1}{2} \log \lvert \bm{K_\mathrm{uu}}\rvert&  \nonumber   \\    
&~~~~- \frac{1}{2} \bm{u}^\top (\bm{V_\mathrm{uu}}^{-1} + {\bm{K_\mathrm{uu}}}^{-1}) \bm{u} 
     + \frac{1}{2} \bm{\bar{y}}^\top \bm{V_\mathrm{uu}}^{-1} \bm{u} 
     + \frac{1}{2} \bm{u}^\top \bm{V_\mathrm{uu}}^{-1} \bm{\bar{y}} 
     - \frac{1}{2} \bm{\bar{y}}^\top \bm{V_\mathrm{uu}}^{-1} \bm{\bar{y}} \nonumber   \\ 
&= - \frac{m}{2} \log (2 \pi) - \frac{1}{2} \log \lvert \bm{V_\mathrm{uu}}\rvert - \frac{1}{2} \log \lvert \bm{K_\mathrm{uu}}\rvert - \frac{1}{2} \log \lvert \bm{V_\mathrm{uu}}^{-1} + {\bm{K_\mathrm{uu}}}^{-1} \rvert&  \nonumber   \\  
&~~~~- \frac{1}{2} \bm{\bar{y}}^\top (\bm{V_\mathrm{uu}}^{-1} - \bm{V_\mathrm{uu}}^{-1} (\bm{V_\mathrm{uu}}^{-1} + \bm{K_\mathrm{uu}}^{-1})^{-1} \bm{V_\mathrm{uu}}^{-1}) \bm{\bar{y}}&  \nonumber   \\   
&~~~~- \frac{m}{2} \log (2 \pi) - \frac{1}{2} \log \lvert (\bm{V_\mathrm{uu}}^{-1} + {\bm{K_\mathrm{uu}}}^{-1})^{-1} \rvert&    \nonumber   \\ 
&~~~~- \frac{1}{2} (\bm{u} - (\bm{V_\mathrm{uu}}^{-1} + {\bm{K_\mathrm{uu}}}^{-1})^{-1} {\bm{V_\mathrm{uu}}^{-1}} \bm{\bar{y}})^\top (\bm{V_\mathrm{uu}}^{-1} + {\bm{K_\mathrm{uu}}}^{-1}) (\bm{u} - (\bm{V_\mathrm{uu}}^{-1} + {\bm{K_\mathrm{uu}}}^{-1})^{-1} {\bm{V_\mathrm{uu}}^{-1}} \bm{\bar{y}})& \nonumber   \\  
&= \mathcal{Q} + \log \mathcal{N} (\bm{u}; (\bm{V_\mathrm{uu}}^{-1} + {\bm{K_\mathrm{uu}}}^{-1})^{-1} {\bm{V_\mathrm{uu}}^{-1}} \bm{\bar{y}}, (\bm{V_\mathrm{uu}}^{-1} + {\bm{K_\mathrm{uu}}}^{-1})^{-1}).& \nonumber  
\end{align} 
Here the following property of Gaussian distribution is well known. 
\begin{property}
  \label{Wu}
  Let $\bm{u_0}$ denote $m$ real-valued vector, 
  $\bm{A_\mathrm{uu}}$ denote $m \times m$ positive definite matrix, 
  $\bm{B_\mathrm{*u}}$ denote $n_\mathrm{*} \times m$ real-valued matrix,  
  and $\bm{C_\mathrm{**}}$ denote $n_\mathrm{*} \times n_\mathrm{*}$ positive definite matrix. 
  Then we have 
  \begin{align}
    \int_{\mathbb{R}^m} \mathcal{N} (\bm{f_\mathrm{*}}; \bm{B_\mathrm{*u}} \bm{u}, \bm{C_\mathrm{**}}) \mathcal{N} (\bm{u}; \bm{u_0}, \bm{A_\mathrm{uu}}) d \bm{u}  
    = \mathcal{N} (\bm{f_\mathrm{*}}; \bm{B_\mathrm{*u}} \bm{u_0}, \bm{C_\mathrm{**}} + \bm{B_\mathrm{*u}} \bm{A_\mathrm{uu}} \bm{B_\mathrm{u*}}).   \nonumber       
  \end{align} 
\end{property}
Let the matrices in \cref{Wu} be 
\begin{align}
\bm{u_0} &= (\bm{V_\mathrm{uu}}^{-1} + {\bm{K_\mathrm{uu}}}^{-1})^{-1} \bm{V_\mathrm{uu}}^{-1} \bm{\bar{y}},&  \nonumber \\   
\bm{A_\mathrm{uu}} &= (\bm{V_\mathrm{uu}}^{-1} + {\bm{K_\mathrm{uu}}}^{-1})^{-1},& \nonumber \\ 
\bm{B_\mathrm{*u}} &= \bm{K_\mathrm{*u}} \bm{K_\mathrm{uu}}^{-1},& \nonumber \\ 
\bm{C_\mathrm{**}} &= \bm{K_\mathrm{**}} - \bm{K_\mathrm{*u}} \bm{K_\mathrm{uu}}^{-1} \bm{K_\mathrm{u*}}.& \nonumber 
\end{align} 
Then we have 
\begin{align}
  \bm{B_\mathrm{*u}} \bm{u_0} 
  &= \bm{K_\mathrm{*u}} \bm{K_\mathrm{uu}}^{-1} (\bm{V_\mathrm{uu}}^{-1} + {\bm{K_\mathrm{uu}}}^{-1})^{-1} \bm{V_\mathrm{uu}}^{-1} \bm{\bar{y}}&   \nonumber \\     
  &= \bm{K_\mathrm{*u}} \bm{K_\mathrm{uu}}^{-1} (\bm{V_\mathrm{uu}}^{-1} + \bm{K_\mathrm{uu}}^{-1})^{-1} ((\bm{V_\mathrm{uu}}^{-1} + \bm{K_\mathrm{uu}}^{-1}) - \bm{K_\mathrm{uu}}^{-1}) \bm{\bar{y}}&   \nonumber \\     
  &= \bm{K_\mathrm{*u}} (\bm{K_\mathrm{uu}}^{-1} - \bm{K_\mathrm{uu}}^{-1} (\bm{V_\mathrm{uu}}^{-1} + \bm{K_\mathrm{uu}}^{-1})^{-1} \bm{K_\mathrm{uu}}^{-1}) \bm{\bar{y}}&   \nonumber \\     
  &= \bm{K_\mathrm{*u}} (\bm{K_\mathrm{uu}} + \bm{V_\mathrm{uu}})^{-1} \bm{\bar{y}}&   \nonumber \\  
  &= \bm{\mu_q},&   \nonumber \\     
  \bm{C_\mathrm{**}} + \bm{B_\mathrm{*u}} \bm{A_\mathrm{uu}} \bm{B_\mathrm{u*}} 
  &= \bm{K_\mathrm{**}} - \bm{K_\mathrm{*u}} \bm{K_\mathrm{uu}}^{-1} \bm{K_\mathrm{u*}} + \bm{K_\mathrm{*u}} \bm{K_\mathrm{uu}}^{-1} (\bm{V_\mathrm{uu}}^{-1} + {\bm{K_\mathrm{uu}}}^{-1})^{-1} \bm{K_\mathrm{uu}}^{-1} \bm{K_\mathrm{u*}}&   \nonumber \\     
  &= \bm{K_\mathrm{**}} - \bm{K_\mathrm{*u}} (\bm{K_\mathrm{uu}}^{-1} - \bm{K_\mathrm{uu}}^{-1} (\bm{V_\mathrm{uu}}^{-1} + {\bm{K_\mathrm{uu}}}^{-1})^{-1} \bm{K_\mathrm{uu}}^{-1}) \bm{K_\mathrm{u*}}.&   \nonumber \\ 
  &= \bm{K_\mathrm{**}} - \bm{K_\mathrm{*u}} (\bm{K_\mathrm{uu}} + \bm{V_\mathrm{uu}})^{-1} \bm{K_\mathrm{u*}}&   \nonumber \\  
  &= \bm{\Sigma_q}.&   \nonumber
\end{align} 
Consequently, the following holds: 
\begin{align}
\int_{\mathbb{R}^m} q(\bm{f_\mathrm{*}} \mid \bm{u}) q(\bm{u} \mid \bm{\bar{y}}) d \bm{u} = \mathcal{N} (\bm{f_\mathrm{*}}; \bm{\mu_q}, \bm{\Sigma_q}),  \nonumber 
\end{align} 
where 
\begin{align}
  q(\bm{u} \mid \bm{\bar{y}}) \equiv \mathcal{N} (\bm{u}; (\bm{V_\mathrm{uu}}^{-1} + {\bm{K_\mathrm{uu}}}^{-1})^{-1} \bm{V_\mathrm{uu}}^{-1} \bm{\bar{y}}, (\bm{V_\mathrm{uu}}^{-1} + {\bm{K_\mathrm{uu}}}^{-1})^{-1}), \nonumber \\  
  q(\bm{f_\mathrm{*}} \mid \bm{u}) \equiv \mathcal{N} (\bm{f_\mathrm{*}}; \bm{K_\mathrm{*u}} \bm{K_\mathrm{uu}}^{-1} \bm{u}, \bm{K_\mathrm{**}} - \bm{K_\mathrm{*u}} \bm{K_\mathrm{uu}}^{-1} \bm{K_\mathrm{u*}}).   \nonumber 
\end{align}

\subsection{Proof of \Cref{Lap_theorem}}
\label{Lap}
Considering the second order Taler series of $\log p(\bm{y} \mid \bm{W_\mathrm{fu}} \bm{u})$ around $\bm{\bar{y}}$, the following holds: 
\begin{align}
&\log p(\bm{y} \mid \bm{W_\mathrm{fu}} \bm{u}) p(\bm{u})&    \nonumber   \\ 
&= \log p(\bm{y} \mid \bm{W_\mathrm{fu}} \bm{u}) |_{\bm{u} = \bm{\bar{y}}} - \frac{1}{2} (\bm{u} - \bm{\bar{y}})^\top \bm{V_\mathrm{uu}}^{-1} (\bm{u} - \bm{\bar{y}}) + o_p (m) 
  - \frac{m}{2} \log (2 \pi) - \frac{1}{2} \log \lvert \bm{K_\mathrm{uu}}\rvert - \frac{1}{2} \bm{u}^\top {\bm{K_\mathrm{uu}}}^{-1}\bm{u}&    \nonumber   \\ 
&= \log p(\bm{y} \mid \bm{W_\mathrm{fu}} \bm{u}) |_{\bm{u} = \bm{\bar{y}}} + \frac{m}{2} \log (2 \pi) + \frac{1}{2} \log \lvert \bm{V_\mathrm{uu}}\rvert&    \nonumber   \\  
&~~~~+ \mathcal{Q} + \log \mathcal{N} (\bm{u}; (\bm{V_\mathrm{uu}}^{-1} + {\bm{K_\mathrm{uu}}}^{-1})^{-1} {\bm{V_\mathrm{uu}}^{-1}} \bm{\bar{y}}, (\bm{V_\mathrm{uu}}^{-1} + {\bm{K_\mathrm{uu}}}^{-1})^{-1})  + o_p (m).&    \nonumber  
\end{align} 
Therefore, the following holds: 
\begin{align}
  \mathcal{E} - \mathcal{Q} = \log p(\bm{y} \mid \bm{W_\mathrm{fu}} \bm{u}) |_{\bm{u} = \bm{\bar{y}}} + \frac{m}{2} \log (2 \pi) + \frac{1}{2} \log \lvert \bm{V_\mathrm{uu}}\rvert + o_p (m).  \nonumber 
\end{align} 
Additionally, the following holds: 
\begin{align}
  \exp(-\mathcal{E}) p(\bm{y} \mid \bm{W_\mathrm{fu}} \bm{u}) p(\bm{u}) \xrightarrow[]{\mathrm{d}} \mathcal{N} (\bm{u}; (\bm{V_\mathrm{uu}}^{-1} + {\bm{K_\mathrm{uu}}}^{-1})^{-1} \bm{V_\mathrm{uu}}^{-1} \bm{\bar{y}}, (\bm{V_\mathrm{uu}}^{-1} + {\bm{K_\mathrm{uu}}}^{-1})^{-1}). \nonumber 
\end{align} 
Let the matrices in \cref{Wu} be 
\begin{align}
\bm{u_0} &= (\bm{V_\mathrm{uu}}^{-1} + {\bm{K_\mathrm{uu}}}^{-1})^{-1} \bm{V_\mathrm{uu}}^{-1} \bm{\bar{y}},& \nonumber \\    
\bm{A_\mathrm{uu}} &= (\bm{V_\mathrm{uu}}^{-1} + {\bm{K_\mathrm{uu}}}^{-1})^{-1},& \nonumber \\ 
\bm{B_\mathrm{*u}} &= \bm{K_\mathrm{*f}} \bm{K_\mathrm{ff}}^{-1} \bm{W_\mathrm{fu}},& \nonumber \\ 
\bm{C_\mathrm{**}} &= \bm{K_\mathrm{**}} - \bm{K_\mathrm{*f}} \bm{K_\mathrm{ff}}^{-1} \bm{K_\mathrm{f*}}.& \nonumber 
\end{align}
Then we have 
\begin{align}
  \bm{B_\mathrm{*u}} \bm{u_0} 
  &= \bm{K_\mathrm{*f}} \bm{K_\mathrm{ff}}^{-1} \bm{W_\mathrm{fu}} (\bm{V_\mathrm{uu}}^{-1} + {\bm{K_\mathrm{uu}}}^{-1})^{-1} \bm{V_\mathrm{uu}}^{-1} \bm{\bar{y}} = \bm{\mu_p},&   \nonumber \\     
  \bm{C_\mathrm{**}} + \bm{B_\mathrm{*u}} \bm{A_\mathrm{uu}} \bm{B_\mathrm{u*}}  
  &= \bm{K_\mathrm{**}} - \bm{K_\mathrm{*f}} \bm{K_\mathrm{ff}}^{-1} \bm{K_\mathrm{f*}} + \bm{K_\mathrm{*f}} \bm{K_\mathrm{ff}}^{-1} \bm{W_\mathrm{fu}} (\bm{V_\mathrm{uu}}^{-1} + {\bm{K_\mathrm{uu}}}^{-1})^{-1} \bm{W_\mathrm{uf}} \bm{K_\mathrm{ff}}^{-1} \bm{K_\mathrm{f*}} = \bm{\Sigma_p}.&   \nonumber 
\end{align} 
Consequently, the following holds: 
\begin{align} 
  p(\bm{f_\mathrm{*}} \mid \bm{y}, \bm{\omega}) \xrightarrow[]{\mathrm{d}} \mathcal{N} (\bm{f_\mathrm{*}}; \bm{\mu_p}, \bm{\Sigma_p}).  \nonumber 
\end{align}

\subsection{Proof of \Cref{approx_matrix}}  
\label{proof_approx_matrix}
The following holds: 
\begin{align} 
  \bm{K_\mathrm{ff}} \bm{K_\mathrm{ff}}^{-1} \bm{W_\mathrm{fu}} = \bm{W_\mathrm{fu}} 
  &\Rightarrow (\bm{W_\mathrm{fu}} \bm{K_\mathrm{uu}} \bm{W_\mathrm{uf}} + O(\beta)) \bm{K_\mathrm{ff}}^{-1} \bm{W_\mathrm{fu}} = \bm{W_\mathrm{fu}}& \nonumber  \\    
  &\Rightarrow (\bm{K_\mathrm{uu}} \bm{W_\mathrm{uf}} + O(\beta)) \bm{K_\mathrm{ff}}^{-1} \bm{W_\mathrm{fu}} = \diag(1, \cdots, 1)& \nonumber  \\   
  &\Rightarrow (\bm{W_\mathrm{uf}} + O(\beta)) \bm{K_\mathrm{ff}}^{-1} \bm{W_\mathrm{fu}} = \bm{K_\mathrm{uu}}^{-1}& \nonumber  \\  
  &\Rightarrow \bm{W_\mathrm{uf}} \bm{K_\mathrm{ff}}^{-1} \bm{W_\mathrm{fu}} + O(\beta) \bm{K_\mathrm{ff}}^{-1} \bm{W_\mathrm{fu}} = \bm{K_\mathrm{uu}}^{-1}& \nonumber  \\  
  &\Rightarrow \bm{W_\mathrm{uf}} \bm{K_\mathrm{ff}}^{-1} \bm{W_\mathrm{fu}} = \bm{K_\mathrm{uu}}^{-1} + O(\beta).& \nonumber 
\end{align} 
Consequently, the following holds: 
\begin{align} 
  \bm{\mu_p} 
  &= \bm{K_\mathrm{*f}} \bm{K_\mathrm{ff}}^{-1} \bm{W_\mathrm{fu}} (\bm{V_\mathrm{uu}}^{-1} + {\bm{K_\mathrm{uu}}}^{-1})^{-1} \bm{V_\mathrm{uu}}^{-1} \bm{\bar{y}}& \nonumber  \\ 
  &= (\bm{K_\mathrm{*u}} \bm{W_\mathrm{uf}} + O(\beta)) \bm{K_\mathrm{ff}}^{-1} \bm{W_\mathrm{fu}} (\bm{V_\mathrm{uu}}^{-1} + {\bm{K_\mathrm{uu}}}^{-1})^{-1} \bm{V_\mathrm{uu}}^{-1} \bm{\bar{y}}& \nonumber  \\ 
  &= (\bm{K_\mathrm{*u}} \bm{W_\mathrm{uf}} \bm{K_\mathrm{ff}}^{-1} \bm{W_\mathrm{fu}} + O(\beta) \bm{K_\mathrm{ff}}^{-1} \bm{W_\mathrm{fu}}) (\bm{V_\mathrm{uu}}^{-1} + {\bm{K_\mathrm{uu}}}^{-1})^{-1} \bm{V_\mathrm{uu}}^{-1} \bm{\bar{y}}& \nonumber  \\ 
  &= (\bm{K_\mathrm{*u}} \bm{K_\mathrm{uu}}^{-1} + O(\beta)) (\bm{V_\mathrm{uu}}^{-1} + {\bm{K_\mathrm{uu}}}^{-1})^{-1} \bm{V_\mathrm{uu}}^{-1} \bm{\bar{y}}& \nonumber  \\ 
  &= \bm{K_\mathrm{*u}} \bm{K_\mathrm{uu}}^{-1} (\bm{V_\mathrm{uu}}^{-1} + {\bm{K_\mathrm{uu}}}^{-1})^{-1} \bm{V_\mathrm{uu}}^{-1} \bm{\bar{y}} + O(\beta)& \nonumber  \\ 
  &= \bm{\mu_q} + O(\beta).& \nonumber 
\end{align} 
Similarly, the following holds: 
\begin{align} 
  \bm{\Sigma_p} 
  &= \bm{K_\mathrm{**}} - \bm{K_\mathrm{*f}} \bm{K_\mathrm{ff}}^{-1} \bm{K_\mathrm{f*}} + \bm{K_\mathrm{*f}} \bm{K_\mathrm{ff}}^{-1} \bm{W_\mathrm{fu}} (\bm{V_\mathrm{uu}}^{-1} + {\bm{K_\mathrm{uu}}}^{-1})^{-1} \bm{W_\mathrm{uf}} \bm{K_\mathrm{ff}}^{-1} \bm{K_\mathrm{f*}}& \nonumber  \\  
  &= \bm{K_\mathrm{**}} - (\bm{K_\mathrm{*u}} \bm{W_\mathrm{uf}} + O(\beta)) \bm{K_\mathrm{ff}}^{-1} (\bm{W_\mathrm{fu}} \bm{K_\mathrm{u*}} + O(\beta))& \nonumber  \\  
  &~~~~+ (\bm{K_\mathrm{*u}} \bm{W_\mathrm{uf}} + O(\beta)) \bm{K_\mathrm{ff}}^{-1} \bm{W_\mathrm{fu}} (\bm{V_\mathrm{uu}}^{-1} + {\bm{K_\mathrm{uu}}}^{-1})^{-1} \bm{W_\mathrm{uf}} \bm{K_\mathrm{ff}}^{-1} (\bm{W_\mathrm{fu}} \bm{K_\mathrm{u*}} + O(\beta))& \nonumber  \\  
  &= \bm{K_\mathrm{**}} - (\bm{K_\mathrm{*u}} \bm{W_\mathrm{uf}} \bm{K_\mathrm{ff}}^{-1} \bm{W_\mathrm{fu}} \bm{K_\mathrm{u*}} + O(\beta) \bm{K_\mathrm{ff}}^{-1} \bm{W_\mathrm{fu}} \bm{K_\mathrm{u*}} + \bm{K_\mathrm{*u}} \bm{W_\mathrm{uf}} \bm{K_\mathrm{ff}}^{-1} O(\beta) + O(\beta^2))& \nonumber  \\  
  &~~~~+ (\bm{K_\mathrm{*u}} \bm{W_\mathrm{uf}} \bm{K_\mathrm{ff}}^{-1} \bm{W_\mathrm{fu}} + O(\beta) \bm{K_\mathrm{ff}}^{-1} \bm{W_\mathrm{fu}}) (\bm{V_\mathrm{uu}}^{-1} + {\bm{K_\mathrm{uu}}}^{-1})^{-1} (\bm{W_\mathrm{uf}} \bm{K_\mathrm{ff}}^{-1} \bm{W_\mathrm{fu}} \bm{K_\mathrm{u*}} + \bm{W_\mathrm{uf}} \bm{K_\mathrm{ff}}^{-1} O(\beta))& \nonumber  \\  
  &= \bm{K_\mathrm{**}} - (\bm{K_\mathrm{*u}} \bm{K_\mathrm{uu}}^{-1} \bm{K_\mathrm{u*}} + O(\beta))& \nonumber  \\    
  &~~~~+ (\bm{K_\mathrm{*u}} \bm{K_\mathrm{uu}}^{-1} + O(\beta)) (\bm{V_\mathrm{uu}}^{-1} + {\bm{K_\mathrm{uu}}}^{-1})^{-1} (\bm{K_\mathrm{uu}}^{-1} \bm{K_\mathrm{u*}} + O(\beta))& \nonumber  \\  
  &= \bm{K_\mathrm{**}} - \bm{K_\mathrm{*u}} \bm{K_\mathrm{uu}}^{-1} \bm{K_\mathrm{u*}}  
   + \bm{K_\mathrm{*u}} \bm{K_\mathrm{uu}}^{-1} (\bm{V_\mathrm{uu}}^{-1} + {\bm{K_\mathrm{uu}}}^{-1})^{-1} \bm{K_\mathrm{uu}}^{-1} \bm{K_\mathrm{u*}} + O(\beta + m \beta^2)& \nonumber  \\  
  &= \bm{\Sigma_q} + O(\beta + m \beta^2). \nonumber  
\end{align}

\subsection{Comparison with the Quasi-Likelihood}
\label{QL_Description}
By specifying only the relationship between the mean and the variance, a quasi-likelihood function exhibits certain properties within the framework of generalized linear models. 
This is advantageous for parameter estimation when the distribution of observations is complex. 
Here we consider applying quasi-likelihood to Gaussian process regression. 
Let the quasi-likelihood function $Q : \mathcal{Y}^n \times \mathbb{R}^n \rightarrow \mathbb{R}$ satisfy    
\begin{align} 
  \frac{\partial Q(\bm{y}, \bm{f})}{\partial \bm{f}} = \bm{V_\mathrm{ff}}^{-1} (\bm{y} - \bm{f}), \nonumber 
\end{align}   
where $\bm{V_\mathrm{ff}} \equiv \diag(v(f(\bm{x}_1)), \cdots, v(f(\bm{x}_n)))$. 
For $p(\bm{y} \mid \bm{f}) = \mathcal{N} (\bm{y}; \bm{f}, \bm{V_\mathrm{ff}})$, learning and inference are challenging because $\bm{V_\mathrm{ff}}$ depends on $\bm{f}$.  
The difference from the sample quasi-likelihood function lies in $\bm{V_\mathrm{ff}}$ and $\bm{V_\mathrm{uu}}$. 
Although $\bm{V_\mathrm{ff}}$ contains parameters, $\bm{V_\mathrm{uu}}$ depends on the summary statistics.

\newpage 
\section{Spatial Modeling}

\subsection{Likelihood Function} 
\label{example_likelihood} 

\begin{table*}[h]
    \centering
    \begin{threeparttable}[]
    \begin{tabular}{cccc} 
        \toprule
        Function & $p(y \:|\: f(\boldsymbol{x}))$ & $\frac{\partial \log p(y \:|\: f(\boldsymbol{x}))}{\partial f(\boldsymbol{x})}$ & $\frac{\partial^2 \log p(y \:|\: f(\boldsymbol{x}))}{\partial f(\boldsymbol{x})^2}$    \\ 	
        \midrule 
        Gaussian 
        &  $\frac{1}{\sqrt {2 \pi} \sigma} \exp \bigl(- \frac{(y - f(\boldsymbol{x}))^2}{2 \sigma^2}\bigr)$  
        & $\frac{y - f(\boldsymbol{x})}{\sigma^2}$  
        & $- \frac{1}{\sigma^2}$  \\ 
        Poisson 
        &  $\frac{\exp(- \exp(f(\boldsymbol{x})) + y f(\boldsymbol{x}))}{y !}$  
        & $- \exp(f(\boldsymbol{x})) + y$  
        & $- \exp(f(\boldsymbol{x}))$   \\ 
        Bernoulli 
        &  $\Phi_f^y (1 - \Phi_f)^{1 - y}$   
        & $\frac{\phi_f}{\Phi_f} y + \frac{- \phi_f}{1 - \Phi_f} (1 - y)$ 
        &  $- \frac{f \phi_f \Phi_f + \phi_f^2}{\Phi_f^2} y - \frac{\phi_f^2 - f \phi_f + f \phi_f \Phi_f}{(1 - \Phi_f)^2} (1 - y)$  \\ 
        \bottomrule 
    \end{tabular}
    \end{threeparttable}
    \caption{
    Examples of likelihood functions. 
    Let $\Phi_f \equiv \Phi(f(\boldsymbol{x}))$ and $\phi_f \equiv \frac{\partial \Phi(f(\boldsymbol{x}))}{\partial f(\boldsymbol{x})}$, 
    where $\Phi$ is the cumulative distribution function of standard normal distribution.   
    }
    \label{types}
\end{table*}

\subsection{Prediction Performance} 
\label{prediction_performance} 

\begin{table*}[h]
  \centering
  \begin{threeparttable}[]
  \begin{tabular}{ccccccccc} 
  \toprule
  \multirow{2.5}{*}{Output} & \multirow{2.5}{*}{Likelihood} & \multicolumn{2}{c}{Complete VI} & \multicolumn{2}{c}{Summarized VI} & \multicolumn{2}{c}{Our Approach} \\ 	
  \cmidrule(lr){3-4} \cmidrule(lr){5-6} \cmidrule(lr){7-8} &  & RMSE & Time & RMSE & Time & RMSE & Time \\ 
  \midrule 
  MedInc & Gaussian & $0.943 \pm 0.019$ & $3 \pm 1$ & $0.991 \pm 0.017$ & $3 \pm 1$ & $0.987 \pm 0.008$ & $\bm{0 \pm 0}$ \\ 
  MedInc & Poisson & $0.953 \pm 0.020$ & $18 \pm 7$ & $0.996 \pm 0.015$ & $15 \pm 7$ & $0.988 \pm 0.002$ & $\bm{0 \pm 0}$ \\ 
  \midrule 
  HouseAge & Gaussian & $0.947 \pm 0.041$ & $4 \pm 3$ & $\bm{0.990 \pm 0.009}$ & $4 \pm 2$ & $1.036 \pm 0.036$ & $\bm{0 \pm 0}$ \\ 
  HouseAge & Poisson & $0.971 \pm 0.048$ & $7 \pm 6$ & $\bm{0.998 \pm 0.008}$ & $8 \pm 7$ & $1.066 \pm 0.015$ & $\bm{0 \pm 0}$ \\ 
  \midrule 
  AveRooms & Gaussian & $0.991 \pm 0.016$ & $1 \pm 0$ & $1.000 \pm 0.004$ & $1 \pm 1$ & $\bm{0.990 \pm 0.027}$ & $\bm{0 \pm 0}$ \\ 
  AveRooms & Poisson & $0.994 \pm 0.015$ & $5 \pm 5$ & $0.999 \pm 0.006$ & $7 \pm 7$ & $\bm{0.969 \pm 0.005}$ & $\bm{0 \pm 0}$ \\ 
  \midrule 
  AveBedrms & Gaussian & $0.982 \pm 0.028$ & $3 \pm 3$ & $0.994 \pm 0.009$ & $4 \pm 3$ & $\bm{0.957 \pm 0.012}$ & $\bm{1 \pm 0}$ \\ 
  AveBedrms & Poisson & $0.966 \pm 0.028$ & $13 \pm 10$ & $0.987 \pm 0.011$ & $14 \pm 10$ & $\bm{0.966 \pm 0.012}$ & $\bm{0 \pm 0}$ \\ 
  \midrule 
  Population & Gaussian & $1.188 \pm 0.219$ & $1 \pm 1$ & $1.071 \pm 0.130$ & $4 \pm 3$ & $\bm{1.000 \pm 0.003}$ & $\bm{0 \pm 0}$ \\ 
  Population & Poisson & $0.994 \pm 0.006$ & $14 \pm 10$ & $0.998 \pm 0.003$ & $13 \pm 10$ & $1.000 \pm 0.012$ & $\bm{0 \pm 0}$ \\ 
  \midrule 
  AveOccup & Gaussian & $1.001 \pm 0.004$ & $1 \pm 1$ & $1.001 \pm 0.004$ & $1 \pm 1$ & $1.001 \pm 0.004$ & $\bm{0 \pm 0}$ \\ 
  AveOccup & Poisson & $1.001 \pm 0.004$ & $8 \pm 7$ & $1.001 \pm 0.002$ & $7 \pm 7$ & $1.001 \pm 0.004$ & $\bm{0 \pm 0}$ \\ 
  \midrule 
  MedValue & Gaussian & $0.810 \pm 0.013$ & $4 \pm 1$ & $1.008 \pm 0.030$ & $4 \pm 1$ & $\bm{0.964 \pm 0.015}$ & $\bm{0 \pm 0}$ \\ 
  MedValue & Poisson & $0.806 \pm 0.014$ & $16 \pm 4$ & $1.055 \pm 0.057$ & $16 \pm 5$ & $\bm{0.985 \pm 0.018}$ & $\bm{0 \pm 0}$ \\ 
  \bottomrule 
  \end{tabular}
  \end{threeparttable} 
  \caption{
  Prediction performance of our approach with $0.8 \times 0.8$ grid size.  
  The displays are the same as \cref{performance2}. 
  }
  \label{pp8}
\end{table*}

\end{document}